\documentclass{article}

\usepackage{arxiv}

\usepackage[utf8]{inputenc} 
\usepackage[T1]{fontenc}    


\usepackage{multirow}
\usepackage{multicol}

\usepackage[round]{natbib}

\usepackage[dvipsnames]{xcolor}

\usepackage{tcolorbox}

\usepackage{hyperref}       
\usepackage{url}            
\usepackage{booktabs}       
\usepackage{amsfonts}       
\usepackage{nicefrac}       
\usepackage{amsmath}
\usepackage{amssymb}
\usepackage{microtype}      
\usepackage{xcolor}         

\usepackage{amsthm}
\usepackage{thmtools, thm-restate}
\usepackage{mathtools}
\usepackage{bm}
\usepackage{booktabs}       
\usepackage{enumitem}

\usepackage{wrapfig}
\usepackage{colortbl}
\usepackage{bbm}
\usepackage{optidef}
\allowdisplaybreaks
\usepackage{algcompatible}
\usepackage{float}
\usepackage{microtype}
\usepackage{graphicx}
\usepackage{subfigure}

\hypersetup{
    colorlinks,
    linkcolor={cyan!80!black},
    citecolor={RoyalBlue!80!white},
    urlcolor={RoyalBlue}
}

\usepackage{algorithm,algpseudocode}

\def\E{\mathbb E}

\usepackage{tikz}

\newtheorem{theorem}{Theorem}

\newtheorem{lemma}{Lemma}

\newtheorem{definition}{Definition}

\newtheorem*{theorem*}{Theorem}
\def \E {\mathbb{E}}

\def \R {\mathbb{R}}

\def \P {\mathbb{P}}



\def \cD {\mathcal{D}}

\def \cF {\mathcal{F}}

\def \cO {\mathcal{O}}

\def \cS {\mathcal{S}}

\def \cX {\mathcal{X}}

\def \TPR {\text{TPR}}
\def \FPR {\text{FPR}}

\def \FPRhat {\widehat{\FPR}}

\newcommand{\ldef}{\vcentcolon=}

\def \sign {\text{sign}}



\algnewcommand{\LeftComment}[1]{\Statex \(\triangleright\) #1}

\usepackage{hyperref}


\title{ \textbf{Taming False Positives in Out-of-Distribution Detection \\ with Human Feedback}}

%

\author{ Harit Vishwakarma  \\
	\small{University of Wisconsin-Madison, WI}\\
	\texttt{hvishwakarma@cs.wisc.edu} \\
\And
       {Heguang Lin}   \\
	\small{University of Pennsylvania, PA}\\
	\texttt{hglin@seas.upenn.edu} \\
 \And
	{Ramya Korlakai Vinayak} \\
	\small{University of Wisconsin-Madison, WI}\\
	\texttt{ramya@ece.wisc.edu} \\
}
\date{}

\begin{document}

\maketitle

\begin{abstract}
Robustness to out-of-distribution (OOD) samples is crucial for safely deploying machine learning models in the open world. Recent works have focused on designing scoring functions to quantify OOD uncertainty. Setting appropriate thresholds for these scoring functions for OOD detection is challenging as OOD samples are often unavailable up front. Typically, thresholds are set to achieve a desired true positive rate (TPR), e.g., $95\%$ TPR. However, this can lead to very high false positive rates (FPR), ranging from 60 to 96\%, as observed in the Open-OOD benchmark. In safety-critical real-life applications, e.g., medical diagnosis, controlling the FPR is essential when dealing with various OOD samples dynamically. To address these challenges, we propose a mathematically grounded OOD detection framework that leverages expert feedback to \emph{safely} update the threshold on the fly. We provide theoretical results showing that it is guaranteed to meet the FPR constraint at all times while minimizing the use of human feedback. Another key feature of our framework is that it can work with any scoring function for OOD uncertainty quantification. Empirical evaluation of our system on synthetic and benchmark OOD datasets shows that our method can maintain FPR at most $5\%$ while maximizing TPR \footnote{Appeared in the 27th International Conference on Artificial Intelligence and Statistics (AISTATS 2024).}. 
\end{abstract}

\section{Introduction}
Deploying machine learning (ML) models in the open world makes them subject to out-of-distribution (OOD) inputs: an ML model trained to classify on $K$ classes also encounters points that do not belong to any of the classes in the training data.
Modern ML models, in particular deep neural networks, can fail silently with high confidence on such OOD points \citep{nguyen2015Fooling, amodei2016Safety}. 
Such failures can have serious consequences in high-risk applications, e.g., medical diagnosis and autonomous driving.
Safe deployment of ML models in an open world setting needs mechanisms that ensure robustness to OOD inputs. 
The importance of this problem has led to the development of many methods for OOD detection \citep{liang2017enhancing, lee2018simple, liu2020energy, ming2022cider} 
which aim to produce a \emph{score} that can be used to decide on OOD vs in-distribution (ID) for a given point. 
For a detailed survey of literature in the area of generalized OOD detection, see ~\cite{yang2021generalized}.

ID data is usually plentiful, but we do not get to see different kinds of OOD samples before deployment. Consequently, many works in OOD detection are largely limited to \emph{static settings} where the ID data is used to set a threshold on the scores used for detection ~\citep{liang2017enhancing, liu2020energy,  ming2022cider}. 
In these scenarios, this is usually done by setting a threshold that achieves a certain level of true positive rate (TPR), such as $95\%$. However, this can lead to a very high false positive rate (FPR), e.g., ranging between $60\%$ to $96\%$ as observed in the Open-OOD benchmark  \citep{yang2022openood}. Furthermore, even if the ID data distribution remains the same after deployment, the OOD data could vary, resulting in highly fluctuating FPR. Thus, having a small, fixed amount of OOD data collected a priori to validate the FPR at a given threshold would not help in guaranteeing the desired FPR.

\begin{figure*}[t]
    \centering
    \vspace{-8pt}
    \includegraphics[width=0.85\textwidth]{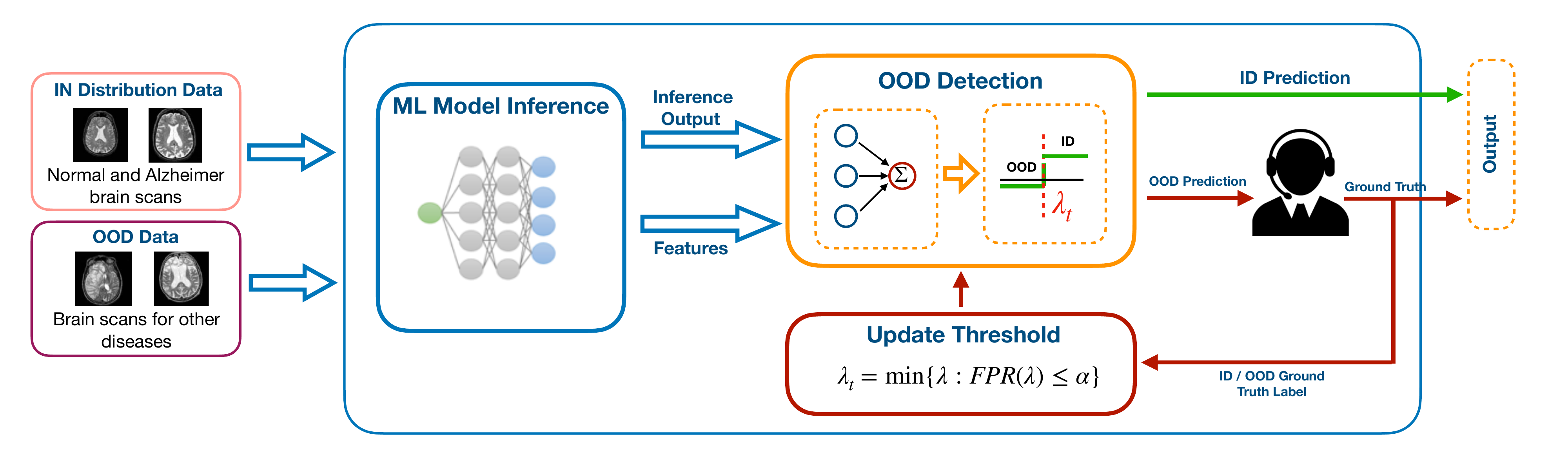}
    \label{fig:OODSystemWithHumanInLoop}
    \vspace{0pt}
    \caption{{Illustration of OOD detection with human-in-the-loop with FPR control. In this example, the ID data is of brain scans of normal people and those with Alzheimer's disease. The OOD data could be anything other than these, e.g. brain scans of patients with some other diseases. }}
    \vspace{-8pt}
\end{figure*}
In safety critical applications, the consequences of classifying an OOD point as ID (false positive) are more catastrophic than classifying an ID point as OOD (false negative). For example, in the medical diagnosis of brain scans, when the system is in doubt it is better to classify a scan as OOD and defer the decision to human experts rather than for the ML model to give it an ID  label i.e., predicting an in-distribution disease or classifying it as a normal scan.  Therefore, safely using ML models in such applications requires systems guaranteeing that the FPR is below a certain acceptable rate, e.g., FPR below $5\%$. 

Furthermore, it is difficult to anticipate or collect the exact type of OOD data that the system can encounter during deployment. Thus it is crucial that such systems adapt to the OOD data while controlling the FPR. Motivated by these challenges we pose the following goal for safe OOD detection.

\begin{tcolorbox}[top=3pt,bottom=3pt,left=3pt,right=3pt]
\textbf{Goal:} Develop a human-in-the-loop out-of-distribution detection system that has guaranteed false positive rate control while minimizing the amount of human intervention needed.
\end{tcolorbox}

\textbf{Our Contributions:} Toward this goal, we make the following contributions:
\begin{enumerate}[itemsep=2pt, topsep=0pt, partopsep=0pt, parsep=0pt,leftmargin=*]
\item \textbf{Human-in-the-loop OOD detection framework.} We propose a novel mathematically grounded framework that incorporates expert human feedback to safely update the OOD detection threshold, ensuring robustness to variations in OOD data encountered after deployment. Our framework can be used with any scoring function.
\item \textbf{Guaranteed FPR control.}  For stationary settings, we provide theoretical guarantees for our framework on controlling FPR at the desired level at all times and also provide a bound on the time taken to reach a given level of optimality. Using the insight from this analysis, we also propose an approach for settings with change points that reduces the duration of violation of FPR control. 
\item \textbf{Empirical validation on benchmark datasets.} We evaluate our framework through extensive simulations both in stationary and distribution shift settings. Through experiments on benchmark OOD datasets in image classification tasks with various scoring functions, we demonstrate the effectiveness of our proposed framework. 
\end{enumerate}
We emphasize that our aim is to develop a framework that can use any scoring function and safely adapt the threshold on the fly to enable the safe deployment of ML models. Therefore, our work is complementary to works that develop scoring functions for OOD detection. 

\section{Human-in-the-Loop OOD Detection}

\label{sec:framework}
We propose a human-in-the-loop OOD detection system (Figure~\ref{fig:OODSystemWithHumanInLoop}) that can work with any ML inference model and scoring function for OOD detection. We begin by describing the problem setting and then discuss each component of our proposed system in detail. See Algorithm \ref{alg:adaptive-ood} for step-by-step pseudocode.

\subsection{Problem Setting}

\textbf{Data stream.} Let $\mathcal{X} \subseteq \mathbb{R}^d$ denote the feature space and $\mathcal{Y}=\{-1,1\}$ denote the label space for OOD detection with ``$1$'' denoting ID and ``$-1$'' denoting OOD. 
Let the distribution of ID and OOD data be denoted by $\cD_{id}$ and $\cD_{ood}$ respectively. Let $x_t \in \mathcal{X}$ denote the sample received at the time $t$. Let $y_t \in \{ -1,1\}$ denote the true label for $x_t$ with respect to ID or OOD classification. We assume $x_t$ are independent and drawn according to the following mixture model,
$x_t \sim (1-\gamma)\ \cD_{id} + \gamma\ \cD_{ood},$
where $\gamma \in (0,1)$ is the fraction of OOD points in the mixture. Note that $\cD_{id}$, $\cD_{ood}$ and $\gamma$ are \emph{unknown}.


\textbf{Scoring function.} 
After receiving data point $x_t$, the system uses a given scoring function, $g:\cX \mapsto \cS \subseteq \R$, to compute a score quantifying the uncertainty of the point being ID or OOD. Our system is designed to work with any scoring function based OOD uncertainty quantification. 
Let $s_t=g(x_t)$ denote the score computed for point $x_t$.
To be consistent across various scoring functions, let a higher score indicate ID and a lower score indicate OOD points. 
After computing score $s_t$ the system needs to decide whether $x_t$ is OOD or ID, it is done using a threshold-based classifier $h_{\lambda}: \R \mapsto \{-1,+1\}$ parameterized with $\lambda \in \Lambda \subseteq \mathbb{R}$:
$h_{\lambda}(g(x)) = \sign(g(x)-\lambda)$. Here we assume  $\Lambda = (\Lambda_{\min}, \Lambda_{\max})$. 

\textbf{FPR and TPR.} The population level $\FPR$ and $\TPR$ for any $\lambda \in \Lambda $ are defined as follows,
\begin{align*}
\FPR(\lambda) &= \E_{x\sim \cD_{ood}}[\mathbf{1} \{ g(x) > \lambda \}] \quad \text{and}\\  \TPR(\lambda) &= \E_{x\sim \cD_{id}}[\mathbf{1} \{ g(x) > \lambda \}].
\end{align*}
Note that the cumulative distribution function (CDF) of $\cD_{ood}$, $\text{CDF}_{\cD_{ood}}(\lambda) =  \E_{x\sim \cD_{ood}}[\mathbf{1} \{ g(x) \leq \lambda \}]$. Therefore, $\FPR(\lambda) = 1 - \text{CDF}_{\cD_{ood}}(\lambda)$. Similarly, $\TPR(\lambda) = 1 - \text{CDF}_{\cD_{id}}(\lambda)$. Since the CDF of any distribution is a monotonic function, both the FPR and TPR are monotonic in $\lambda$.
\begin{wrapfigure}{r}{.46\textwidth}
 \centering
 \vspace{5pt}
\includegraphics[height=4cm, keepaspectratio]{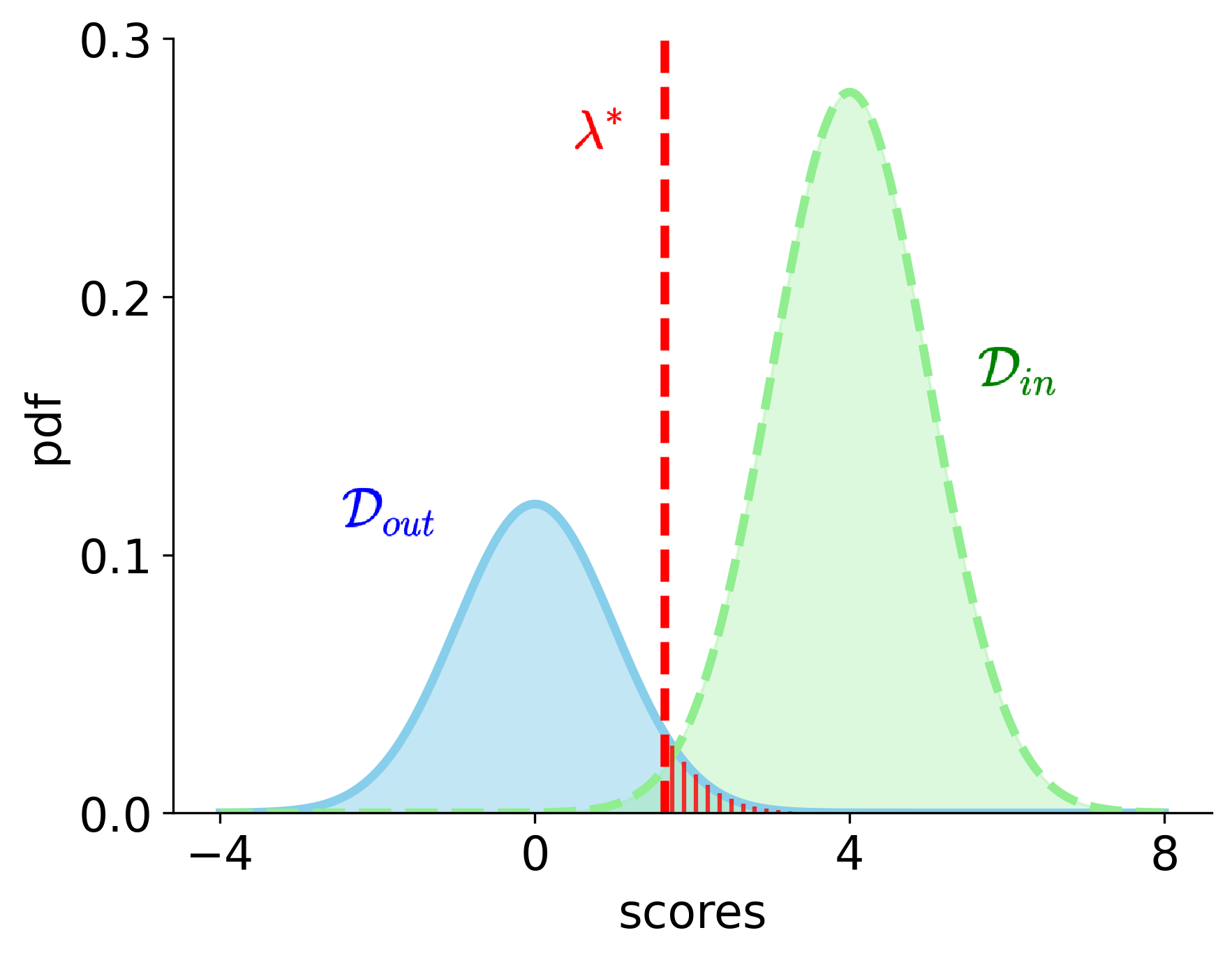}
  \caption{$\text{FPR}\left(\lambda^{\star}\right)=\alpha \longleftrightarrow \text{CDF}_{\mathcal{D}_{\text {out }}}\left(\lambda^{\star}\right)=1-\alpha$. {Optimal $\lambda^\star$} for the optimization problem~\eqref{eqn:ideal-opt} with $\alpha = 0.05$ and $x_t \overset{i.i.d}{\sim} 0.7\ \cD_{in} + 0.3\ \cD_{out}$, where  $\cD_{in}$ is $\mathcal{N}(4, 1)$ and $\cD_{out}$ is $\mathcal{N}(0, 1)$. 
  }
 \label{fig:FPRTPROptEg}
 \vspace{-8pt}

\end{wrapfigure}

\textbf{Expert human feedback.} Our goal is to tackle critical applications where a human expert examines the samples that are declared as OOD (instead of the ML model making automatic predictions on them). The feedback obtained from the human expert can be used to safely update the OOD detection threshold at each time step, $\lambda_t$, so that the FPR is maintained below the desired rate of $\alpha$.
One can trivially control FPR by setting $\lambda_t = \Lambda_{\max}$, i.e., always getting human feedback. This would of course be too expensive and defeat the purpose of using an ML model in the first place.  
Therefore, in addition to controlling the FPR, we aim to minimize the human feedback solicited by the system. In an ideal system, the points declared as ID are directly classified by the ML model, and only the points declared as OOD are examined by human expert.
Thus, minimizing human feedback is equivalent to maximizing the TPR. This can be done by setting the threshold as, $\lambda_t := \text{arg } \max_\lambda \TPR(\lambda) \text{ subject to } \FPR(\lambda) \leq \alpha$. Since the $\TPR$ is monotonic in $\lambda$, we can re-write this further as follows, 
\begin{tcolorbox}[top=-1pt,bottom=0pt,left=1pt,right=0pt, width=470pt]
\begin{equation}
\lambda_t^\star := \text{arg } \underset{\lambda  \in \Lambda}{\text{min }}
 \lambda, \quad \text{s.t. } \quad \FPR(\lambda) \le \alpha.
\tag{P1}\label{eqn:ideal-opt}
\end{equation}
\end{tcolorbox}
The optimal threshold, denoted by $\lambda^\star$, is the smallest $\lambda$ such that $\FPR(\lambda^\star)=\alpha = 1 - \text{CDF}_{\cD_{out}}(\lambda^\star)$ (see Figure~\ref{fig:FPRTPROptEg}).
When the distribution of the OOD points, $\cD_{ood}$, is not changing, then setting $\lambda_t^\star = \lambda^\star$ for all $t$ would be the optimal solution. Note that, changing the mixture ratio $\gamma$, or the distribution of the ID points $\cD_{id}$ does not affect the value of the optimal threshold.
As we do not have access to the true FPR and TPR values, we cannot solve the optimization 
problem~\eqref{eqn:ideal-opt}. Instead, we have to estimate the threshold at time $t$, denoted by $\hat{\lambda}_t$, using the observations until time $t$. 

\subsection{Adaptive Threshold Estimation}

Ideally, we want to avoid human feedback for points that are determined as ID by the system, i.e., with a score greater than $\hat{\lambda}_t$. However, in order to have an unbiased estimate of the FPR and to detect potential changes in the distribution of OOD samples and therefore change in true FPR, we obtain human feedback with a small probability $p$ for points predicted as ID by the system. We refer to this as \emph{importance sampling}. 



\textbf{FPR estimation and adapting the threshold.} At each time $t$, we observe $x_t \overset{i.i.d}{\sim}(1-\gamma) \cD_{id} + \gamma \cD_{ood}$, and $s_t = g(x_t)$ is the corresponding score. If 
$s_t \le \hat{\lambda}_{t-1}$, where $\hat{\lambda}_{t-1}$ is the threshold determined in at time $t-1$,  then it is considered an OOD point and hence gets a human label for it and we get to know whether it is in fact OOD or ID. If $s_t > \hat{\lambda}_{t-1}$, then $x_t$ is considered an ID point and hence gets a human label only with probability $p$. So, we get to know whether it is truly ID or not with probability $p$. Now we have to update the threshold, $\hat{\lambda}_t$, such that the $\FPR(\hat{\lambda}_t) \leq \alpha$, by finding the minimum $\lambda$ that satisfies this constraint in order to maximize the TPR. 

Our approach is based on using the feedback on the samples that are examined by human experts till time $t$ to construct an unbiased estimator of $\FPR(\lambda)$ (see Equation~\ref{eqn:EmpFPR}). We also construct an upper confidence interval for the estimated $\FPR(\lambda)$ that is valid at all thresholds $\lambda \in \Lambda$ and for all times simultaneously with high probability (see Equation~\ref{eqn:psi-theory}). 
This enables us to optimize for $\lambda$ such that the upper bound on the true $\FPR(\lambda)$ is at most $\alpha$ at each time $t$ and thus safely update the threshold $\lambda_t$. 
Let $S^{(o)}_t = \{s^{(o)}_1,\ldots s^{(o)}_{{N}_t^{(o)}}\}$ denote the scores of the points that have been truly identified as OOD from human feedback so far, and $I^{(o)}_t$ be the corresponding time points. We estimate the FPR as follows,
\begin{equation}
\FPRhat(\lambda,t) \ldef \frac{1}{N_t^{(o)}} \sum_{u \in I^{(o)}_t} Z_u(\lambda),
\label{eqn:EmpFPR}
\end{equation}
\begin{equation*}
 Z_{u}(\lambda) \ldef  \begin{cases} 
     \mathbf{1} (  s^{(o)}_{u} > \lambda), &   \text{ if } s^{(o)}_u \le \hat{\lambda}_{u-1} \\  \frac{1}{p}\mathbf{1}(s^{(o)}_{u}>\lambda), & \text{ w.p. } p \text{ if } s^{(o)}_u > \hat{\lambda}_{u-1} \\
    0, & \text{ w.p. } 1-p \text{ if } s^{(o)}_u > \hat{\lambda}_{u-1}
 \end{cases}.
 \end{equation*}
We show that our estimator for the FPR is unbiased. The proof is deferred to the Appendix \ref{subsec:proofs}. 
\begin{lemma}
    Let $p>0$, $\FPRhat(\lambda,t)$ as defined in eq.~\eqref{eqn:EmpFPR} is an unbiased estimate of the true $\FPR(\lambda)$, i.e.,  $\E[\FPRhat(\lambda,t)] = \FPR(\lambda)$.
\end{lemma}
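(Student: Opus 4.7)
The plan is to prove unbiasedness by iterated expectation: first integrate out the independent importance-sampling coin at time $u$ to reduce $Z_u(\lambda)$ to the plain indicator $\mathbf{1}(s_u^{(o)} > \lambda)$, then integrate out the score $s_u^{(o)}$ to obtain $\FPR(\lambda)$, and finally average over the (random) set of OOD time indices $I_t^{(o)}$.

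Concretely, fix $u \in I_t^{(o)}$ and condition on the history $\mathcal{F}_{u-1}$ generated by the data and coin flips up to time $u-1$; this makes the adaptive threshold $\hat{\lambda}_{u-1}$ measurable, while the importance-sampling coin at time $u$ remains independent of $\mathcal{F}_{u-1}$ and of the score $s_u^{(o)}$ by construction. The piecewise definition of $Z_u(\lambda)$ then splits into two cases. If $s_u^{(o)} \le \hat{\lambda}_{u-1}$, then $Z_u(\lambda)$ is deterministically equal to $\mathbf{1}(s_u^{(o)} > \lambda)$. If $s_u^{(o)} > \hat{\lambda}_{u-1}$, then $Z_u(\lambda)$ equals $\tfrac{1}{p}\mathbf{1}(s_u^{(o)} > \lambda)$ with probability $p$ and $0$ otherwise, so its conditional mean is $p \cdot \tfrac{1}{p}\mathbf{1}(s_u^{(o)} > \lambda) = \mathbf{1}(s_u^{(o)} > \lambda)$. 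In both cases $\E[Z_u(\lambda) \mid \mathcal{F}_{u-1}, s_u^{(o)}] = \mathbf{1}(s_u^{(o)} > \lambda)$; this is precisely the design property of importance sampling.

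Next, $u \in I_t^{(o)}$ corresponds to a truly OOD time index ($y_u = -1$), so by the mixture model $x_u \sim \cD_{ood}$ independently of $\mathcal{F}_{u-1}$, and hence $s_u^{(o)} = g(x_u)$ has the pushforward distribution of $\cD_{ood}$ under $g$. Taking the outer expectation and applying the definition of $\FPR(\lambda)$ gives $\E[\mathbf{1}(s_u^{(o)} > \lambda)] = \Pr_{x \sim \cD_{ood}}[g(x) > \lambda] = \FPR(\lambda)$, so by the tower rule $\E[Z_u(\lambda)] = \FPR(\lambda)$ for every $u \in I_t^{(o)}$. Finally, conditioning on $N_t^{(o)} > 0$ (the event on which the estimator is defined) and on the random set $I_t^{(o)}$, linearity of expectation yields $\E[\widehat{\FPR}(\lambda, t) \mid N_t^{(o)}, I_t^{(o)}] = \tfrac{1}{N_t^{(o)}} \sum_{u \in I_t^{(o)}} \FPR(\lambda) = \FPR(\lambda)$, and the outer expectation delivers the lemma.

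The main delicacy, and the only nontrivial bookkeeping, is keeping the three sources of randomness disentangled: the i.i.d.\ data stream, the adaptive threshold process $\{\hat{\lambda}_u\}$, and the independent coin flips used for importance sampling. The structural fact that makes everything collapse is that the weight $1/p$ is calibrated exactly against the sampling probability $p$, so both branches of the definition of $Z_u$ produce the same conditional mean; this is what decouples the estimator from the history-dependent threshold $\hat{\lambda}_{u-1}$ and leaves only a clean expectation over $\cD_{ood}$.
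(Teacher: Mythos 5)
Your proof is correct and relies on exactly the same structural fact as the paper's: conditioning on $\mathcal{F}_{u-1}$ (which fixes $\hat{\lambda}_{u-1}$), the $1/p$ weight cancels the sampling probability $p$, so $\E[Z_u(\lambda)\mid \mathcal{F}_{u-1}] = \FPR(\lambda)$ regardless of which branch of the definition is active. The presentation differs: the paper proves the claim by induction on $t$, writing $\FPRhat(\lambda,t)$ as a convex combination of $Z_t(\lambda)$ and $\FPRhat(\lambda,t-1)$ and invoking the induction hypothesis, whereas you argue term by term and then condition on the random quantities $N_t^{(o)}$ and $I_t^{(o)}$ before averaging. Your version is actually a bit more careful on one point the paper glosses over: in the inductive step the paper pulls the random normalizer $N_t^{(o)}$ through the expectation without making the conditioning explicit, whereas you handle this cleanly by conditioning on $N_t^{(o)}$ and $I_t^{(o)}$ at the final stage. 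Both arguments are valid and yield the same result; yours is slightly more transparent about the bookkeeping of the three sources of randomness, while the paper's inductive form is a touch shorter.
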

\begin{wrapfigure}{r}{0.5\textwidth}
\begin{minipage}{0.5\textwidth}
\vspace{-10pt}
\begin{algorithm}[H]
\begin{algorithmic}[1]
\caption{ Human in the Loop OOD Detection} \label{alg:adaptive-ood}
\Require{FPR threshold $\alpha$ , sampling probability $p \in (0, 1)$,
Scoring function $g:\mathcal{X}\mapsto \mathbb{R}$},  
\State {$S_0 = \Phi $, $\hat{\lambda}_0 = \infty$}

\For{  $t = 1,2,\ldots$ }
   \State{Receive data point $x_t$ ;  $s_t = g(x_t)$} 
   \If{ $s_t \le \hat{\lambda}_{t-1}$}
  { $l_t = 1$}
   \Else { $l_t \sim \texttt{Bernoulli}(p)$}
   \EndIf
    \If{$l_t= 1$}
        \State {$y_t =$ 
 \texttt{GetExpertLabel$(x_t)$}}
     \State{$S_t = S_{t-1} \cup \{(s_t,y_t)\} $}
   \EndIf 
  \State{
$\hat{\lambda}_t := \text{arg } \underset{\lambda  \in \Lambda}{\text{min}}\ \lambda\ \text{ s.t. }\ \FPRhat(\lambda,t) + \psi(t,\delta)\le \alpha $
}

   \If{$l_t=1$} { Output $y_t$}
    \Else { Output $\hat{y}_t = \sign(s_t -\hat{\lambda}_t)$}
    \EndIf
\EndFor
\end{algorithmic}
\end{algorithm}
\vspace{-25pt}
\end{minipage}
\end{wrapfigure}

 

\textbf{Finding threshold using a UCB on FPR.} We propose using our estimated FPR with an upper confidence bound (UCB), which we will describe soon,
to obtain the following optimization problem \eqref{opt-P2},
\begin{tcolorbox}[top=-1pt,bottom=0pt,left=1pt,right=0pt]
\begin{equation}
\hat{\lambda}_t := \text{arg } \underset{\lambda  \in \Lambda}{\text{min}}\ \lambda\ \text{ s.t. }\ \FPRhat(\lambda,t) + \psi(t,\delta)\le \alpha,
\tag{P2}\label{opt-P2}
\end{equation}
\end{tcolorbox}
where the term $\psi(t,\delta)$ is a time-varying upper confidence which is simultaneously valid for all $\lambda$ for all time with probability at least $1-\delta$ for any given $\delta \in (0,1)$. The minimization problem can be solved in many ways. We use a binary search procedure where we search over a grid on $(\Lambda_{\min}, \Lambda_{\max})$ with grid-size $\nu$. 
The procedure searches for a smallest $\lambda$ such that $\FPRhat(\lambda,t) + \psi(t,\delta)\le \alpha$. It uses eq. \eqref{eqn:EmpFPR} to compute the empirical FPR at various thresholds and the confidence interval $\psi(t,\delta)$ given in eq. \eqref{eqn:psi-theory}. Details of the binary search procedure are in the Appendix.


\textbf{Upper confidence bound (UCB).} Our algorithm hinges on having confidence intervals on the FPR that are valid for all thresholds and for all times simultaneously. To construct such bounds,
we use the confidence bounds based on Law of Iterated Logarithm(LIL) of \citep{khinchine1924LIL}. We note that at each time step $t$, whether the sample $x_t$ gets human feedback or not depends on the previous threshold $\hat{\lambda}_{t-1}$ which is a function of data up to time $t-1$ and the importance sampling. Therefore, \emph{the samples used to estimate the FPR are dependent} which prevents direct application of known results that are developed for i.i.d. samples~\citep{howard2022AnytimeValid}. We build upon the LIL bounds for martingales \citep{balsubramani2015LIL} and derive a confidence interval bound that is valid in our setting, which is given by the following equation,
\begin{equation}
  \psi(t,\delta) := \sqrt{\frac{ 3 c_t}{N_t^{(o)}} \bigg [ 2\log \log \Big ( \frac{3 c_t N_t^{(o)}}{2}\Big )  + \log \Big ( \frac{2 L}{\delta} \Big)\bigg ]}, \label{eqn:psi-theory}
\end{equation}
where $c_t = 1-\beta_t + \frac{\beta_t}{p^2}$, $\beta_t = \frac{N_t^{(o,p)}}{N_t^{(o)}}$ and $N_t^{(o,p)}$ is the number of points sampled using importance sampling until time $t$ and $\nu \in (0, 1)$ is a discretization parameter set by the user, $L = (\Lambda_{\max} - \Lambda_{\min})/\nu.$



\section{Theoretical Guarantees}
\label{sec:theory}
\begin{figure*}[t]
    \centering
\includegraphics[width=0.95\textwidth]{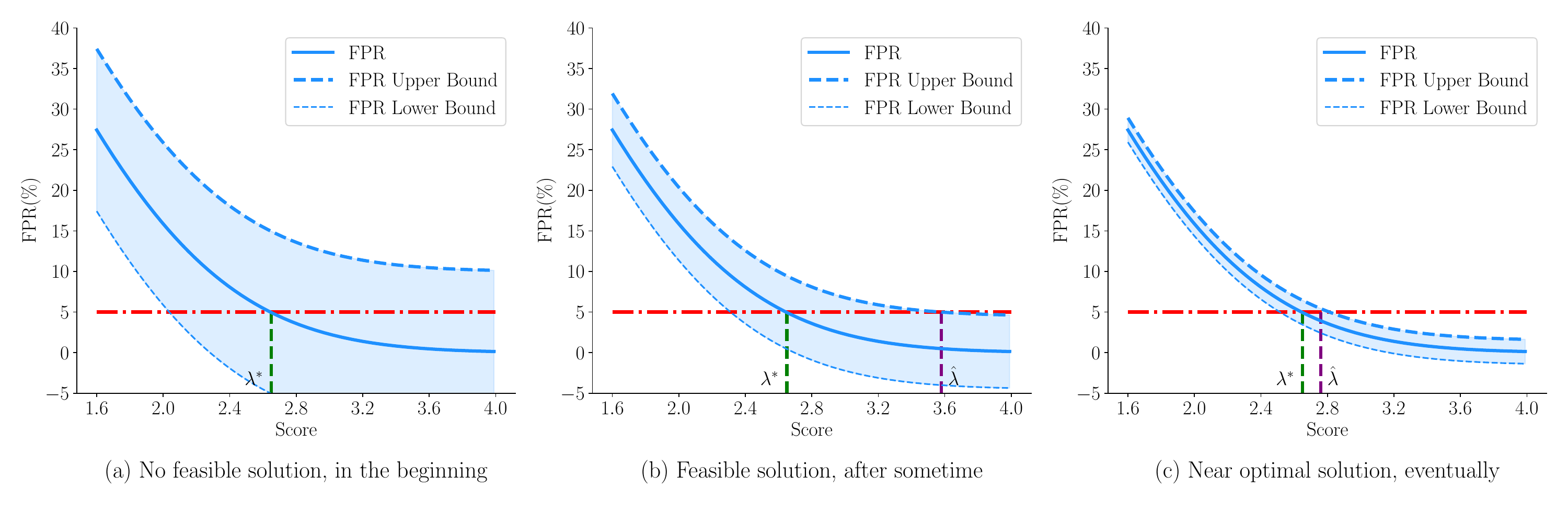}
    \caption{{Illustration of the confidence interval defined in eq. \eqref{eqn:psi-theory} on FPR and their effect on threshold estimation. As the system receives more OOD samples the confidence intervals will shrink and lead to better thresholds safely ($\hat{\lambda}_t \geq \lambda^\star$).
    }}\label{fig:OptP2Explain}
\end{figure*}
We want three provable properties for our approach. First, it must have guaranteed FPR control---the safety property we set out to ensure. In addition, we would like to show a bound on the number of streamed observations (i.e., the time) taken to reach a point where every point does not need human feedback to ensure safety. Finally, we wish to have some notion of optimality, and a bound on the number of observations before it is reached. In this section, we provide a result that provides all of these properties under the following assumptions: (i) 
we are in the stationary setting, i.e., the distributions do not change over time, and (ii) the score distributions for the ID and OOD samples have sub-Gaussian tails. 

To quantify how close to the optimal operating point the system is at any given time, we define the following notion of $\eta$-optimality.
\begin{definition}($\eta$-optimality)
\label{defn:optimality}
   For any $\eta > \FPR(\lambda^*) - \lim_{\epsilon \to 0^{+}} \FPR(\lambda^* + \epsilon)$, the system is said to be operating in the $\eta$-optimal regime after some time point $T_\eta$, if  $\FPR(\lambda^*) - \FPR(\hat{\lambda}_t) \le \eta$ for all $t\ge T_{\eta}$. 
\end{definition}
Using the estimated FPR in eq.~\eqref{eqn:EmpFPR} and the anytime valid confidence intervals on the FPR at all thresholds we obtained in eq.~\eqref{eqn:psi-theory}, we provide the following guarantees for Algorithm~\ref{alg:adaptive-ood}. 
\begin{theorem}
Let $\alpha, \delta, p, \gamma \in (0, 1)$. Let $x_t \overset{i.i.d}{\sim}(1-\gamma) \cD_{id} + \gamma \cD_{ood}$ and let $c_t = 1-\beta_t + \frac{\beta_t}{p^2}$, $\beta_t = \frac{N_t^{(o,p)}}{N_t^{(o)}}$ where $N_t^{(o,p)}$ is the number of OOD points sampled using importance sampling until time $t$ and $N_t^{(o)}$ is the total number of OOD points observed till time $t$. Let $n_0 = \min\{u: c_u N^{(o)}_u \ge 173\log(\frac{8}{\delta}) \}$ and $t_0$ be such that $N_{t_{0}}^{(o)}\ge n_0$. If Algorithm~\ref{alg:adaptive-ood} uses the optimization problem~\eqref{opt-P2} to find the thresholds with the upper confidence term $\psi(N_t^{(o)},\delta/2)$ given by eq.~\eqref{eqn:psi-theory}, then there exist constants $C_1, C_2, C_3 >0$ such that with probability at least $1-\delta$, 
\begin{enumerate}[leftmargin=15pt,itemsep=0.25pt,topsep=0pt]
    \item \textbf{Controlled FPR.} For all $t \geq t_0$, $\FPR(\hat{\lambda}_t) \leq \alpha$.
    \item \textbf{Time to reach feasibility.} The algorithm  will find a feasible threshold, $\hat{\lambda}_t$ such that $\FPRhat(\hat{\lambda}_t) + \psi(N_t^{(o)}) \le \alpha$, for all $t \ge \max(t_0,T_f)$ , where, 
    $ T_f = \frac{2C_1}{\gamma \alpha^2} \log \Big( \frac{4C_2}{\delta}\log(\frac{C_3}{\alpha}) \Big) + \frac{1}{\gamma^2}\log(\frac{4}{\delta}) $.
    \item \textbf{Time to reach $\eta-$optimality.}  For all $t \ge \max(t_0,T_{\eta\text{-opt}})$, $\hat{\lambda}_t$ satisfy the $\eta$-optimality condition in definition \ref{defn:optimality}, when
    $\FPRhat(\hat{\lambda}_{T_{\eta\text{-opt}}}) \in [\alpha -\eta/2, \alpha]$ and $T_{\eta\text{-opt}} = \frac{8C_1}{\gamma \eta^2} \log \Big( \frac{4C_2}{\delta}\log(\frac{2C_3}{\eta}) \Big) + \frac{1}{\gamma^2}\log(\frac{4}{\delta})$.
  
\end{enumerate}
\label{thm:main_theorem}
\end{theorem}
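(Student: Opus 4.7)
The backbone of the proof is a single uniform concentration statement: with probability at least $1-\delta/2$, the bound $|\FPRhat(\lambda,t) - \FPR(\lambda)| \le \psi(t,\delta/2)$ holds simultaneously for every $\lambda \in \Lambda$ and every time $t$ with $N_t^{(o)} \ge n_0$. Given Lemma~1, my plan is to build the filtration $\mathcal{F}_{u-1}$ recording the history through the first $u{-}1$ observed OOD samples together with all past importance-sampling coin flips. With respect to this filtration the threshold $\hat{\lambda}_{u-1}$ is predictable, so $Z_u(\lambda) - \FPR(\lambda)$ is a bounded martingale difference whose conditional second moment is at most $1$ on the event $\{s^{(o)}_u \le \hat{\lambda}_{u-1}\}$ and at most $1/p^2$ otherwise; summing over $u$ produces exactly the data-dependent variance proxy $c_t = (1-\beta_t)+\beta_t/p^2$ appearing inside $\psi$. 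I would then invoke the anytime martingale LIL from \citep{balsubramani2015LIL}, whose burn-in requirement $c_u N_u^{(o)} \ge 173\log(8/\delta)$ is exactly what defines $n_0$, and union-bound over the $L=(\Lambda_{\max}-\Lambda_{\min})/\nu$ discretization points in $\Lambda$ to obtain the $\log(2L/\delta)$ factor inside $\psi$. Because $\FPR(\cdot)$ and $\FPRhat(\cdot,t)$ are both monotone in $\lambda$, the grid covers all $\lambda$ up to a discretization error absorbed by choosing $\nu$ small.

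Conditioned on this good event, part~1 is immediate: the algorithm enforces $\FPRhat(\hat{\lambda}_t,t) + \psi(t,\delta/2) \le \alpha$ whenever (P2) is feasible, and the concentration bound then gives $\FPR(\hat{\lambda}_t) \le \FPRhat(\hat{\lambda}_t,t) + \psi(t,\delta/2) \le \alpha$ for every $t\ge t_0$ (the infeasible case is handled trivially by setting $\hat{\lambda}_t = \infty$, which yields $\FPR=0$). For part~2 I would exhibit a feasibility certificate by choosing $\lambda' > \lambda^\star$ with $\FPR(\lambda') \le \alpha/2$, which exists since $\FPR$ is monotone non-increasing and tends to zero; then $\FPRhat(\lambda',t)+\psi(t,\delta/2) \le \FPR(\lambda')+2\psi(t,\delta/2) \le \alpha$ as soon as $\psi(t,\delta/2) \le \alpha/4$. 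Inverting the explicit formula for $\psi$ shows this holds once $N_t^{(o)} \gtrsim \frac{1}{\alpha^2}\log\!\bigl(\frac{L}{\delta}\log(\frac{1}{\alpha})\bigr)$. Before feasibility is attained we have $\hat{\lambda}_t=\infty$, so every arrival is labeled and $N_t^{(o)}$ is exactly Binomial$(t,\gamma)$; a multiplicative Chernoff bound gives $N_t^{(o)} \ge \gamma t/2$ with probability at least $1-\delta/4$ once $t \ge \gamma^{-2}\log(4/\delta)$. Combining these two requirements reproduces the stated $T_f$. Part~3 is the identical calculation with $\psi(t,\delta/2) \le \eta/2$ replacing $\psi(t,\delta/2)\le\alpha/4$: given the hypothesis $\FPRhat(\hat{\lambda}_{T_{\eta\text{-opt}}}) \in [\alpha-\eta/2,\alpha]$, the concentration bound yields $\FPR(\hat{\lambda}_t) \ge \FPRhat(\hat{\lambda}_t,t) - \psi(t,\delta/2) \ge \alpha - \eta$, which combined with $\FPR(\lambda^\star)\le\alpha$ is the $\eta$-optimality condition.

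The main obstacle is the first step, because the samples entering $\FPRhat$ are not i.i.d.: both the importance-sampling indicator and the branch $\{s^{(o)}_u\le\hat{\lambda}_{u-1}\}$ depend on the entire past, and the variance proxy $c_t$ is itself data-dependent. Classical Hoeffding, Bernstein, and LIL bounds all fail here. The Balsubramani martingale LIL accommodates data-dependent variance proxies, but applying it cleanly requires (i) a filtration in which $\hat{\lambda}_{u-1}$ is predictable while the conditional expectation of $Z_u(\lambda)$ remains $\FPR(\lambda)$, and (ii) a uniform-in-$\lambda$ extension via the grid union bound together with the monotonicity argument. Nailing down the measurability of $\beta_t$ and $c_t$ so that they may legitimately appear inside the LIL as data-adaptive quantities, rather than as fixed constants, is the technically delicate piece. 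Everything downstream---the Chernoff bound on OOD arrivals and the algebraic inversion of $\psi$ to extract $T_f$ and $T_{\eta\text{-opt}}$---is routine once the uniform concentration statement is in place.
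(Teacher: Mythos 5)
Your proposal follows essentially the same route as the paper: uniform anytime concentration on $\FPRhat$ via the Balsubramani martingale LIL applied to $M_t(\lambda)=\sum_u Z_u(\lambda)$ with data-dependent variance proxy $c_t N_t^{(o)}$, a union bound over the $L$-point grid, then Part~1 as an immediate corollary of the good event, and Parts~2--3 by inverting $\psi$ and bounding the waiting time for enough OOD arrivals. The one place you diverge is the feasibility certificate: you exhibit $\lambda'$ with $\FPR(\lambda')\le\alpha/2$ and require $\psi\le\alpha/4$, whereas the paper observes that $\FPRhat(\Lambda_{\max},t)=0$ always, so $\psi(t,\delta/2)\le\alpha$ alone certifies feasibility; this only changes the constants absorbed into $C_1,C_2,C_3$. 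You are also more careful than the paper on one small point -- noting that until feasibility the threshold is $\infty$, so every OOD arrival is labeled and $N_t^{(o)}$ is genuinely Binomial$(t,\gamma)$, which is the implicit assumption behind the paper's Lemma~\ref{lem:num_tosses}; for Part~3 this justification is less clean for both you and the paper, since after feasibility some OOD points escape labeling. These are refinements, not a different argument; the decomposition, key lemmas, and $\delta$-budget allocation all match the paper's proof.
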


%

We discuss the results below and defer the proofs to the Appendix \ref{subsec:proofs}.

\textbf{Controlled false positive rate.} We design our framework with the goal of safely updating the threshold. Our method guarantees that $ \hat{\lambda}_{t} \geq \lambda^\star$ at all times, i.e., we approach the optimal $\lambda^\star$ from above and therefore we never violate the FPR constraint.
This property is crucial in applications where accurately controlling the FPR is essential. \\ 

\textbf{Time to reach feasibility.} Our algorithm begins with setting $\lambda_0 = \Lambda_{\max}$, and therefore obtaining human feedback on all the points until the time point when we can find a threshold that enables us to safely declare scores above it as ID (Fig~\ref{fig:OptP2Explain}b). 
Our analysis provides an upper bound on the time taken by the algorithm to find such a safe, $\hat{\lambda}_t < \Lambda_{\max}$ such that, $\text{FPR}(\hat{\lambda}_t) \leq \alpha$. We call this time as \emph{time to reach feasibility}, $T_f$, and it is the time step at which a sufficient number of observations $N_{T_f}^{(o)}$ is obtained so that the confidence interval $\psi(T_f,\delta/2) \le \alpha$. It is inversely proportional to the level $\alpha$ and the fraction of OOD samples $\gamma$.

\textbf{Time to reach $\eta$-optimality.} As the time proceeds and the confidence intervals around the estimated FPR at different thresholds start to get smaller, the estimated safe threshold $\hat{\lambda}_t$ starts to approach $\lambda^\star$ (Fig~\ref{fig:OptP2Explain}). If the estimated FPR at time step $T_{\eta\text{-opt}}$, denoted as $\FPRhat(\hat{\lambda}_{T_{\eta\text{-opt}}})$, is within the range $[\alpha - \eta/2, \alpha]$ and the confidence interval $\psi(T_{\eta\text{-opt}},\delta/2) \le \eta/2$, then for all time points after $T_{\eta\text{-opt}}$ the algorithm will find a $\hat{\lambda}_t$ that satisfies the $\eta$-Optimality condition. In this regime, the algorithm operates in a state where the difference between the FPR at the true optimal threshold, $\FPR(\lambda^*) = \alpha$, and the FPR at the estimated threshold $\FPR(\hat{\lambda}_t)$, is bounded by $\eta$. Our analysis provides a bound on the time $T_{\eta\text{-opt}}$ which is the time point when the number of acquired OOD samples $N_{T_{\eta\text{-opt}}}^{(o)}$ becomes at least $\frac{4C_1}{\eta^2}\log \Big( \frac{2C_2}{\delta}\log(\frac{2C_3}{\eta}) \Big)$. It is inversely proportional to the closeness to optimality $\eta$ and the fraction of OOD samples $\gamma$. 



The details of the proof are available in the appendix.

\begin{figure*}[t]
  \centering
\includegraphics[width=0.90\textwidth]
  {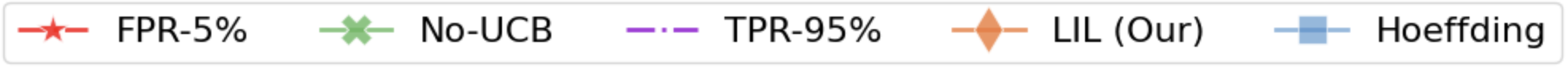}
  \mbox{
    \subfigure[ No window, no distribution shift. \label{fig:sim_no_shift_no_window}]{\includegraphics[scale=0.26]{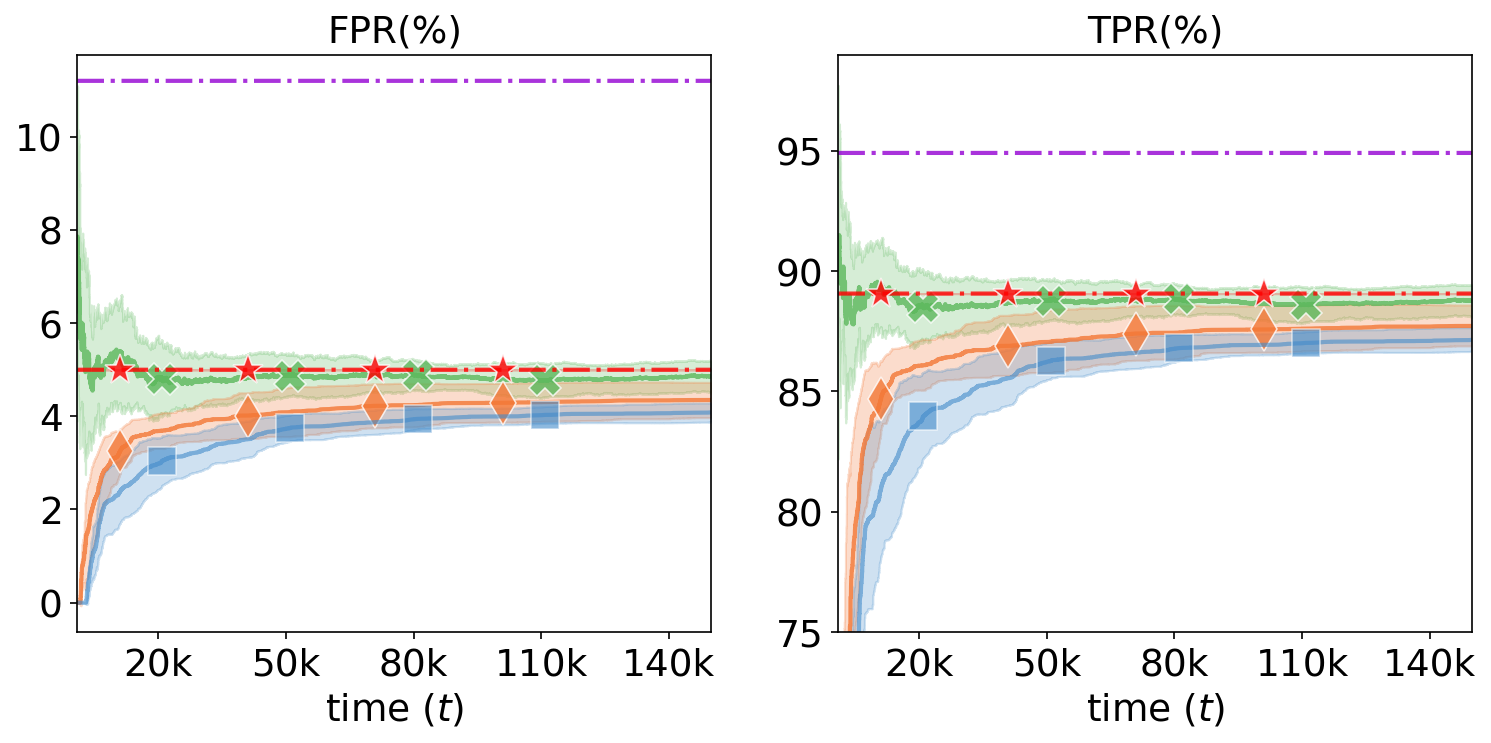}}
    \hspace{50pt}
    \subfigure[ 5K window, no distribution shift\label{fig:sim_no_shift_win_5k}]{\includegraphics[scale=0.26]{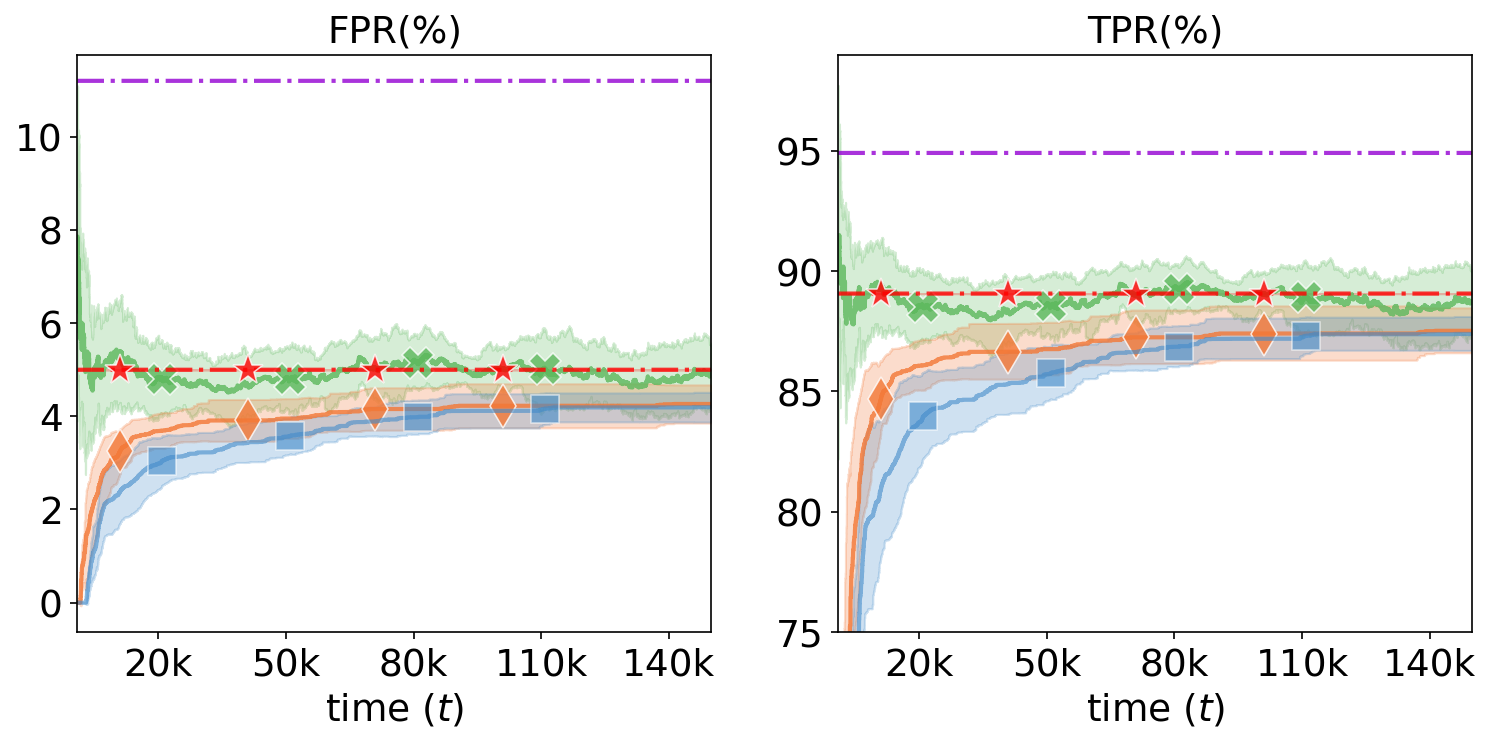}}
  }
  \caption{ Results on the synthetic data with stationary distributions, $\gamma=0.2$, and using no window. Each method is repeated 10 times. The mean and standard deviation are shown.}
    
    \label{fig:sim-no-change-41-main}
    \vspace{-5pt}
\end{figure*}

\begin{figure*}[t]
  
  \centering

  \mbox{
  \hspace{-10pt}
    \subfigure[Distribution shift, no window.  \label{fig:sim_shift_win_no}]{\includegraphics[scale=0.215]{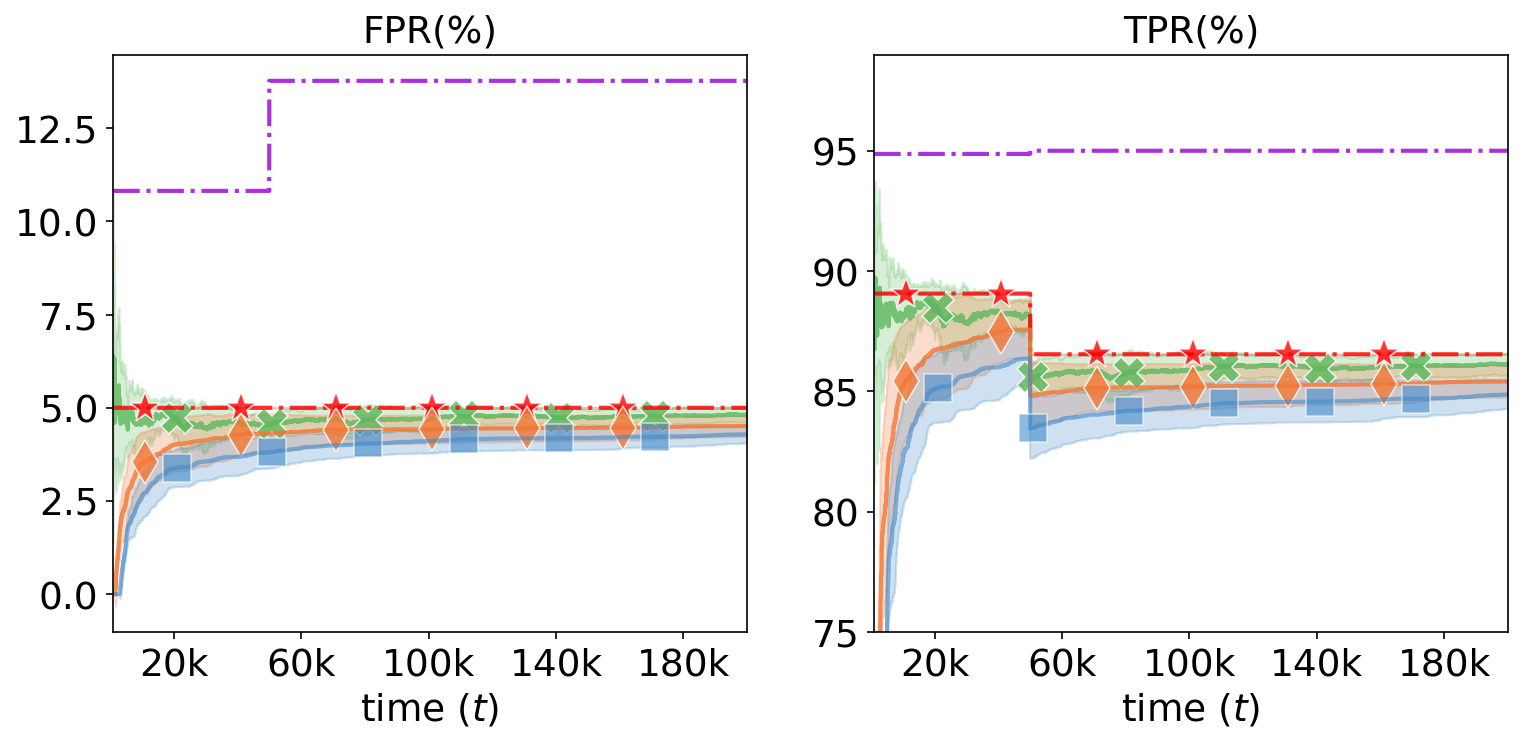}}
    
    \subfigure[Distribution shift, 5k window. \label{fig:sim_shift_win_5k}]{\includegraphics[scale=0.215]{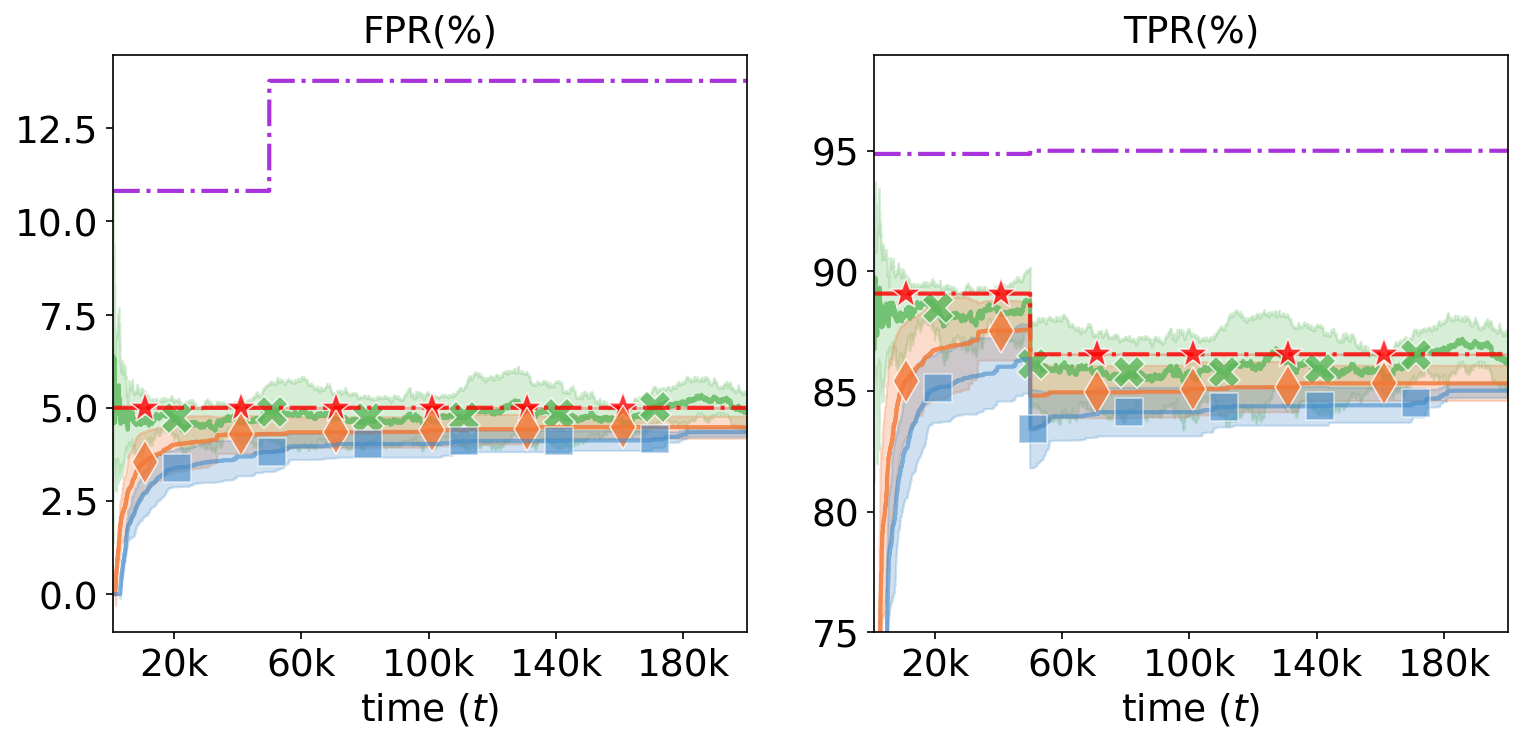}}

    \subfigure[Distribution shift, 10k window. \label{fig:sim_shift_win_10k}]{\includegraphics[scale=0.215]{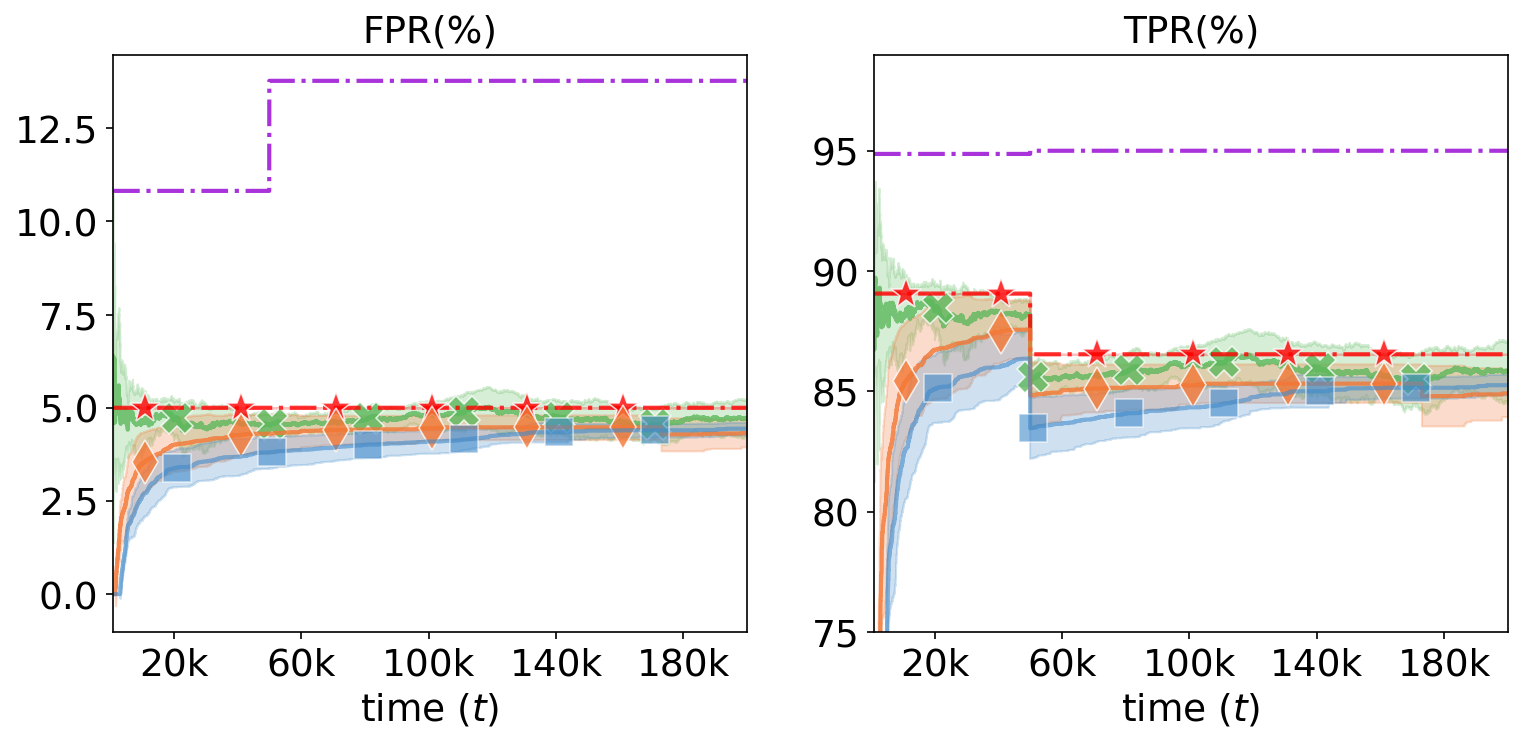}}
  }
  \caption{ 
    Effect of using various window sizes in synthetic data experiments. The distribution shift starts at $t=50k$. The arrow indicates the time at which the mean FPR + std. deviation over 10 runs goes below 5\% for the LIL method.
    }
  \label{fig:sim-shift-all} 
  
\end{figure*}

\begin{figure*}[t]
  \centering
 \includegraphics[width=0.98\textwidth]
  {figs-v2/legend_hfdng-3.png}
  \mbox{
  \hspace{-10pt}
    \subfigure[No distribution shift, no window.   \label{fig:knn_no_shift_no_win_cifar10_main}]{\includegraphics[scale=0.215]{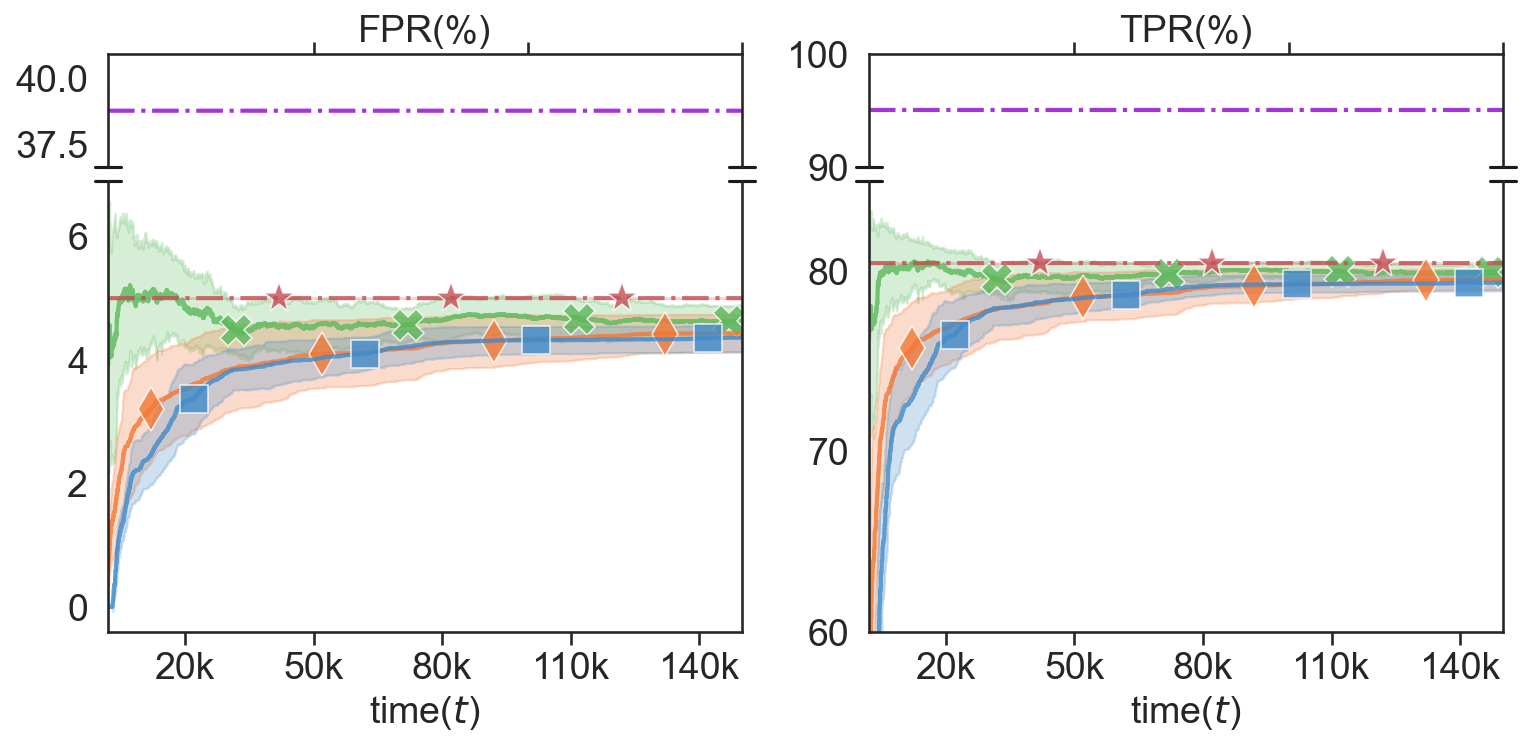}}
    \hspace{1pt}
    
    \subfigure[Distribution shift, 5k window. \label{fig:knn_shift_win_5k_cifar10_main}]{\includegraphics[scale=0.215]{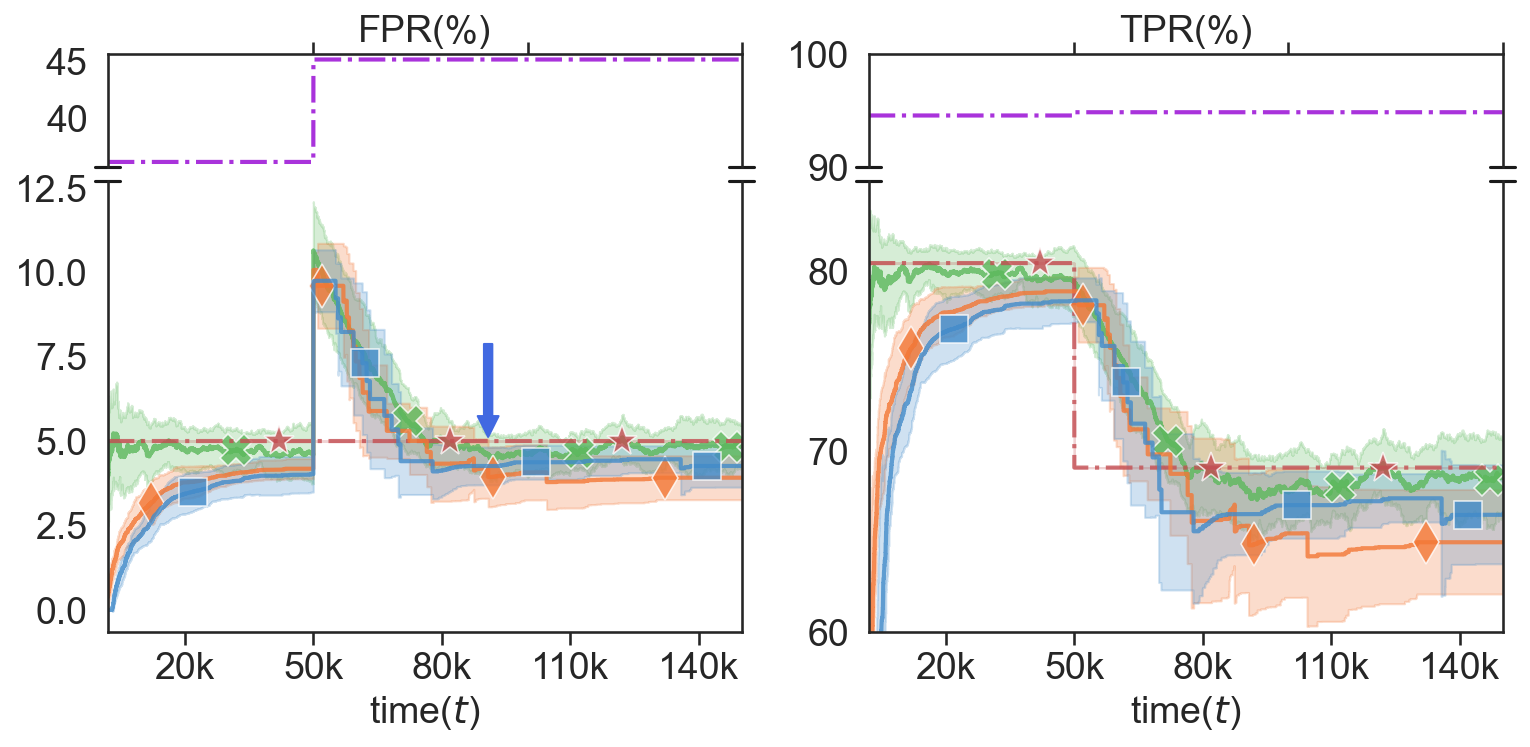}}

    \subfigure[Distribution shift, 10k window. \label{fig:knn_shift_win_10k_cifar10_main}]{\includegraphics[scale=0.215]{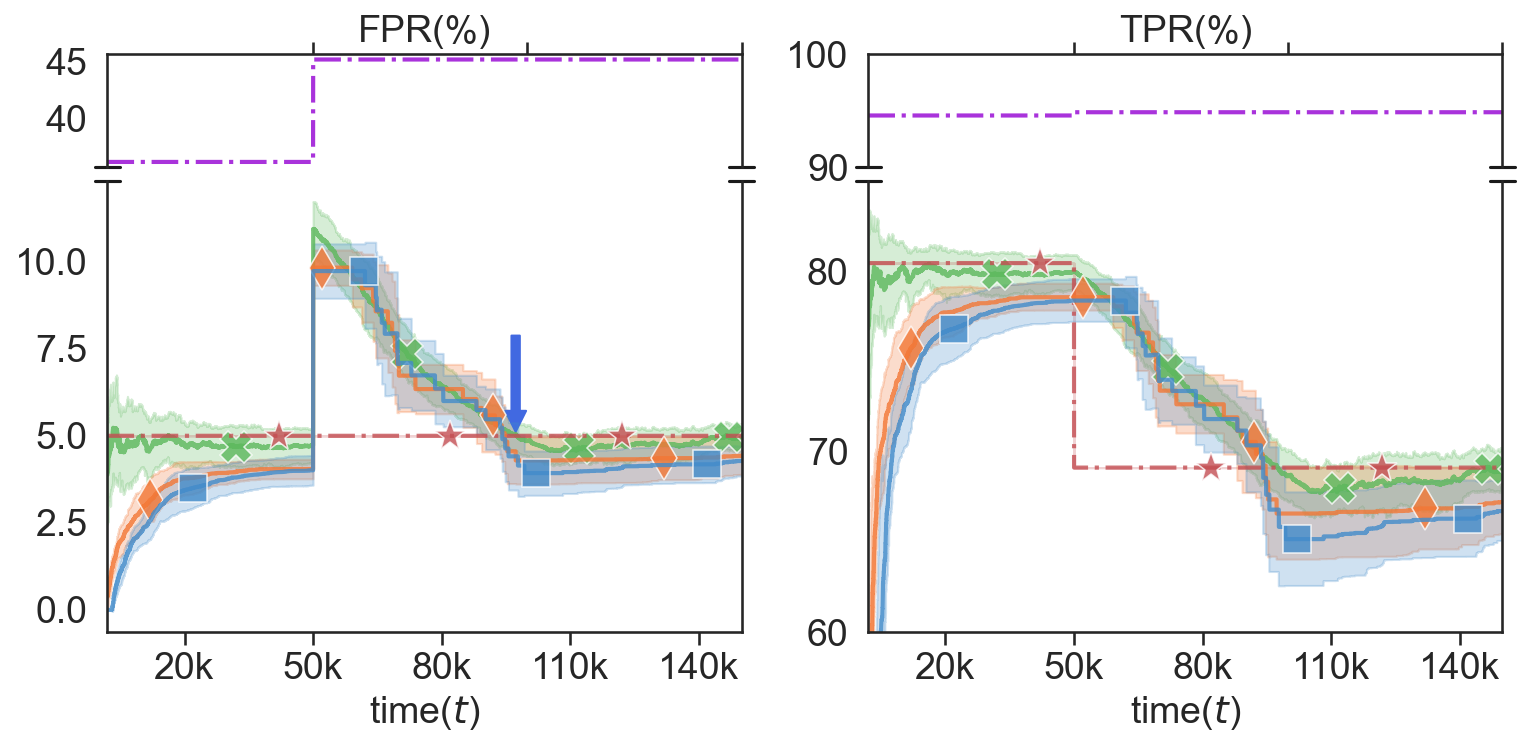}}
  }
  
  \caption{  {
   Results with the KNN scores on Cifar-10 as ID dataset. For (b) and (c) the distribution shifts at $t=50k$. The arrow indicates the time at which the mean FPR + std. deviation over 10 runs goes below 5\% for the LIL method.
    }}
  \label{fig:knn-cifar10_main}
  \vspace{-5pt}
\end{figure*}

\section{Empirical Evaluation}
\label{sec:experiment}

We evaluate our method to verify the following claims:
\begin{enumerate}[leftmargin=0pt,topsep=0pt, label={}]
\item {\bf C1.} Compared to non-adaptive baselines, our approach achieves lower FPR while maximizing the TPR. 
\item {\bf C2.} In the stationary setting, our adaptive method based on the LIL upper confidence bound satisfies the FPR constraint at all times and produces high TPR.
\item {\bf C3.} The proposed framework is compatible with any OOD scoring functions.
\item {\bf C4.} Our method continues to work even in distribution shift settings with a simple adaption using the windowed approach described in Section \ref{sec:dist_shit_exp}.
\end{enumerate}


\textbf{Baselines.}  We compare our method against the non-adaptive baseline popularly used for OOD detection. This non-adaptive method (\textbf{TPR-95}) finds a threshold achieving 95\% TPR using the ID data and uses it at all times. For our adaptive method, we consider three choices of confidence intervals  \textbf{ i) No-UCB:}  does not use any confidence intervals, \textbf{  ii) LIL:} Uses confidence interval from eq.~\eqref{eqn:lil-heuristic},  and  \textbf{iii) Hoeffding:}  Uses the confidence intervals from Hoeffding's inequality \citep{hoeffding1963}. The confidence intervals from Hoeffding inequality are not valid simultaneously for all times but are a reasonable choice for a practitioner. 
\begin{equation}
 \tilde{\psi}(t,\delta) =  C_1\sqrt{\frac{  c_t}{N_t^{(o)}} \bigg ( \log \log \big (C_2 c_t N_t^{(o)} \big )  + \log \Big ( \frac{C_3}{\delta} \Big )\bigg )} . 
  \tag{LIL-Heuristic}
  \label{eqn:lil-heuristic}
\end{equation}

\begin{wrapfigure}{r}{0.45\textwidth}
  \vspace{-15pt}
   \centering
    {\includegraphics[scale=0.3]{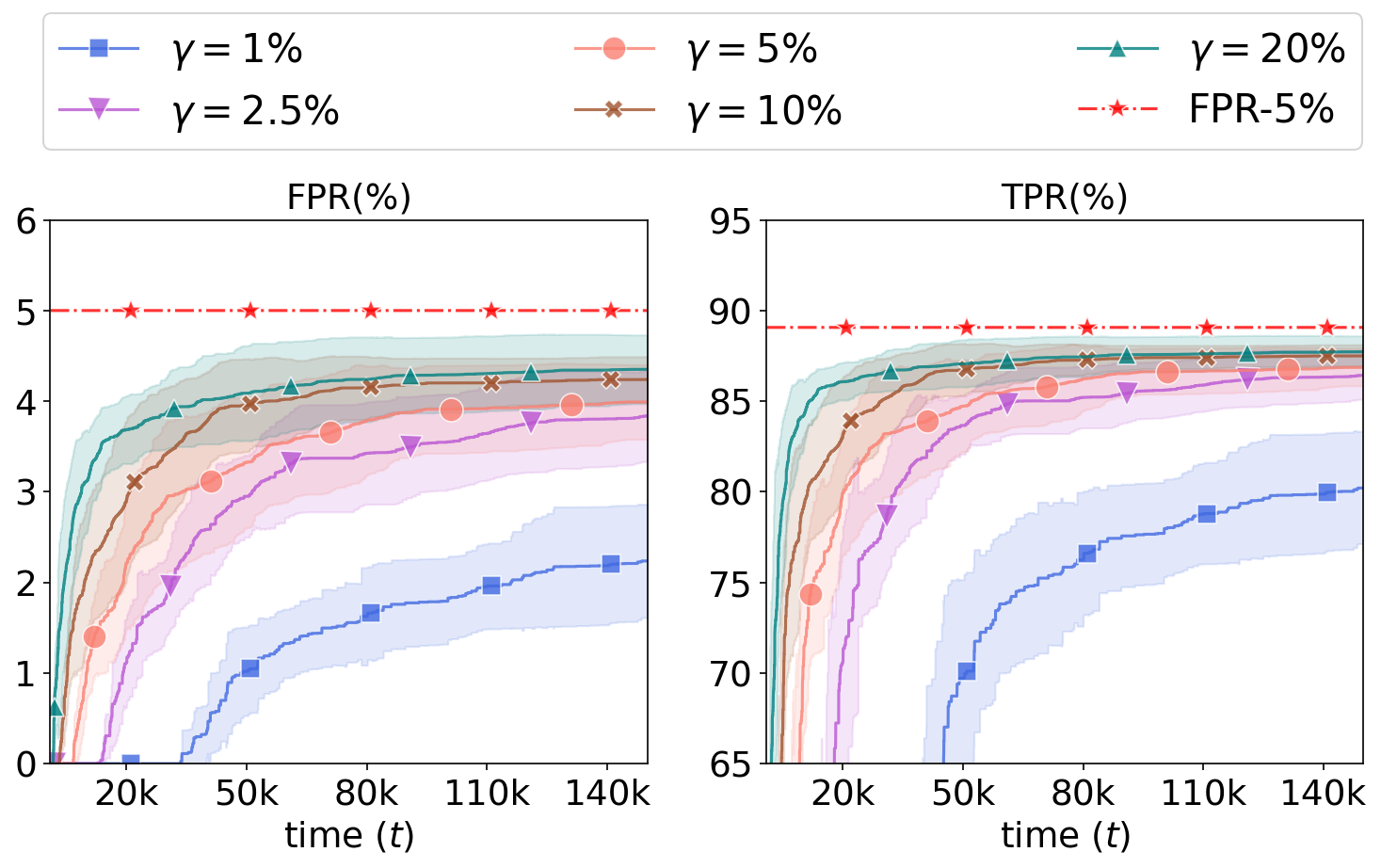}}

\vspace{-5pt}

  \caption{
  \small{
      Results on the synthetic data with stationary distributions and different mixing ratios $\gamma$.
     }
    }
    \label{fig:sim-gamma-change}
    \vspace{-10pt}
\end{wrapfigure}

The theoretical LIL bound in eq.~\eqref{eqn:psi-theory} has constants that can be pessimistic in practice. We get around this by using a LIL-Heuristic bound which has the same form as in eq.~\eqref{eqn:psi-theory} but with different constants. We consider the form in eq.~\eqref{eqn:lil-heuristic}. We find the constants $C_1, C_2, C_3$ using a simulation on estimating the bias of a coin with different constants and picking the ones so that the observed failure probability is below 5\%. We use $C_1 = 0.5$ and $C_2 = 0.75$, $C_3=1$. We use $\alpha = 0.05$, $\delta=0.2$, and importance sampling probability $p=0.2$ through all the empirical evaluations.  More details are available in the Appendix \ref{subsubsec:constant-search} and the code \footnote{{\url{https://github.com/2454511550Lin/TameFalsePositives-OOD}}}.  
\begin{wraptable}{r}{0.45\textwidth}
 \scalebox{0.75}{
\small{
\begin{tabular}{ccrrrr} 
 \toprule 
\multirow{2}{*}{$\bm{\gamma}$} & \multirow{2}{*}{$\bm{T_f}$} & \multicolumn{4}{c}{$\bm{T_{\eta-\text{opt}}}$} \\ 
\cmidrule{3-6} 
& 	 & 	$\bm{\eta=1.0\%}$	 & 	$\bm{\eta=1.5\%}$	 & 	 $\bm{\eta=2.0\%}$	 & 	 $\bm{\eta=2.5\%}$ 	 \\ 
\toprule 
\multirow{2}{*}{\textbf{2.5\%}} & 14,167	 & 93,011 	 & 71,089 	 & 70,559 	 & 37,534 \\ 
	 	    & \tiny{$ \pm 602 $ }	 & \tiny{ $ \pm 27,387 $ }  	 & \tiny{ $ \pm 25,654 $ }  	 & \tiny{ $ \pm 35,056 $ }  	 & \tiny{ $ \pm 9302 $ }  	 \\ 

 \midrule 
\multirow{2}{*}{\textbf{5\%}} & 7,054	 & 53,971 	 & 47,143 	 & 39,864 	 & 32,473 \\ 
	 	    & \tiny{$ \pm 301 $ }	 & \tiny{ $ \pm 20,816 $ }  	 & \tiny{ $ \pm 24,004 $ }  	 & \tiny{ $ \pm 22,328 $ }  	 & \tiny{ $ \pm 20,262 $ }  	 \\ 

 \midrule 
\multirow{2}{*}{\textbf{10\%}} & 3,549	 & 50,748 	 & 35,517 	 & 26,435 	 & 17,312 \\ 
	 	    & \tiny{$ \pm 200 $ }	 & \tiny{ $ \pm 33,947 $ }  	 & \tiny{ $ \pm 22,131 $ }  	 & \tiny{ $ \pm 14,361 $ }  	 & \tiny{ $ \pm 7,757 $ }  	 \\ 

 \midrule 
\multirow{2}{*}{\textbf{20\%}} & 1,770	 & 40,240 	 & 28,943 	 & 9,004 	 & 6,500 \\ 
	 	    & \tiny{$ \pm 72 $ }	 & \tiny{ $ \pm 37,751 $ }  	 & \tiny{ $ \pm 31,138 $ }  	 & \tiny{ $ \pm 3,383 $ }  	 & \tiny{ $ \pm 2,495 $ }  	 \\ 
\bottomrule  
\end{tabular}
}
}
    \caption{\small{Time to reach feasibility $T_f$ and optimality $T_{\eta-\text{opt}}$ in the stationary setting for different $\eta$ and mixing ratios $\gamma$.}}
    \label{tab:time-table-gamma}
    \vspace{-8pt}
\end{wraptable}

\textbf{Synthetic data setup.}
We simulate the OOD and ID scores using a mixture of two Gaussians $\mathcal{N}_{id}(\mu=5.5,\sigma=4)$ and  $\mathcal{N}_{ood}(\mu=-6,\sigma=4)$. We randomly draw 100k samples with $\gamma=0.2$ (see Figure~\ref{fig:sim-no-change-41-main}). 

\textbf{Real data setup.}
   We use ID and OOD datasets and scoring functions from the OpenOOD benchmark  \citep{yang2022openood}. Here we show the results on CIFAR-10 \citep{krizhevsky2009learning} as an ID dataset and show the results on CIFAR-100, and Imagenet-1K  \citep{deng2009imagenet} in the Appendix \ref{sec:additional_exp}. To verify C3, we use various scoring functions: ODIN  \citep{liang2017enhancing}, Mahalanobis Distance  \citep{lee2018simple}, Energy Score  \citep{liu2020energy}, SSD  \citep{sehwag2021ssd}, VIM  \citep{wang2022vim}, and KNN  \citep{sun2022out} scores for the evaluation. Due to space limitation, we present results for the KNN  \citep{sun2022out} score here. For more details on the datasets, scores, and results on the rest of the scores please see the Appendix \ref{sec:additional_exp}.  







\subsection{Stationary Distributions Setting}
In the stationary setting the data distributions do not change over time. We use this setting to verify our theoretical claims as they are valid in such settings. We perform the experiments to verify claims C1 and C2 on synthetic and real data. See Figures  \ref{fig:sim_no_shift_no_window} and \ref{fig:knn_no_shift_no_win_cifar10_main} for the results. We make the following observations: (i)
We see that the non-adaptive method (TPR-95) with the fixed threshold has a high FPR at all times and violates the FPR constraint by a big margin. On the other hand, the adaptive methods improve with time. (ii) We see that not using a UCB leads to violation of FPR constraints and the methods with LIL-Heuristic, Hoeffding based intervals are able to maintain the FPR below the user given threshold $5\%$. Moreover, all the methods improve as they acquire more samples with time and eventually reach very close to the optimal solution. We note that our method (LIL-heuristic) is faster in this regard than while maintaining safety.

\textbf{Time to reach feasibility and optimality.}
In our theoretical results, we derived bounds on the time to reach feasibility ($T_f$) and the time to reach $\eta$-optimality denoted by $T_{\eta-\text{opt}}$. These times are inversely proportional to the mixing ratio $\gamma$ and the optimality level $\eta$. To verify this we run the LIL method on the synthetic data setup with different values of $\gamma$ and observe $T_f$ (corresponding to $\alpha=5\%$) and $T_{\eta-\text{opt}}$. We report the mean and std. deviation of $T_f$ and  $T_{\eta-\text{opt}}$ over 10 runs with different random seeds (see Table \ref{tab:time-table-gamma}). We see both $T_f$ and  $T_{\eta-\text{opt}}$ decrease as $\gamma$ increases and  $T_{\eta-\text{opt}}$ is also inversely proportional to the optimality level. The corresponding FPR and TPR trends for each $\gamma$ are shown in Figure \ref{fig:sim-gamma-change}. These trends also corroborate our understanding of the effect of $\gamma$ on the time for feasibility and optimality.


\subsection{Distribution Shift Setting}
\label{sec:dist_shit_exp}

We now proceed to investigate the case where the distributions change at a specific time point. One of the motivations for the proposed system is to be able to adapt to the variations of the OOD data. As long as $\cD_{ood}$ does not change, any changes in the $\cD_{id}$ or the mixing ratio $\gamma$ do not affect the true FPR and therefore the optimal $\lambda^\star$. However, the true FPR does get affected when $\cD_{ood}$ changes. When there is a change in $\cD_{ood}$, estimating the FPR using all the acquired samples so far heavily biases the estimate towards scores that are far behind in time from the previous $\cD_{ood}$. This leads to a long delay before our unbiased estimate of FPR can catch up to the change. 

\begin{wrapfigure}{r}{0.45\linewidth}
  
  \centering
    {\includegraphics[scale=0.3]{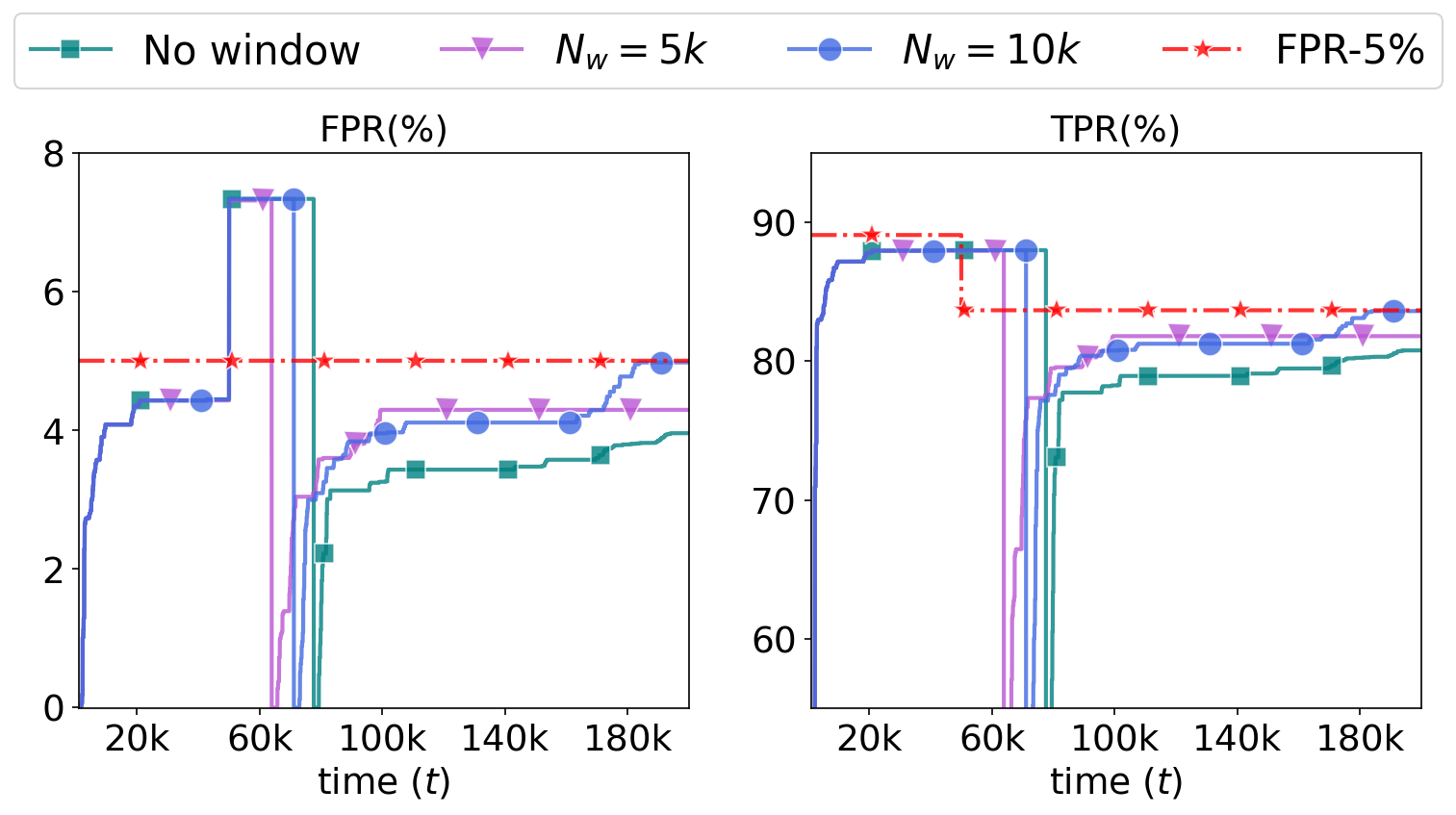}}
  
  \caption{
  {
    Effect of change detection and restart on the performance with different window sizes.
    }}
  \label{fig:sim-change_restart} 
  \vspace{-10pt}
\end{wrapfigure}

\textbf{Windowed approach. } To overcome this challenge, we propose a sliding window-based approach with the adaptive methods. The user can set a window size $N_w>0$ and the system will estimate the FPR and the confidence intervals using only the most recent $N_w$ samples that are determined as OOD by human feedback. This allows the system to more quickly adapt the threshold that is well aligned with the new distribution(s) of OOD samples. 
\begin{wrapfigure}{r}{0.45\textwidth}
  \centering
  
   {\includegraphics[scale=0.36]{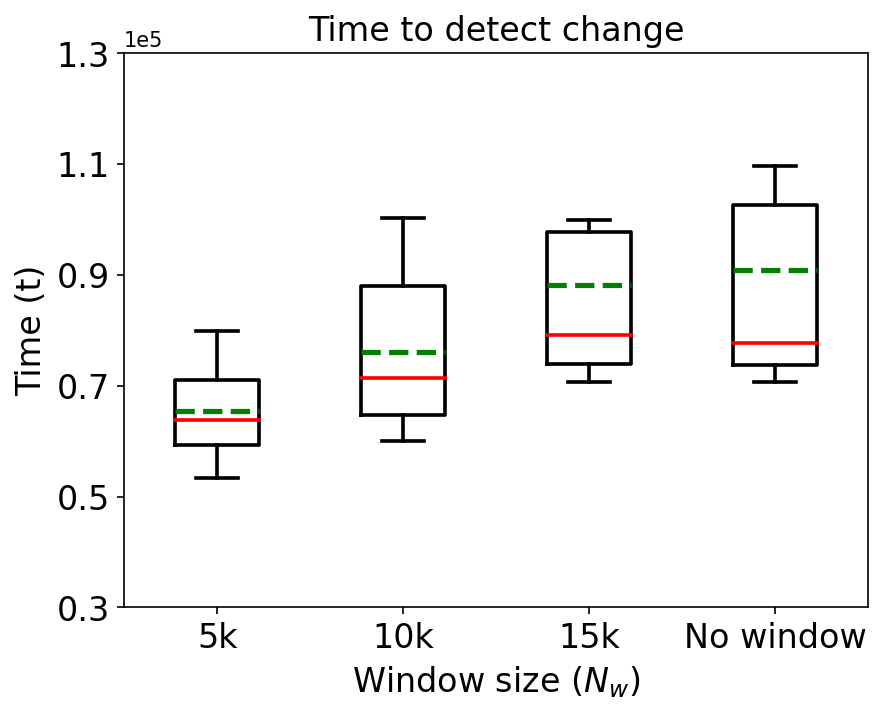}}

    \vspace{-5pt}

  \caption{
  {
    Change detection times with different window sizes on synthetic data. The median and mean are shown using solid and dashed lines respectively.
    }}
    \label{fig:sim-box-time-change} 
  \vspace{-15pt}
\end{wrapfigure}

\textbf{Change detection. } We use the following criteria to detect change, if $\FPRhat(\hat{\lambda}_{t-1},t) -\psi(t,\delta) > \alpha $ then it means the OOD distribution has changed. Here the $\FPRhat$ and $\psi$ are computed using the samples in the window. We use change detection only for the methods with confidence intervals. See the appendix for more details on this criteria.


The window size $N_w$ has trade-offs, i.e., using a smaller window will enable faster change detection and adapting to the new distribution but imposes limitations on the optimality as the smallest width of the confidence interval possible is inversely proportional to the window size. 
We verify claim C4 and study these trade-offs using experiments on synthetic and real data.

 For the synthetic data, we use the same ID and OOD distribution as above till $t=50k$ and change the OOD distribution to $\mathcal{N}_{ood}(\mu=-5,\sigma=4)$ at time $t=50k$. In the real data setting, we use the CIFAR-10 as ID and a mixture of MNIST, SVHN, and Texture datasets as OOD till $t=50k$ and a mixture of TinyImageNet, Places365, CIFAR-100 as OOD after $t=50k$.
We run TPR-95, and adaptive methods LIL, and Hoeffding in these settings with different choices of window sizes with 10 repeated runs using different random seeds and show the mean and std. deviations of the FPR and TPR in Figures \ref{fig:sim-shift-all} and \ref{fig:knn-cifar10_main}. 

We find that the windowed approach adapts more quickly compared to the method without a window (see Figures \ref{fig:sim_shift_win_no},\ref{fig:sim_shift_win_10k}). 


\textbf{Effect of window size.}
The results with various window sizes are shown in Figures \ref{fig:sim_shift_win_5k}, \ref{fig:sim_shift_win_10k} on synthetic data and in Figures \ref{fig:knn_shift_win_5k_cifar10_main}, \ref{fig:knn_shift_win_10k_cifar10_main} on real data. We also show the box plots of change detection times with LIL in Figure \ref{fig:sim-box-time-change}. As expected, we see that with smaller window size the change is detected earlier and the method is able to adapt faster. 
We also see that while smaller window helps in faster adaption but limits how close to the optimal TPR is achieved. 

To showcase the effect of \textbf{window-based approach in stationary setting}, 
we run the methods with a window size of 5k in the fixed distribution setting (see Figure \ref{fig:sim_no_shift_win_5k}). We observe similar behavior to the case without a window but with higher variance as the confidence intervals are limited by the window size.

\textbf{Restart after change detection.} In the previous experiments the algorithm kept using all the samples in its window even after detecting the change. The window can contain samples from the previous distribution till some time which leads to prolonged violation of FPR constraint. In safety critical applications one might apply a conservative approach i.e., restart the algorithm after detecting the change by emptying the window and resetting the threshold to $\Lambda_{\text{max}}$. We run the LIL based method with this variation using different window sizes and show the FPR and TPR of a \emph{median trend} in Figure \ref{fig:sim-change_restart}. To get the median trend we run the algorithm with 10 random seeds and pick the FPR, and TPR trends corresponding to the run having the median change detection time. We see that the FPR and TPR drop to 0 immediately after the change is detected and then the method recovers gradually. The variation without a window took a longer time to detect the change and hence lags behind the window-based methods in approaching optimality after restart. With a 5k window, it detected the change earlier but due to the small window, it is not able to reach optimality. The one with a window size of 10k is a good trade-off as it is neither too late in detecting the change nor lags too far in approaching the optimality.


\vspace{-5pt}
\section{Related Works}
\vspace{-5pt}

Recent years have seen significant advancements in OOD detection. Surveys by \citet{salehi2021unified} and \citet{yang2022generalized} compare outlier, anomaly, and novelty detection, framing generalized OOD detection. We explore related research in these domains.

\textbf{Out-of-Distribution detection.}
 Many recent works proposed methods to quantify a score (uncertainty) that can separate OOD and ID data points. These include confidence-based scores \citep{Bendale2015TowardsOS, hendrycks2017a}, Temperature scaling as used in ODIN \citep{liang2018enhancing},  Energy-based scores \citep{liu2020energy, Wang2021CanMC, wu2023energybased}. With-in the vein of post-hoc methods ReACT \citep{sun2021react} and ASH \citep{djurisic2023ASH} employ light-weight perturbations to activation functions at inference time. VIM \citep{wang2022vim}  generates a virtual OOD class logit and matches it with original logits using constant scaling. GradNorm \citep{huang2021importance}  GradOrth \citep{behpour2023gradorth} use gradient information.
 Distance-based methods detect samples as OOD if they are relatively far away from ID data. These include minimum Mahalanobis distance from class centroids \citep{lee2018simple, ren2023ood},  distances between representations learned using self-supervised learning \citep{tack2020csi, sehwag2021ssd}, non-parametric KNN distance  \citep{sun2022out} and distances on Hyperspherical embeddings \citep{ming2022cider}. Density-based approaches employ probabilistic models to characterize the density of ID data, positing that OOD points should appear in regions of low density. Some of the notable works in this vein are \citep{bishop1994novelty}, \cite{nalisnick2019hybrid} likelihood ratios \citep{ren2019LRT}, normalizing flows \citep{kirichenko2020normalizing}, invertible networks \citep{schirrmeister2020understanding}, input complexity \citep{Serrà2020Input}, confusion log probabilities 
 \citep{Winkens2020ContrastiveTF}, likelihood regret scores for variational autoencoders \citep{xiao2020Regret}.

\textbf{Training-time regularization} addresses OOD detection by reducing the confidence or increasing the energy on the OOD points using auxiliary OOD data for model training \citep{bevandić2018discriminative, geifman2019SelectiveNet, Mohseni2020SelfSupervisedLF, jeong2020maml, wei2022logit, Yang2021SemanticallyCO, lee2018OOD-calib, hendrycks2018deep, KatzSamuels2022TrainingOOD}. OOD data can be highly diverse and it is hard to collect or anticipate the kind of OOD data the model will see after deployment. Moreover, if there is some change in OOD data these methods will require expensive retraining of the model. 

\textbf{Controlled false OOD detection rate (FDR).}  PAC-style guarantees on the OOD detection aiming to minimize false detections of OOD points are provided in the supervised settings \citep{liu2018PAC-OOD}. There is an emerging line of  works on OOD detection using Conformal Prediction (CP)
\citep{vovk2005ConformalBook}. Various techniques use non-conformity measures (NCMs) to assess alignment with the training dataset. Examples of previously proposed NCMs include those based on random forests, ridge regression, support vector machines \citep{vovk2005ConformalBook}, k-nearest neighbors \citep{vovk2005ConformalBook, papernot2018deepkn}, and VAE with SVDD \citep{cai2020Realtime-OOD}. Conformal Anomaly Detection (ICAD) \citep{lax2011CAD,laxhammar2015inductive} combines statistical hypothesis testing, NCM scores and inductive conformal anomaly detection (ICAD) \citep{laxhammar2015inductive}for OOD detection. Building upon ICAD, the iDECODe method \citep{kaur2022IDECODE} introduces a novel NCM measure for OOD detection. \citet{bates2023ConformalOutlier} demonstrate issues with conformal p-values under the ICP framework, proposing a technique based on high-probability bounds to compute calibration-conditional conformal p-values.

We note the following key differences: (i) The definition of \emph{inliers (ID) as positives} and \emph{outliers (OOD) as negatives} in our work is the opposite of these works. Thus, controlling FPR in these works translates to guaranteeing a lower bound on TPR in our setting rather than controlling FPR (rate of predicting true OOD as ID).
These works are for offline settings -- fixed sets of inliers and calibration set are used to learn the inference rule which is used to make predictions on a test set such that the fraction of true ID predicted as OOD remains below a desired rate.  Our guarantees on FPR are valid for all time points ($\forall t>0$), while the paper provides guarantees for detection in a given test set of points. 

\textbf{OOD detection with test-time optimization.}  MEMO \citep{zhang2022memo}  uses multi-head models such that the trained model can be adapted with test time distribution shift. ETLT \citep{fan2022simple} trains a separate linear regression model during test time to calibrate the OOD scores as OOD scores. Other works such as \citep{wang2020tent} and \citep{iwasawa2021test} address the issue in a post-hoc manner without altering the trained model. We consider these methods complementary to our work as our framework can adopt the calibrated OOD scores and adapt the threshold safely with FPR control.

Generalized OOD detection 
 is a classical problem that has drawn several promising solutions from researchers across diverse fields including databases, networks, etc. For a comprehensive treatment of the topic, we refer the reader to the book on outlier analysis \citep{Aggarwal2017Book} and recent surveys on ood detection \citep{yang2021generalized, salehi2021unified}.

\textbf{Time uniform confidence sequences} also known as any time valid confidence sequences are confidence intervals designed for
streaming data settings, providing time-uniform and non-asymptotic coverage guarantees  \citep{darling1967CS, lai1976CS}.  Using these sequences gets rid of the requirement of selecting sample size (or stopping time) ahead of time. Due to these nice properties, they have been used in various applications including A/B Testing \citep{Johari2015CanIT, maharaj2023ABTesting}, Multi-armed bandits \citep{jamieson2014lil, szorenyi15bandits}. Iterated-logarithm confidence sequences for best-arm identification using mean estimation are utilized in \citep{jamieson2014lil, kaufmann2016Best-arm, zhao2016Adaptive}. Similar to \citep{darling1968Seq}, \citep{szorenyi15bandits} include a confidence sequence valid over quantiles and time, derived via a union bound applied to the DKW inequality \citep{DKW1956, massart1990DKW}. \citep{howard2022AnytimeValid} improve the upper bounds of \citep{szorenyi15bandits} by replacing the logarithmic
factor with an iterated-logarithm one and improving the constants. Similar to \citep{howard2022AnytimeValid}, in our setting, we require confidence sequences valid uniformly over all times and quantiles. However, their results are valid when samples are drawn i.i.d. and this assumption breaks in our setting due to importance sampling. Interestingly, despite this issue, we obtain a martingale sequence in our setting and build upon the work of \citep{balsubramani2015LIL} to obtain confidence sequences valid uniformly over time and thresholds.

\vspace{-4pt}
\section{Conclusion}
\vspace{-4pt}
We presented a mathematically grounded framework for human-in-the-loop OOD detection. By incorporating expert feedback and utilizing confidence intervals based on the Law of Iterated Logarithm (LIL), our approach maintains control over FPR while maximizing the TPR. The empirical evaluations on synthetic data and image classification tasks demonstrate the effectiveness of our method in maintaining FPR at or below 5\% while achieving high TPR. Our theoretical guarantees are valid for stationary settings. We leave the extension to non-stationary settings as future work.
\section{Acknowledgments}
 We thank Frederic Sala, Changho Shin, Dyah Adila, Jitian Zhao, Tzu-Heng Huang, Sonia Cromp, Albert Ge, Yi Chen,  Kendall Park, and Daiwei Chen for their valuable inputs. We thank the anonymous reviewers for their valuable comments and constructive feedback on our work.

\bibliographystyle{abbrvnat}

\bibliography{references}

\newpage 

\appendix

\section{Appendix}
\label{sec:appendix}

The appendix is organized as follows. We summarize the notation in Table \ref{table:glossary}. Then we give the proof of the main theorem (Theorem \ref{thm:main_theorem}) and the proofs of supporting lemmas. Further, we provide additional experiments and insights from them.

\subsection{Glossary} \label{appendix:glossary}
\label{sec:gloss}
The notation is summarized in Table~\ref{table:glossary} below. 
\begin{table*}[h]
\centering
\begin{tabular}{l l}
\toprule
Symbol & Definition \\
\midrule
$\mathcal{X}$ & feature space. \\
$\mathcal{Y}$ & label space, $\{+1,-1\}$, +1 for ID and -1 for OOD .\\
$\cD_{id},\cD_{ood}$ & distributions of ID and OOD points.\\
$\gamma$ & mixing ratio of OOD and ID distributions.\\
$\lambda$ & threshold for OOD classification.\\
$\FPR(\lambda)$ & population level false positive rate with threshold $\lambda$.\\
$\TPR(\lambda)$ & population level true positive rate with threshold $\lambda$.\\
$\FPRhat(\lambda,t)$ & empirical FPR at time $t$, adjusted to account for importance sampling (see eq. \eqref{eqn:EmpFPR}).\\
$\lambda^*$ & the optimal threshold for OOD classification s.t. $\FPR(\lambda)\le \alpha$ and $\TPR(\lambda)$ is maximized.\\
$\hat{\lambda}_t$ & the estimated threshold at round $t$.\\
$x_t,y_t$ & sample and the true label at time $t$ .\\
$g$&  OOD uncertainty quantification (score) function.\\
$s_u^{(o)}$ & score of $u^{th}$ OOD sample.\\
$i_u^{(o)}$ & indicator variable denoting whether $s_u^{(o)}$ was importance sampled or not.\\
$N_t^{(o)}$ & number of OOD points till time $t$.\\
$N_t^{(o,p)}$ & number of OOD points sampled using importance sampling until time $t$.\\
$\beta_t$ & it is equal to $N_t^{(o,p)}/N_t^{(o)}$.\\
$p$ & probability for importance sampling.\\
$\delta$& failure probability. \\
$\alpha$ & user given upper bound on $\FPR$ that the algorithm needs to maintain. \\
$\eta$ & the algorithm is in $\eta-$optimality if close $\FPR(\lambda^*)-\FPR(\hat{\lambda}_t)\le \eta$. \\
$\Lambda_{\min},\Lambda_{\max}$ & the minimum and maximum scores(thresholds) considered by the algorithm.\\
$\nu$ & discretization parameter for the interval $[\Lambda_{\min},\Lambda_{\max}]$ set by the user.\\
$\psi(t,\delta)$ &  LIL based confidence interval at time $t$.\\
\toprule
\end{tabular}
\caption{
	Glossary of variables and symbols used in this paper.
}
\label{table:glossary}
\end{table*}

\subsection{Proofs}
\label{subsec:proofs}

\textbf{Summary of the setting.} At each time $t$, we observe $x_t \overset{i.i.d}{\sim}(1-\gamma) \cD_{id} + \gamma \cD_{ood}$, and $s_t = g(x_t)$ is the corresponding score. If $s_t \le \hat{\lambda}_{t-1}$, then it is considered an OOD point and hence gets a human label for it and we get to know whether it is in fact OOD or ID. If $s_t > \hat{\lambda}_{t-1}$, then it is considered an ID point and hence gets a human label only with probability $p$. So, we get to know whether it is truly ID or not with probability $p$. Now we have to update the threshold, $\hat{\lambda}_t$, such that the $\FPR(\hat{\lambda}_t) \leq \alpha$ for all $t$, while trying to maximize $\TPR(\hat{\lambda}_t)$. Our approach is based on constructing an unbiased estimator of $\FPR(\lambda)$ using the OOD samples received till time $t$ and in conjunction with confidence intervals for $\FPR(\lambda)$ for at all thresholds $\lambda \in \Lambda$ that is valid for all times simultaneously. Together, at each time $t$, these give us a reliable upper bound on the true $\FPR(\lambda)$ for all $\lambda$ enabling us to find the smallest $\lambda$ such that the upper bound on $\FPR(\lambda)$ is at most $\alpha$. 
Let $S^{(o)}_t = \{s^{(o)}_1,\ldots s^{(o)}_{{N}_t^{(o)}}\}$ denote the scores of the points that have been truly identified as OOD points from human feedback and $I^{(o)}_t$ be the corresponding time points. We estimate the FPR as follows,
\begin{align}
\hspace{-8pt}
\FPRhat(\lambda,t) = \frac{1}{N_t^{(o)}} \sum_{u \in I^{(o)}_t} Z_u(\lambda),\ \text{ where }
 Z_{u}(\lambda) \ldef & \begin{cases} 
     \mathbf{1} (  s^{(o)}_{u} > \lambda), &   \text{ if } s^{(o)}_u \le \hat{\lambda}_{u-1} \\  \frac{1}{p}\mathbf{1}(s^{(o)}_{u}>\lambda), & \text{ w.p. } p \text{ if } s^{(o)}_u > \hat{\lambda}_{u-1} \\
    0, & \text{ w.p. } 1-p \text{ if } s^{(o)}_u > \hat{\lambda}_{u-1}
 \end{cases}.\label{eqn:EmpFPR}
 \end{align}

\textbf{Proof outline.}
To obtain the guarantees in Theorem~\ref{thm:main_theorem} we need confidence intervals $\psi(t,\delta)$ that are simultaneously valid with high probability for the FPR estimates at all time points and all thresholds. There is a rich line of work that provides tight confidence intervals valid for all times based on the Law of Iterated Logarithm (LIL)  ~\citep{khinchine1924LIL, kolmogorov1929LIL, smirnov1944LIL}. Non-asymptotic versions of LIL have been proved in various settings e.g. multi-armed bandits  \citep{jamieson2014lil} and for quantile estimation  \citep{howard2022AnytimeValid}. Roughly speaking, these bounds provide confidence intervals that are $\cO(\sqrt{\log \log (t)/t}$ and are known to be tight. However, most of them assume the samples to be i.i.d. In our setting,  our treatment of observing the human feedback is dependent on whether the score is above or below $\hat{\lambda}_{t-1}$ which itself is estimated using all the past data which creates dependence and 
prevents us from utilizing results developed for i.i.d. samples  ~\citep{howard2022AnytimeValid}. The main technical challenge is to show that the upper confidence bound in eq.~\eqref{eqn:psi-theory} holds for all time and all thresholds with dependent samples used in estimating the FPR in eq.~\eqref{eqn:EmpFPR}. We handle this by first showing that there is a martingale structure in our estimated FPR (eq.~\eqref{eqn:EmpFPR}). We then exploit this structure by using LIL results for martingales  ~\citep{balsubramani2015LIL}. A limitation of \citep{balsubramani2015LIL} is that it can only provide us confidence intervals valid for FPR estimate for a given threshold $\lambda$. However, we need intervals that are simultaneously valid for all $\lambda$ as well. Building upon the work  ~\citep{balsubramani2015LIL} we derive confidence intervals that are simultaneously valid for all $t$ and finitely many thresholds. Equation~\eqref{eqn:psi-theory} shows the $\psi(t,\delta)$ we obtain. 


 Next, we show that the above estimator $\FPRhat(\lambda,t)$ is indeed an unbiased of false positive rate $\FPR(\lambda)$.

\begin{lemma}
      Let $p>0$, $\FPRhat(\lambda,t)$ as defined in eq.~\eqref{eqn:EmpFPR} is an unbiased estimate of the true $\FPR(\lambda)$, i.e.,  $\E[\FPRhat(\lambda,t)] = \FPR(\lambda)$.
\end{lemma}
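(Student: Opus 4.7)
The plan is a direct importance sampling argument. I would first compute $\E[Z_u(\lambda)]$ for a fixed OOD index $u \in I_t^{(o)}$ by conditioning on the score $s_u^{(o)}$ and the history $\cF_{u-1}$ (which determines $\hat{\lambda}_{u-1}$), and then average over $u$. The two pieces of randomness in $Z_u$ are: (i) the draw of the OOD point itself, which gives $s_u^{(o)}$, and (ii) the independent Bernoulli$(p)$ coin used for importance sampling when $s_u^{(o)} > \hat{\lambda}_{u-1}$. All I need to show is that the importance weight $1/p$ exactly cancels the sampling probability $p$, and then the tower property finishes the argument.

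Concretely, I would split the computation of $\E[Z_u(\lambda)\mid s_u^{(o)}, \hat\lambda_{u-1}]$ into the two cases from the definition. If $s_u^{(o)} \le \hat\lambda_{u-1}$ there is no Bernoulli randomness and $Z_u(\lambda) = \mathbf{1}(s_u^{(o)} > \lambda)$ deterministically. If $s_u^{(o)} > \hat\lambda_{u-1}$, then $Z_u(\lambda) = \mathbf{1}(s_u^{(o)} > \lambda)/p$ with probability $p$ and $0$ with probability $1-p$, so the conditional expectation equals $p\cdot \mathbf{1}(s_u^{(o)} > \lambda)/p + (1-p)\cdot 0 = \mathbf{1}(s_u^{(o)} > \lambda)$. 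Both branches collapse to the same expression, i.e.
\[
\E[Z_u(\lambda) \mid s_u^{(o)}, \hat\lambda_{u-1}] \;=\; \mathbf{1}(s_u^{(o)} > \lambda).
\]

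Next, I would take a second conditional expectation, integrating out $s_u^{(o)}$. Since $u \in I_t^{(o)}$ means that time $u$ was revealed to be a truly OOD sample, the score $s_u^{(o)} = g(x_u)$ with $x_u \sim \cD_{ood}$ is independent of $\cF_{u-1}$ (and in particular of $\hat\lambda_{u-1}$), so
\[
\E[Z_u(\lambda) \mid \cF_{u-1}] \;=\; \E_{x\sim \cD_{ood}}[\mathbf{1}(g(x) > \lambda)] \;=\; \FPR(\lambda).
\]
Finally, I would condition on $N_t^{(o)}$ and $I_t^{(o)}$ and apply linearity, obtaining $\E[\FPRhat(\lambda,t)\mid N_t^{(o)}] = \tfrac{1}{N_t^{(o)}}\sum_{u\in I_t^{(o)}} \FPR(\lambda) = \FPR(\lambda)$, and then remove the outer conditioning (under the mild convention that the estimator is only defined when $N_t^{(o)} \ge 1$, which is handled by, e.g., restricting the statement to $t \ge t_0$ as in Theorem~\ref{thm:main_theorem}).

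I do not foresee a serious obstacle here; the only subtlety worth spelling out is that $\hat\lambda_{u-1}$ is itself a random, data-dependent quantity, so one must be careful to condition on it rather than treat it as fixed. The two-case computation above is precisely what makes the estimator unbiased uniformly over the random thresholds $\hat\lambda_{u-1}$, and this is the property that will later allow the martingale-based LIL bound in eq.~\eqref{eqn:psi-theory} to be invoked on the sequence $Z_u(\lambda) - \FPR(\lambda)$.
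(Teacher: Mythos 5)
Your proof is correct and rests on exactly the same key computation as the paper's: conditionally on the history $\cF_{u-1}$ (hence on the random threshold $\hat\lambda_{u-1}$), the importance weight $1/p$ cancels the Bernoulli$(p)$ sampling probability, giving $\E[Z_u(\lambda)\mid \cF_{u-1}] = \FPR(\lambda)$ in both branches. The only cosmetic difference is that the paper packages this as an induction on the number of OOD samples (peeling off the last term $Z_t$ and invoking the inductive hypothesis on the remaining $N_t^{(o)}-1$ terms), whereas you apply the tower property and linearity directly across all $u \in I_t^{(o)}$; the two are equivalent, with your version being arguably the cleaner presentation.
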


\begin{proof}
Let $i_t^{(o)}$ be the indicator variable denoting whether $s_t^{(o)}$ was sampled using importance sampling (i.e. $i_t^{(o)}=1$) or not (i.e. $i_t^{(o)}=0$). Denote the pair as  $r_t^{(o)} = (s_t^{(o)},i_t^{(o)})$ for brevity. The proof is by induction. Since the first sample is drawn without any importance sampling so, we have $\E_{r_1^{(o)}}[\FPRhat(\lambda,1)] = \FPR(\lambda)$. Now assume that $\E_{r_{t-1}^{(o)},\ldots, r_1^{(o)}}[\FPRhat(\lambda,t-1)] = \FPR(\lambda)$.

\begin{align*}
\E_{r_t^{(o)},r_{t-1}^{(o)},\ldots,r_{1}^{(o)}}[\FPRhat(\lambda,t)] &= \E \Big[ \frac{1}{N_t^{(o)}}\sum_{u\in I^{(o)}_t}Z_u(\lambda) \Big] \\
&= \E \Big[ \frac{Z_t(\lambda)}{N_t^{(o)}} + \frac{N_t^{(o)} -1}{N_t^{(o)}} \frac{1}{N_t^{(o)}-1}\sum_{u\in I^{(o)}_{t-1}}Z_u(\lambda) \Big] \\
&= \E \Big[ \frac{Z_t(\lambda)}{N_t^{(o)}} + \frac{N_t^{(o)} -1}{N_t^{(o)}} \FPRhat(\lambda, t-1) \Big] \\
&= \frac{1}{N_t^{(o)}} \Big[ \E [Z_t(\lambda)]  + (N_t^{(o)} -1) \cdot \E[\FPRhat(\lambda, t-1)] \Big] \\ 
&= \frac{1}{N_t^{(o)}} \Big [ \E[Z_t(\lambda)]  + (N_t^{(o)} -1) \cdot \FPR(\lambda) \Big] \\
&=\frac{1}{N_t^{(o)}}  \Big [ \E_{r_t^{(o)}|\hat{\lambda}_{t-1}}[Z_t(\lambda)] + (N_t^{(o)} -1) \cdot \FPR(\lambda) \Big ] \\
&= \frac{1}{N_t^{(o)}}  \Big [ \FPR(\lambda) + (N_t^{(o)} -1) \cdot \FPR(\lambda) \Big ] \\ 
&= \FPR(\lambda)
\end{align*}

\end{proof}

Having an unbiased estimator solves one part of the problem. In addition, we need confidence intervals on this estimate that are valid for anytime and for the choices of $\lambda$ considered. Due to the dependence between the samples, we cannot directly apply similar results developed for quantile estimation in the i.i.d. setting \citep{howard2022AnytimeValid}. Fortunately, part of this problem has been addressed in \citep{balsubramani2015LIL}, where they provide anytime valid confidence intervals when the estimators form a martingale sequence. We restate this result in the following lemma \ref{lem:lil-martingale} and then building upon this result, in the next lemma \ref{lem:fpr_conf_interval} we derive such confidence intervals for our setting. 
\begin{lemma}\label{lem:lil-martingale} (\citep{balsubramani2015LIL}) Let $\overline{M}_t$ be a martingale and suppose $|\overline{M}_t - \overline{M}_{t-1}|\le \rho_t$ for constants $\{\rho_t\}_{t>1}$, let $m_0 = \min_{t \ge 1} |\overline{M}_t|$. Fix any $\delta \in (0,1)$, and let $t_0 = \min\{u: \sum_{t=1}^u \rho_t^2 \ge 173\log(\frac{4}{\delta}) \}$ then,

\begin{equation}
    \P \bigg( \exists t\ge t_0 : |\overline{M}_t| \ge \sqrt{3 \Big ( \sum_{i=1}^t\rho_i^2 \Big ) \Big(2 \log \log \big(\frac{3\sum_{i=1}^t \rho_i^2}{m_0} \big) + \log \big(\frac{2}{\delta} \big) \Big) }\bigg) \le \delta
\end{equation}
\end{lemma}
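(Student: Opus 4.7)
The plan is to follow the standard finite-time martingale LIL blueprint: construct an exponential supermartingale from $\overline{M}_t$, apply Doob's maximal inequality on geometrically spaced ``epochs'' of the cumulative variance proxy $V_t := \sum_{i=1}^{t} \rho_i^2$, and then carry out a carefully weighted union bound across epochs whose tail mass reproduces the characteristic $\log\log$ correction. Since this lemma is quoted from \citet{balsubramani2015LIL}, the target is not to rederive the optimal constants but to sketch the skeleton that explains where each piece of the bound comes from.

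First I would invoke Hoeffding's lemma on the martingale differences. Because $|\overline{M}_t - \overline{M}_{t-1}| \le \rho_t$ almost surely and $\overline{M}$ is a martingale with respect to some filtration $\cF_t$, for every fixed $\theta \in \mathbb{R}$ we have $\E[\exp(\theta(\overline{M}_t - \overline{M}_{t-1})) \mid \cF_{t-1}] \le \exp(\theta^2 \rho_t^2 / 2)$. Hence $W_t^{\theta} := \exp\bigl(\theta \overline{M}_t - \tfrac{\theta^2}{2} V_t\bigr)$ is a nonnegative supermartingale with $W_0^{\theta} = 1$, and Doob's maximal inequality gives $\P(\sup_t W_t^{\theta} \ge 1/\eta) \le \eta$. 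Rearranging, with probability at least $1-\eta$ the event $\overline{M}_t \le \tfrac{\theta}{2} V_t + \tfrac{1}{\theta}\log(1/\eta)$ holds simultaneously for all $t$; symmetrizing with $-\overline{M}$ gives a two-sided statement at the price of a factor two in the failure probability.

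Next comes the peeling step. A single $\theta$ cannot produce a $\sqrt{V_t \log\log V_t}$ rate uniformly in $t$, so I would partition time into epochs $\cE_k := \{t : V_t \in [m_0 2^k, m_0 2^{k+1})\}$ for $k \ge 0$ and, on epoch $k$, tune $\theta_k \asymp \sqrt{2\log(1/\eta_k)/(m_0 2^{k+1})}$ to minimize the bound $\tfrac{\theta}{2} V_t + \tfrac{1}{\theta}\log(1/\eta_k)$. Choosing the epoch failure budgets $\eta_k \asymp \delta/(k+1)^2$ so that $\sum_k \eta_k \le \delta$ and applying the exponential bound inside each epoch yields, after simplification, a uniform bound of order $\sqrt{V_t\bigl(\log\log V_t + \log(1/\delta)\bigr)}$. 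The iterated-logarithm term emerges because $k \asymp \log_2(V_t/m_0)$ on epoch $k$, so $\log(1/\eta_k) \asymp 2\log k \asymp 2\log\log(V_t/m_0)$, which is exactly the shape appearing inside the square root in the statement.

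The main obstacle will be hitting the stated constants ($3$, $2$, $173$, etc.) cleanly. The condition $\sum_{i \le t_0}\rho_i^2 \ge 173\log(4/\delta)$ is exactly the threshold below which the discretization slack from epoch boundaries and the multiplicative looseness inside the $\log\log$ cannot be absorbed into the leading $\sqrt{3 V_t \cdot 2\log\log(3 V_t/m_0)}$ term; before that time no clean anytime expression is available. Rather than re-optimize those constants from scratch, I would cite the detailed bookkeeping in \citet{balsubramani2015LIL} and merely verify that the hypotheses here ($\overline{M}_t$ a martingale, bounded differences $\rho_t$, initial scale $m_0$, and the lower bound on $t_0$) line up with the ones required by that analysis.
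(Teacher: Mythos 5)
Your proposal correctly sketches the exponential-supermartingale / Doob's maximal inequality / geometric peeling argument that underlies the cited theorem, and ultimately defers to \citet{balsubramani2015LIL} for the constants. This is in effect the same approach as the paper, whose proof of this lemma is simply a citation to Theorem 4 of \citet{balsubramani2015LIL} with no further derivation.
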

\begin{proof}
This lemma is a restatement of theorem 4 in ~\citep{balsubramani2015LIL}. For proof details please see \citep{balsubramani2015LIL}.
\end{proof}

In the next lemma, we show that the sums of $Z_u(\lambda)$ form a martingale sequence, allowing us to apply the results from the above lemma (\ref{lem:lil-martingale}), and then we generalize it to all $\lambda$ in some finite set.
\begin{lemma}
\label{lem:fpr_conf_interval}
(Anytime valid confidence intervals on FPR) Let $X^{(o)}_t = \{x^{(o)}_1,\ldots x^{(o)}_{{N_t}^{(o)}}\}$ be the samples drawn from $D_{ood}$ till round $t$ and let $S^{(o)}_t = \{s^{(o)}_1,\ldots s^{(o)}_{{N_t}^{(o)}}\}$ be the scores of these points, let $c_t = 1-\beta_t + \frac{\beta_t}{p^2}$, $\beta_t = \frac{N_t^{(o,p)}}{N_t^{(o)}}$ and $N_t^{(o,p)}$ is the number of points sampled using importance sampling until time $t$ and $\nu \in (0, 1)$ is a discretization parameter set by the user. Let $\Lambda = \{\Lambda_{\min},\Lambda_{\min}+\nu,\ldots,\Lambda_{\max}\}$. Let $n_0 = \min\{u: c_u N^{(o)}_u \ge 173\log(\frac{4}{\delta}) \}$ and $t_0$ be such that $N_{t_{0}}^{(o)}\ge n_0$.
, then for any $\delta \in (0,1)$, 
 
\begin{equation}
    \P \bigg( \exists t\ge t_0 : \sup_{\lambda \in \Lambda} \FPRhat(\lambda,t) - \FPR(\lambda) \ge \psi(t,\delta) \bigg) \le \delta
\end{equation}
for,
\begin{equation}
  \psi(t,\delta) = \sqrt{\frac{ 3 c_t}{N_t^{(o)}} \bigg [ 2\log \log \Big ( \frac{3 c_t N_t^{(o)}}{2}\Big )  + \log \bigg ( \frac{2 |\Lambda|}{\delta} \bigg )\bigg ]} 
\end{equation}
\end{lemma}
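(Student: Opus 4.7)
The plan is to reduce the claim to a direct application of the martingale law of iterated logarithm (Lemma \ref{lem:lil-martingale}), combined with a union bound over the finite grid $\Lambda$.

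First, I would fix a single threshold $\lambda \in \Lambda$ and establish that the sequence
\[
\overline{M}_u(\lambda) := \sum_{v=1}^{u} \big( Z_v(\lambda) - \FPR(\lambda) \big),
\]
indexed by the number of observed OOD points $u = 1, 2, \ldots, N_t^{(o)}$, is a martingale with respect to the natural filtration $\{\mathcal{F}_u\}$ that records all past samples, scores, labels, and importance-sampling coin flips up through the $u$-th observed OOD point. The key point is that $\hat{\lambda}_{u-1}$ is $\mathcal{F}_{u-1}$-measurable, while the randomness that determines $Z_u(\lambda)$ conditional on $\mathcal{F}_{u-1}$ consists of (i) the fresh score $s_u^{(o)}$ drawn from $\mathcal{D}_{ood}$ and (ii) the independent importance-sampling Bernoulli on the event $\{s_u^{(o)} > \hat{\lambda}_{u-1}\}$. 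A direct case analysis, which is exactly the computation already performed in the unbiasedness lemma, shows $\mathbb{E}[Z_u(\lambda) \mid \mathcal{F}_{u-1}] = \FPR(\lambda)$, so $\overline{M}_u(\lambda)$ is a martingale.

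Next, I would bound the martingale increments. On the event $\{s_u^{(o)} \le \hat{\lambda}_{u-1}\}$ we have $Z_u(\lambda) \in \{0,1\}$, giving $|Z_u(\lambda) - \FPR(\lambda)| \le 1$; on the complementary event we have $Z_u(\lambda) \in \{0, 1/p\}$, giving $|Z_u(\lambda) - \FPR(\lambda)| \le 1/p$. Accumulating over the first $N_t^{(o)}$ observed OOD points and noting that $N_t^{(o,p)}$ of them fall in the second case yields
\[
\sum_{u=1}^{N_t^{(o)}} \rho_u^2 \le (N_t^{(o)} - N_t^{(o,p)}) + \frac{N_t^{(o,p)}}{p^2} = c_t N_t^{(o)},
\]
which is precisely the quantity appearing inside $\psi(t,\delta)$. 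I would then invoke Lemma \ref{lem:lil-martingale} with failure probability $\delta/|\Lambda|$: the hypothesis $\sum \rho_u^2 \ge 173\log(4|\Lambda|/\delta)$ is secured by the definition of $n_0$ and $t_0$ (with $\delta$ replaced by $\delta/|\Lambda|$), so dividing the resulting bound by $N_t^{(o)}$ gives, for each fixed $\lambda$,
\[
\mathbb{P}\!\left(\exists\, t \ge t_0 :\ \FPRhat(\lambda,t) - \FPR(\lambda) \ge \psi(t,\delta)\right) \le \delta/|\Lambda|.
\]
A union bound over the finite grid $\Lambda$ closes the argument and absorbs the factor $|\Lambda|$ into the logarithmic term of $\psi$.

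The main obstacle I anticipate is that the per-step bound $\rho_u$ is itself random — it depends on the importance-sampling indicator $i_u^{(o)}$ revealed at time $u$ — whereas Lemma \ref{lem:lil-martingale} as stated uses deterministic constants $\rho_u$. A crude fix is to uniformly set $\rho_u = 1/p$, but this loses the $\beta_t$-dependent tightening and does not recover the paper's $c_t$. The correct path is to observe that the underlying exponential-supermartingale proof of the martingale LIL goes through when $\rho_u$ is merely required to dominate the conditional range (or conditional variance) of the increment, so the sharper data-dependent sum $\sum \rho_u^2 = c_t N_t^{(o)}$ remains valid inside the bound. A secondary technicality is the constant $2$ appearing in the inner $\log\log$ of $\psi$, corresponding to the quantity $m_0 = \min_t |\overline{M}_t|$ in Lemma \ref{lem:lil-martingale}: since $\log\log$ is monotone, it is safe to replace $m_0$ by any deterministic lower bound, and a suitable numerical constant delivers exactly the form of $\psi(t,\delta)$ stated in the lemma.
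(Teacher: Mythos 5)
Your proof takes essentially the same route as the paper: establish the martingale structure of $\overline{M}_t(\lambda)$ via the conditional-expectation calculation from the unbiasedness lemma, bound the increments by $1$ or $1/p$ according to the importance-sampling indicator so that $\sum_u \rho_u^2 = c_t N_t^{(o)}$, invoke the Balsubramani martingale LIL for a fixed $\lambda$, and union bound over the finite grid $\Lambda$. You are also right to flag that the increment bound $\rho_u$ is adapted rather than predictable as literally required by the restated Lemma~\ref{lem:lil-martingale} -- a point the paper's proof passes over without comment -- and the fix you sketch (that the underlying exponential-supermartingale argument tolerates $\rho_u$ depending on the revealed $i_u^{(o)}$) is the right one.
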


\begin{proof}
First, we show that we have a martingale sequence as follows,

Let $M_t(\lambda) =  \sum_{u=1}^{N_t^{(o)}} Z_u(\lambda)$, and consider the centered random variables,
$$\overline{M}_t(\lambda) = M_t(\lambda) - \E[M_t(\lambda)] \quad \text{and}\quad \overline{Z}_t(\lambda) = Z_t(\lambda) - \FPR(\lambda)$$

Let $\cF_{t}$ be the $\sigma-$algebra of events till time $t$ i.e. $(s_1^{(o)},i_1^{(o)}),\ldots,(s_{t-1}^{(o)},i_{t-1}^{(o)}), (s_t^{(o)},i_t^{(o)})$. 

It is easy to see that $\E[\overline{M}_t]\le \frac{1}{p} < \infty$ and $\overline{M}_t$ is $\cF_{t}$-measurable for all $t>1$.
Further, we can see, 
\[ \E \big[ {\overline{M}}_t(\lambda) | \cF_{t-1} \big] = \E[\overline{Z}_{t}(\lambda) + \overline{M}_{t-1}(\lambda)|\cF_{t-1}] = \E[\overline{Z}_{t}(\lambda)|\cF_{t-1}]  +\E[ \overline{M}_{t-1}(\lambda) |\cF_{t-1}] = \overline{M}_{t-1}(\lambda) \]
Since, $\E[\overline{Z}_{t}(\lambda)|\cF_{t-1}] = 0$ and $\E[ \overline{M}_{t-1}(\lambda) |\cF_{t-1}] = \overline{M}_{t-1}(\lambda) $. Thus we have that  $\overline{M}_{t}$ is a martingale sequence. Further, we also have the following,

\[ |\overline{M}_t(\lambda) - \overline{M}_{t-1}(\lambda)| \le \begin{cases} 1 \quad \text{if } i_t^{(o)}=0 \\
\frac{1}{p}  \quad \text{if } i_t^{(o)}=1 
\end{cases}  \]

Let $\beta_t \in (0,1)$ be the fraction of OOD points sampled using probability $p$ till round $t$. Let $N_t^{(o)}$ be the total number of points OOD points sampled till round $t$ and $N_t^{(o,p)}$ be the points sampled from importance sampling, then $\beta_t = \frac{N_t^{(o,p)}}{N_t^{(o)}}$.

Let $c_t = 1-\beta_t + \frac{\beta_t}{p^2}$. We know $p$ and the number of points sampled with importance sampling, without importance sampling we know$\beta_t, c_t$ are at time $t$. Applying lemma \ref{lem:lil-martingale} we get the following result for a given $\lambda$,
\begin{equation}
    \P \bigg( \exists t\ge t_0 : \overline{M}_t(\lambda) \ge \sqrt{3 \Big ( c_t N_t^{(o)} \Big ) \Big(2 \log \log \big(3c_t N_t^{(o)} \big) + \log \big(\frac{2}{\delta} \big) \Big) }\bigg) \le \delta
\end{equation}

\begin{equation}
    \P \bigg( \exists t\ge t_0 : \FPRhat(\lambda,t) -\FPR(\lambda,t) \ge \sqrt{\frac{3  c_t  }{N_t^{(o)}}\Big(2 \log \log \big(3c_t N_t^{(o)} \big) + \log \big(\frac{2}{\delta} \big) \Big) }\bigg) \le \delta
\end{equation}
Doing the union bound for the failure probability over all $\lambda \in \Lambda$, (where $|\Lambda|<\infty$) gives us the result.

\end{proof}
Our performance guarantees in the main theorem \ref{thm:main_theorem} are based on $\psi(t,\delta)$ becoming smaller than certain values. In the next lemma we derive bound on $N_t^{(o)}$ such that $\psi(t,\delta)$ is at most $\mu$ and use it in the proof of the main theorem \ref{thm:main_theorem}.

\begin{lemma}
\label{lem:psi_invert}
  Let $\psi(t,\delta) = \sqrt{\frac{3 c_t}{N_t^{(o)}} \bigg ( 2\log \log \Big ( 3 c_t N_t^{(o)}\Big )  + \log \Big (  \frac{2|\Lambda|}{\delta}\Big ) \bigg )}, $ and let there be a constant $C_0$ and time $T_0$, such that $\beta_t \le C_0p^2$ for all $t\ge T_0$ (worst case $T_0=1$ and $C_0=1/p^2$). Then $\psi(t,\delta)\le \mu$ for any $t>T_{\mu}>T_0$ such that $N_{T_\mu}^{(o)} = \frac{10(C_0 +1)}{\mu^2}\log \Big( \frac{|\Lambda|}{\delta}\log(\frac{5(C_0 +1)}{\mu}) \Big) .$
\end{lemma}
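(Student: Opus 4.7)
The plan is to invert the inequality $\psi(t,\delta)\le \mu$ into an explicit lower bound on $N_t^{(o)}$ and then check that the prescribed value $N_0 := \tfrac{10(C_0+1)}{\mu^2}\log\!\bigl(\tfrac{|\Lambda|}{\delta}\log\tfrac{5(C_0+1)}{\mu}\bigr)$ suffices. First I would eliminate $c_t$: the hypothesis $\beta_t \le C_0 p^2$ gives $c_t = 1 + \beta_t(p^{-2}-1) \le 1 + \beta_t/p^2 \le 1 + C_0$ for every $t \ge T_0$. Since $\psi^2(t,\delta)$ is monotone increasing in $c_t$, it is enough to verify the condition with $c_t$ replaced everywhere by the constant $K := 1 + C_0$. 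Squaring and rearranging then reduces the claim to the cleaner sufficient condition
\[
N_t^{(o)} \;\ge\; \frac{3K}{\mu^2}\Bigl[\,2\log\log(3K N_t^{(o)}) + \log\!\tfrac{2|\Lambda|}{\delta}\Bigr].
\]

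Setting $B := |\Lambda|/\delta$, $X := 5K/\mu$, and $A := \log(B \log X)$, so that $N_0 = 10KA/\mu^2$ and $3K N_0 = \tfrac{6}{5} X^2 A$, direct substitution shows that $N_0$ meets this condition iff
\[
2\log\log\!\bigl(\tfrac{6}{5} X^2 A\bigr) + \log(2B) \;\le\; \tfrac{10}{3}\bigl(\log B + \log\log X\bigr).
\]
This is the inequality I would verify. The core step is controlling the double logarithm on the left. Using $\log(ab) = \log a + \log b$ together with $\log(a+b) \le \log a + \log b$ (valid whenever $a,b\ge 2$), I would chain the bounds $\log(\tfrac{6}{5}X^2 A) \le 2\log X + \log A + O(1)$, then $\log A = \log(\log B + \log\log X) \le \log\log B + \log\log\log X + O(1)$, and finally $\log\log(\tfrac{6}{5}X^2 A) \le \log\log X + O(1)$ in all reasonable parameter regimes. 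Substituting back, and moving the $\log B$ from $\log(2B)$ to the right, the displayed inequality collapses to a linear comparison in which $\log B$ and $\log\log X$ carry coefficients $0$ and at most $2$ on the left versus $\tfrac{7}{3}$ and $\tfrac{10}{3}$ on the right, leaving ample room to absorb the $O(1)$ remainders.

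Finally, since $N \mapsto \tfrac{3K}{\mu^2}[2\log\log(3KN) + \log(2B)]$ grows strictly sublinearly in $N$ (its derivative vanishes as $N \to \infty$ once $3KN \ge e^e$), the inequality verified at $N = N_0$ propagates to every $N \ge N_0$, so $\psi(t,\delta) \le \mu$ holds for all $t > T_\mu$. The main obstacle is constant bookkeeping through the nested logarithms: the prefactor $10$ (in place of a naive $3$) in the definition of $N_0$, together with the inner constant $5$ that appears in the definition of $X$, are calibrated precisely so that the $O(1)$ slack from the $\log(a+b)\le \log a+\log b$ manipulations and the $\log\log\log$ terms are absorbed without imposing additional smallness conditions on $\mu$, $\delta$, or $|\Lambda|$.
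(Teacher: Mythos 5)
Your plan follows the same skeleton as the paper's: bound $c_t\le K:=C_0+1$ on $t\ge T_0$, square, rewrite $\psi^2(t,\delta)\le\mu^2$ as a lower bound on $N_t^{(o)}$, substitute the prescribed $N_0$, and reduce to a scalar inequality in nested logarithms. Your closing monotonicity observation (the RHS grows sublinearly so the inequality, once true at $N_0$, persists) is a nice addition; the paper never addresses why the bound holds for \emph{all} $t>T_\mu$, so in that respect you patch a genuine omission.

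The problem is that the part you label ``the main obstacle'' is in fact the entire content of the lemma, and your sketch of it does not go through as stated. The step
\[
\log A=\log\bigl(\log B+\log\log X\bigr)\;\le\;\log\log B+\log\log\log X+O(1)
\]
uses $\log(a+b)\le\log a+\log b$, which requires $a,b\ge 2$. Here $b=\log\log X\ge 2$ forces $X\ge e^{e^2}\approx 1618$, i.e.\ $\mu\lesssim 5K/1618$; for any $\mu$ of practical size the triple logarithm $\log\log\log X$ is negative or undefined and the chain breaks, and the $O(1)$ you append cannot be a fixed constant because it would have to absorb a quantity that diverges to $-\infty$ as $\mu\uparrow 1$. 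The paper avoids this by a different set of elementary bounds ($\log x\le x$ and $\log(ax)\le a\log x$ for $a,x>2$), keeping the constants $b_1,b_2,b_3$ in $N_{T_\mu}^{(o)}=\tfrac{b_1}{\mu^2}\log(\tfrac{a_2}{b_3\delta}\log\tfrac{b_2}{\mu})$ symbolic throughout, collapsing $\psi^2(T_\mu,\delta)$ to $\tfrac{12\mu^2 a_1^2 b_3}{b_2^2}$, and only at the end solving $12a_1^2b_3\le b_2^2$ together with five explicit side conditions (which they note require $|\Lambda|\ge 10$). That is the bookkeeping you deferred. (Separately, you inherited an inconsistency in the paper: with $a_1=3(C_0+1)$ the paper's chosen $b_1=10a_1,\ b_2=5a_1$ produce $30(C_0+1)$ and $15(C_0+1)$, not the $10(C_0+1)$ and $5(C_0+1)$ in the lemma statement; the target you set yourself is therefore tighter by a factor of $3$ than what the paper actually verifies, which makes the unquantified ``$O(1)$ slack'' claim even less safe.)
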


\begin{proof}
First we write a simplified form of $\psi(t,\delta)$ for all $t>T_0$ as follows,
\[ \psi(t,\delta) = \sqrt{\frac{3 (C_0+1)}{N_t^{(o)}} \bigg ( 2\log \log \Big ( 3 (C_0+1) N_t^{(o)}\Big )  + \log \Big (  \frac{2|\Lambda|}{\delta}\Big ) \bigg )}\]
In the above equation we used the bound on $\beta_t \le C_0 p^2$ in the equation $c_t = 1- \beta_t + \beta_t/p^2$ leading to $c_t \le C_0 +1$,
Now, for brevity let $a_1 = 3(C_0 +1)$ and $a_2 = 2|\Lambda|$ and rewrite $\psi(t,\delta)$ as follows,
\[ \psi^2(t,\delta) = \frac{ a_1}{N_t^{(o)}} \bigg ( 2\log \log \Big ( a_1 N_t^{(o)}\Big )  + \log \Big (  \frac{a_2}{\delta}\Big ) \bigg )
 \le  \frac{ 2a_1}{N_t^{(o)}} \bigg ( \log \Big ( \frac{a_2}{\delta} \log \Big ( a_1 N_t^{(o)}\Big ) \Big ) \bigg ) \]
We want to find $N_t^{(o)}$ such that $\psi^2(t,\delta) \le \mu^2$. It is difficult to directly invert this function. To get a bound on $N_t^{(o)}$ we first assume the following form for it with unknown constants $b_1,b_2,b_3>0$ and then figure out the constants by simplifying $\psi^2(N_t^{(o)})$ and constraining it to be at most $\mu^2$. 
\[ \text{ Let } N^{(o)}_{T_{\mu}} = \frac{b_1}{\mu^2} \log \bigg ( \frac{a_2}{b_3\delta} \log \big ( \frac{b_2}{\mu} \big )\bigg )\]

\begin{align*}
   \psi^2(T_\mu,\delta)  &\le \frac{2a_1}{N^{(o)}_{T_{\mu}}} \log \Big [\frac{a_2}{\delta} \log (a_1N^{(o)}_{T_{\mu}}) \Big ] \\
     &= \frac{2 a_1\mu^2}{ b_1 \log \Big ( \frac{a_2}{b_3\delta} \log \big ( \frac{b_2}{\mu} \big )\Big )} \log \bigg [ \frac{a_2}{\delta}  \log \bigg \{  \frac{a_1 b_1}{\mu^2}  \log \bigg ( \frac{a_2}{b_3\delta}  \log \Big ( \frac{b_2}{\mu} \Big ) \bigg ) \bigg \} \bigg ] \\
   &\overset{(i)}{\le} \frac{2 a_1\mu^2}{ b_1 \log \Big ( \frac{a_2}{b_3\delta} \log \big ( \frac{b_2}{\mu} \big )\Big )}  \log \bigg [ \frac{a_2}{\delta} \log \bigg \{  \frac{a_1 b_1}{\mu^2}  \log \bigg ( \frac{a_2}{b_3\delta}  \frac{b_2}{\mu}  \bigg ) \bigg \} \bigg ] \\
   &= \frac{2 a_1\mu^2}{ b_1 \log \Big ( \frac{a_2}{b_3\delta} \log \big ( \frac{b_2}{\mu} \big )\Big )}  \log \bigg [ \frac{a_2}{\delta}  \log \bigg \{  \frac{a_1 b_1}{\mu^2}  \log \bigg ( \frac{a_2 b_2}{b_3\delta \mu} \bigg ) \bigg \} \bigg ] \\
   & \overset{(ii)}{\le} \frac{2 a_1\mu^2}{ b_1 \log \Big ( \frac{a_2}{b_3\delta} \log \big ( \frac{b_2}{\mu} \big )\Big )}  \log \bigg [ \frac{a_2}{\delta}  \log \bigg \{  \frac{a_1 b_1}{\mu^2}   \frac{a_2 b_2}{b_3\delta \mu}  \bigg \} \bigg ] \\
   &=\frac{2 a_1\mu^2}{ b_1 \log \Big ( \frac{a_2}{b_3\delta} \log \big ( \frac{b_2}{\mu} \big )\Big )}  \log \bigg [ \frac{a_2}{\delta} \log \bigg \{  \frac{a_1 b_1}{\mu^2}    \frac{a_2 b_2}{b_3\delta \mu}  \bigg \} \bigg ] \\
   &=\frac{2 a_1\mu^2}{ b_1 \log \Big ( \frac{a_2}{b_3\delta} \log \big ( \frac{b_2}{\mu} \big )\Big )} \log \bigg [ \frac{a_2}{\delta}  \log \bigg \{  \frac{a_1 b_1 a_2 }{b_3 b_2^2 \delta}    \Big ( \frac{b_2}{\mu} \Big )^3  \bigg \} \bigg ] \\
   &\overset{(iii)}{\le}\frac{2 a_1\mu^2}{ b_1 \log \Big ( \frac{a_2}{b_3\delta} \log \big ( \frac{b_2}{\mu} \big )\Big )} \log \bigg [ \frac{a_2}{\delta}  \frac{a_1 b_1 a_2 }{b_3  b_2^2 \delta}  \log \bigg \{   \Big ( \frac{b_2}{\mu} \Big )^3 \bigg \}    \bigg ] \\
   &= \frac{2 a_1\mu^2}{ b_1 \log \Big ( \frac{a_2}{b_3\delta} \log \big ( \frac{b_2}{\mu} \big )\Big )}  \log \bigg [ \frac{a_2}{\delta}  \frac{ 3 a_1 b_1 a_2 }{b_3  b_2^2 \delta} \log    \Big ( \frac{b_2}{\mu} \Big )   \bigg ] \\
   &=\frac{2 a_1\mu^2}{ b_1 \log \Big ( \frac{a_2}{b_3\delta} \log \big ( \frac{b_2}{\mu} \big )\Big )}  {\log \bigg [  \Big ( \frac{a_2}{b_3\delta} \Big)^2} { \frac{3 a_1 b_1 b_3 }{b_2^2} \log    \Big ( \frac{b_2}{\mu} \Big ) } {  \bigg ] }\\
   &{\overset{(iv)}{\le}} \frac{2 a_1\mu^2 {\frac{3 a_1 b_1 b_3 }{b_2^2}}}{ b_1 \log \Big ( \frac{a_2}{b_3\delta} \log \big ( \frac{b_2}{\mu} \big )\Big )}  {\log \bigg [  \Big ( \frac{a_2}{b_3\delta} \Big)^2 } { \log    \Big ( \frac{b_2}{\mu} \Big ) } {  \bigg ] }\\
   & {\overset{(v)}{\le}} \frac{ 2 a_1 \mu^2 \cdot {2}  {\frac{3 a_1 b_1 b_3 }{b_2^2}}}{ b_1 \log \Big ( \frac{a_2}{b_3\delta} \log \big ( \frac{b_2}{\mu} \big )\Big )}  {\log \bigg [   \frac{a_2}{b_3\delta} } {  \log    \Big ( \frac{b_2}{\mu} \Big ) } { \bigg ]} \\
   &=\frac{12\mu^2 a_1^2 b_3}{b_2^2}  .
\end{align*}

The inequalities ${(i)},{(ii)}$ follow from $\log(x)\le x $ for any $x>0$. 

The inequality ${(iii)}$ comes from $\log(ax) \le a \log(x)$ for any $a>2,x>2$. We use $a=\frac{a_1b_1 a_2}{b_3 b_2^2 \delta}$ and $x=\Big (\frac{b_2}{\mu} \Big )^3$,
this enforces the following constraints,
 \begin{equation}
 \label{eq:T_up_c1}
     \frac{b_2}{\mu} > 2^{1/3}
 \end{equation}
 \begin{equation}
 \label{eq:T_up_c2}
     \frac{a_1b_1 a_2}{b_3 b_2^2 \delta} > 2
 \end{equation}
 For ${(iv)}$ we again use $\log(ax) \le a \log(x)$ with $a=\frac{3a_1b_1b_3}{b_2^2}$ and $x=  \big ( \frac{a_2}{b_3\delta} \big)^2  \log    \big ( \frac{b_2}{\mu} \big ) $, this enforces the following constraints,
  \begin{equation}
  \label{eq:T_up_c3}
     \frac{3a_1b_1b_3}{b_2^2} > 2
 \end{equation}
 \begin{equation}
 \label{eq:T_up_c4}
    \big ( \frac{a_2}{b_3\delta} \big)^2  \log    \big ( \frac{b_2}{\mu} \big ) > 2
 \end{equation}

 Lastly, ${(v)}$ follows by using $\log(x^a y) \le a \log(xy)$ for any $x>0,a>1,y>1$. For this we use $x=\frac{a_2}{b_3\delta}$ and $y=\log(\frac{b_2}{\mu})$, leading the following constraints,

 \begin{equation}
 \label{eq:T_up_c5}
   \log(\frac{b_2}{\mu}) > 1
 \end{equation}

For $\psi^2(T_{\mu}) \le \mu^2$, we need 
\begin{equation}
\label{eq:T_up_c6}
  12 a_1^2b_3 \le b_2^2
\end{equation}

Let $b_3 =2 , b_1= 10 a_1, b_2 = 5 a_1$ then the constraints \ref{eq:T_up_c1},\ref{eq:T_up_c2},\ref{eq:T_up_c3},\ref{eq:T_up_c4},\ref{eq:T_up_c5} and \ref{eq:T_up_c6} are satisfied ( when $|\Lambda|\ge 10$ ) for any $\mu \in (0,1), \delta \in (0,1)$. Thus we have,

\begin{equation}
  \psi(T_\mu,\delta) \le \mu \, \text{ for } N_{T_\mu} = \frac{10(C_0 +1)}{\mu^2}\log \Big( \frac{|\Lambda|}{\delta}\log(\frac{5(C_0 +1)}{\mu}) \Big)
\end{equation}

\end{proof}

\begin{lemma}
\label{lem:num_tosses}
 Let the  data points $\{x_t\}_{t\ge 1}$ be independent draws from the mixture distribution $(1-\gamma)\cD_{id} + \gamma \cD_{ood}$,  and $N_t^{(o)}$ be the number of OOD points received till time $t$ from this distribution, then for any $\delta \in (0,1)$ for any $t \ge T_k$ we have $N_t^{(o)}\ge k$ w.p. $1-\delta$, where $T_{k}$ is given as follows, 
 \begin{equation}
     T_k = \frac{2k}{\gamma} + \frac{1}{\gamma^2}\log(\frac{1}{\delta}).
 \end{equation}
\end{lemma}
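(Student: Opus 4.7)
The plan is to recognize $N_t^{(o)}$ as a sum of i.i.d.\ Bernoulli random variables and apply a one-sided concentration inequality. Since each $x_t$ is drawn independently from $(1-\gamma)\cD_{id} + \gamma \cD_{ood}$, the indicator $B_i := \mathbf{1}\{x_i \sim \cD_{ood}\}$ is Bernoulli$(\gamma)$, so $N_t^{(o)} = \sum_{i=1}^{t} B_i$ has mean $\gamma t$. Moreover, because $N_t^{(o)}$ is monotone non-decreasing in $t$, it suffices to establish $\Prob[N_{T_k}^{(o)} \ge k] \ge 1-\delta$; the ``for all $t \ge T_k$'' statement then follows immediately.

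The next step is to apply Hoeffding's inequality in its one-sided form: for any $s > 0$,
\begin{equation*}
\Prob\bigl[N_t^{(o)} \le \gamma t - s\bigr] \le \exp\bigl(-2 s^2 / t\bigr).
\end{equation*}
Setting $t = T_k$ and $s = \gamma T_k - k$ (which is positive because $T_k \ge 2k/\gamma$ implies $\gamma T_k \ge 2k > k$), this gives $\Prob[N_{T_k}^{(o)} \le k] \le \exp\bigl(-2(\gamma T_k - k)^2 / T_k\bigr)$, and the task reduces to verifying that the exponent is at least $\log(1/\delta)$, i.e.,
\begin{equation*}
\frac{2(\gamma T_k - k)^2}{T_k} \;\ge\; \log\!\tfrac{1}{\delta}.
\end{equation*}

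The only bookkeeping step is the algebraic verification of this inequality, which I expect to be the main (and only) technical hurdle, though it is routine. Writing $a = k$ and $b = \log(1/\delta)/\gamma$, the definition of $T_k$ gives $\gamma T_k - k = a + b$ and $T_k = (2a+b)/\gamma$, so the inequality to check becomes
\begin{equation*}
\frac{2\gamma(a+b)^2}{2a+b} \;\ge\; \gamma b,
\quad\text{equivalently}\quad (a+b)^2 \ge \tfrac{1}{2} b(2a+b) = ab + \tfrac{1}{2}b^2.
\end{equation*}
Expanding the left side yields $a^2 + 2ab + b^2 \ge ab + \tfrac12 b^2$, which rearranges to $a^2 + ab + \tfrac12 b^2 \ge 0$ and is trivially true for $a,b \ge 0$. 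Combining this with Hoeffding gives $\Prob[N_{T_k}^{(o)} < k] \le \delta$, and monotonicity of $N_t^{(o)}$ extends the conclusion to every $t \ge T_k$.
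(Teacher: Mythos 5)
Your proof is correct and uses the same key tool as the paper (one-sided Hoeffding applied to the Bernoulli$(\gamma)$ indicators of OOD arrival). The only organizational difference is cosmetic: the paper derives $T_k$ by solving the quadratic $\gamma t - \sqrt{(t/2)\log(1/\delta)} = k$ via the substitution $t = u^2$ and then upper-bounding the root, whereas you plug the stated $T_k$ directly into the Hoeffding tail and verify the resulting inequality algebraically; you also make the monotonicity-of-$N_t^{(o)}$ step explicit (needed to pass from $t = T_k$ to all $t \ge T_k$), which the paper leaves implicit.
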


\begin{proof} 
We want to find $t$ such that $N_t^{(o)} \ge k $
w.p. $1-\delta$. This is the same as finding the number of coin tosses of a coin with bias $\gamma$ so that the number of heads observed is at least $k$. Applying Hoeffding's inequality gives us the following w.p. $1-\delta$,
\[N_t^{(o)} \ge \gamma t - \sqrt{\frac{t}{2}\log(\frac{1}{\delta})} .\]
Equating the r.h.s. above with $k$ and solving for $t$ will give us the desired bound on $t$. Note that it is enough to have an upper bound on $t$ that satisfies the following and then use that upper bound as $T_k$.
\[ \gamma t - \sqrt{\frac{t}{2}\log(\frac{1}{\delta})} = k.\]

To simplify the calculations, let $c = \sqrt{\frac{1}{2}\log\frac{1}{\delta}}$ and let $t=u^2$ then we have the following quadratic equation,
\[ \gamma u^2 -c u -k = 0.\]
Considering the larger of the two solutions,
\[ u = \frac{c + \sqrt{c^2 + 4k\gamma}}{2\gamma}.\]
Using the fact that for any $a,b \ge 0$, $\sqrt{a+b} \le \sqrt{a} + \sqrt{b}$,
\[ u \le \frac{c+ \sqrt{c^2} +  \sqrt{4k\gamma}}{2\gamma} = \frac{2c + 2\sqrt{k\gamma}}{2\gamma} =\frac{c}{\gamma} + \sqrt{\frac{k}{\gamma}}.\]
Lastly, using $(a+b)^2 \le 2a^2 + 2b^2$ for any $a,b \in \R$ we get the following upper bound on $t$,
\[ t = u^2 \le \frac{2c^2}{\gamma^2} + \frac{2k}{\gamma} = \frac{2k}{\gamma} + \frac{1}{\gamma^2}\log(\frac{1}{\delta}).\]


\end{proof}

\begin{theorem}
Let $\alpha, \delta, p, \gamma \in (0, 1)$. Let $x_t \overset{i.i.d}{\sim}(1-\gamma) \cD_{id} + \gamma \cD_{ood}$ and let $c_t = 1-\beta_t + \frac{\beta_t}{p^2}$, $\beta_t = \frac{N_t^{(o,p)}}{N_t^{(o)}}$ where $N_t^{(o,p)}$ is the number of OOD points sampled using importance sampling until time $t$ and $N_t^{(o)}$ is the total number of OOD points observed till time $t$. Let $n_0 = \min\{u: c_u N^{(o)}_u \ge 173\log(\frac{8}{\delta}) \}$ and $t_0$ be such that $N_{t_{0}}^{(o)}\ge n_0$. If Algorithm~\ref{alg:adaptive-ood} uses the optimization problem~\eqref{opt-P2} to find the thresholds with the upper confidence term $\psi(N_t^{(o)},\delta/2)$ given by eq.~\eqref{eqn:psi-theory}, then there exist constants $C_1, C_2, C_3 >0$ such that with probability at least $1-\delta$, 
\begin{enumerate}[leftmargin=15pt,itemsep=0.25pt,topsep=0pt]
    \item \textbf{Controlled FPR.} For all $t \geq t_0$, $\FPR(\hat{\lambda}_t) \leq \alpha$.
    \item \textbf{Time to reach feasibility.} The algorithm  will find a feasible threshold, $\hat{\lambda}_t$ such that $\FPRhat(\hat{\lambda}_t) + \psi(N_t^{(o)}) \le \alpha$, for all $t \ge \max(t_0,T_f)$ , where, $ T_f = \frac{2C_1}{\gamma \alpha^2} \log \Big( \frac{4C_2}{\delta}\log(\frac{C_3}{\alpha}) \Big) + \frac{1}{\gamma^2}\log(\frac{4}{\delta}) $.
    \item \textbf{Time to reach $\eta-$optimality.}  For all $t \ge \max(t_0,T_{\eta\text{-opt}})$, $\hat{\lambda}_t$ satisfy the $\eta$-optimality condition in definition \ref{defn:optimality}, when
    $\FPRhat(\hat{\lambda}_{T_{\eta\text{-opt}}}) \in [\alpha -\eta/2, \alpha]$ and $T_{\eta\text{-opt}} = \frac{8C_1}{\gamma \eta^2} \log \Big( \frac{4C_2}{\delta}\log(\frac{2C_3}{\eta}) \Big) + \frac{1}{\gamma^2}\log(\frac{4}{\delta})$.
\end{enumerate}
\label{thm:main_theorem}
\end{theorem}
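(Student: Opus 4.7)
The plan is to chain three already-established ingredients on a single high-probability event of mass at least $1-\delta$. First, I use the two-sided anytime-valid band implicit in Lemma~\ref{lem:fpr_conf_interval} (invoked with level $\delta/2$): since the martingale bound of Lemma~\ref{lem:lil-martingale} is on $|\overline{M}_t|$, I obtain simultaneously for all $t\ge t_0$ and all $\lambda\in\Lambda$ that $|\FPRhat(\lambda,t)-\FPR(\lambda)|\le \psi(t,\delta/2)$. Second, I use Lemma~\ref{lem:psi_invert} to invert this band: to guarantee $\psi(t,\delta/2)\le\mu$ it suffices that $N_t^{(o)}\ge \frac{C_1}{\mu^2}\log\bigl(\frac{C_2}{\delta}\log(\frac{C_3}{\mu})\bigr)$ for universal constants $C_1,C_2,C_3$ that absorb $C_0$, $|\Lambda|$, and the factor $3$ from Lemma~\ref{lem:lil-martingale}. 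Third, Lemma~\ref{lem:num_tosses} (invoked with level $\delta/2$) converts a target on $N_t^{(o)}$ into a wall-clock time $T_k=\frac{2k}{\gamma}+\frac{1}{\gamma^2}\log(\frac{2}{\delta})$. A union bound over the two $\delta/2$ failure events produces the $1-\delta$ event on which all three conclusions are proved simultaneously.

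On this event, part~1 is immediate: the algorithm's constraint in~\eqref{opt-P2} forces $\FPRhat(\hat{\lambda}_t,t)+\psi(t,\delta/2)\le\alpha$, so the upper side of the band gives $\FPR(\hat{\lambda}_t)\le\FPRhat(\hat{\lambda}_t,t)+\psi(t,\delta/2)\le\alpha$ for every $t\ge t_0$ (before feasibility the algorithm uses $\hat{\lambda}_t=\Lambda_{\max}$, which trivially satisfies the FPR constraint). For part~2, evaluating the constraint at $\lambda=\Lambda_{\max}$ yields $\FPRhat(\Lambda_{\max},t)=0$, so the optimization is feasible once $\psi(t,\delta/2)\le\alpha$; Lemma~\ref{lem:psi_invert} with $\mu=\alpha$ followed by Lemma~\ref{lem:num_tosses} delivers exactly the stated $T_f$. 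For part~3, I use both sides of the band: if $\psi(t,\delta/2)\le\eta/2$ and $\FPRhat(\hat{\lambda}_t,t)\in[\alpha-\eta/2,\alpha]$, then $\FPR(\hat{\lambda}_t)\ge \FPRhat(\hat{\lambda}_t,t)-\psi(t,\delta/2)\ge \alpha-\eta$, so $\FPR(\lambda^\star)-\FPR(\hat{\lambda}_t)=\alpha-\FPR(\hat{\lambda}_t)\le\eta$. Requiring $\psi\le \eta/2$ instead of $\psi\le\alpha$ replaces $\alpha$ by $\eta/2$ in Lemma~\ref{lem:psi_invert}, producing the extra factor of $4$ in $k$ and hence the $\frac{8C_1}{\gamma\eta^2}$ leading term in $T_{\eta\text{-opt}}$.

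The main obstacle is bookkeeping the randomness inside $\psi$ itself: the coefficient $c_t=1-\beta_t+\beta_t/p^2$ depends on the algorithm-driven importance-sampling fraction $\beta_t$, so Lemma~\ref{lem:psi_invert}'s inversion has to be done with a deterministic surrogate $c_t\le C_0+1$ where $\beta_t\le C_0 p^2$ (worst case $C_0=1/p^2$), and these constants must be collapsed into $C_1,C_2,C_3$ without making the bound vacuous. A related subtlety, already handled inside Lemma~\ref{lem:fpr_conf_interval}, is that the samples used to form $\FPRhat$ are dependent through $\hat{\lambda}_{u-1}$; the martingale identity $\E[Z_u(\lambda)\mid \cF_{u-1}]=\FPR(\lambda)$, together with the per-step bound $|\overline{M}_t-\overline{M}_{t-1}|\le 1/p$, is what licenses Lemma~\ref{lem:lil-martingale} and hence the whole chain. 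Everything else is algebra: tracking the $\delta/2$ splits, matching $\mu\in\{\alpha,\eta/2\}$ through Lemmas~\ref{lem:psi_invert} and~\ref{lem:num_tosses}, and verifying that the quoted forms of $T_f$ and $T_{\eta\text{-opt}}$ agree with $\frac{2k}{\gamma}+\frac{1}{\gamma^2}\log\frac{4}{\delta}$.
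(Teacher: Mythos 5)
Your proposal is correct and mirrors the paper's proof exactly: Lemma~\ref{lem:fpr_conf_interval} at level $\delta/2$ for the uniform (and, via the two-sided $|\overline{M}_t|$ bound of Lemma~\ref{lem:lil-martingale}, two-sided) band, Lemma~\ref{lem:psi_invert} with $\mu=\alpha$ and $\mu=\eta/2$ to invert $\psi$, and Lemma~\ref{lem:num_tosses} to convert OOD sample counts into wall-clock times, with the same treatments of parts 1--3. The one small bookkeeping slip is your claim that Lemma~\ref{lem:num_tosses} is invoked once at level $\delta/2$: since $T_f$ and $T_{\eta\text{-opt}}$ require two distinct invocations (different target counts $k$), the paper splits the remaining budget as $\delta/4+\delta/4$, which is what actually produces the $\log(4/\delta)$ term you correctly write in your final check.
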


\begin{proof}

To prove this we first obtain confidence intervals on FPR valid w.p. $1-\delta/2$ using Lemma \ref{lem:fpr_conf_interval}. Then applying Lemma \ref{lem:psi_invert} on these confidence intervals gives us the number of OOD samples that are sufficient to guarantee a certain width of the confidence intervals and lastly we use Lemma \ref{lem:num_tosses} to bound the time point such that we observe a certain number of OOD points till that time. We do this for the second and third points separately each time invoking Lemma \ref{lem:num_tosses} with failure probability $\delta/4$ and then union bound over them.

\textit{Controlled FPR.}  This follows from Lemma \ref{lem:fpr_conf_interval} (with probability $1-\frac{\delta}{2}$) and the fact the algorithm uses confidence intervals on FPR estimate that are valid for all $t \ge t_0$ for the choices of $\lambda$ it considers. 
    
\textit{Time to reach feasibility.} Applying Lemma \ref{lem:psi_invert}
with $\mu=\alpha$ gives bound on $N_{T_f}$ with $C_1 = 10(C_0 +1), C_2 = |\Lambda|, C_3 = 5(C_0+1)$. Then using Lemma \ref{lem:num_tosses} with $k=N_{T_f}$ gives us the desired $T_f$.

\textit{Time to reach $\eta$-optimality.} We know,
$\FPR(\lambda^*) = \alpha$, and it is given that $\FPRhat(\hat{\lambda}_t) \in [\FPR(\lambda^*) -\eta/2, \alpha ]$
\[ \FPR(\hat{\lambda}_t) \in [\FPR(\lambda^*) - \eta/2 -\psi(t,\delta) , \alpha ]\]
 this means $\FPR(\hat{\lambda}_t) \ge \FPR(\lambda^*) - \eta/2 -\psi(t,\delta)$
\[ \FPR(\lambda^*) - \FPR(\hat{\lambda}_t) \le  \eta/2 +\psi(t,\delta)\]

If $\psi(t,\delta) \le \eta/2$ we have, $ \FPR(\lambda^*) - \FPR(\hat{\lambda}_t)\le \eta$. Thus applying we want to find  $t$ for which  $\psi(t,\delta) = \eta/2$. Applying lemma \ref{lem:psi_invert}
with $\mu=\eta/2$ gives bound on $N_{T_f}$ with $C_1 = 40(C_0 +1), C_2 = |\Lambda|, C_3 = 10(C_0+1)$.  Then using Lemma \ref{lem:num_tosses} with $k=N_{T_{opt}}$ gives us the desired $T_{opt}$.
\end{proof}

This concludes the proofs of the main results. Next, we present details of the procedure used to solve the optimization problem \ref{opt-P2} and additional experiments on synthetic and real datasets. 


\subsection{Additional Details of the Algorithm}
We use the following algorithm (Algorithm \ref{alg:solve-lambda-binary}) based on binary search to solve the optimization problem \ref{opt-P2} i.e., find $\hat{\lambda}_t$. In addition to the best estimate of current threshold $\hat{\lambda}_t$ it also returns a flag $feasible$ that indicates whether the procedure found a threshold satisfying the constraint in \ref{opt-P2} or not.

\begin{algorithm}[H]
\begin{algorithmic}[1]
\caption{ SolveOptForLambda} \label{alg:solve-lambda-binary}
\Require{FPR threshold $\alpha$ , $S_t$}
        

\State{$low=1,high= \frac{\Lambda_{\max} - \Lambda_{\min}}{\nu}, feasible=False$}
\While{ $low< high$}
    \State{$mid = \lceil (low+high)/2\rceil$}
    \State{$\lambda_{\text{mid}} = \Lambda_{\min} + k \nu$}
    \If{ $\FPRhat(\lambda_{\text{mid}},t) + \psi(t,\delta) \le \alpha$ }
        \State{$feasible = True$}
        \State{$high=mid$}
    \Else 
        \State{ $low = mid$}
    \EndIf

\EndWhile
\State{Output $feasible, \lambda_{\text{mid}}$}

\end{algorithmic}
\end{algorithm}

\subsection{Additional Experiments and Details}
\label{sec:additional_exp}
In the simulations we study the effect of changing $\gamma$, using different window sizes and the case when the In-Distribution shifts. For the real data experiments, we study the performance of the methods under different settings with different scoring functions on CIFAR-10 and CIFAR-100 as In-Distribution datasets.

\subsubsection{Searching for Constants in LIL-Heuristic}
\label{subsubsec:constant-search}
The theoretical LIL bound in eq.~\eqref{eqn:psi-theory} has constants that can be pessimistic in practice. We get around this by using a LIL-Heuristic bound which has the same form as in eq.~\eqref{eqn:psi-theory} but with different constants in particular we consider the form in eq.~\eqref{eqn:lil-heuristic}. We find the constants $C_1, C_2, C_3$ using a simulation on estimating the bias of a coin with different constants and picking the ones so that the observed failure probability is below 5\%. 
\begin{equation}
 \tilde{\psi}(t,\delta) =  C_1\sqrt{\frac{  c_t}{N_t^{(o)}} \bigg ( \log \log \big (C_2 c_t N_t^{(o)} \big )  + \log \Big ( \frac{C_3}{\delta} \Big )\bigg )} . 
  \tag{LIL-Heuristic}
  \label{eqn:lil-heuristic}
\end{equation}

Specifically, we keep $C_3=1$, and run for $\delta \in \{ 0.01, 0.05, 0.1, 0.2, 0.3, 0.4 \}$ with varying $C_1$ from $0.1$ to $0.9$ and $C_2$ from $1.5$ to $4.75$. For each choice of  $\delta, C_1, C_2$, we toss an unbiased coin (mean $p=0.5$) for $T = 10k$ times. For each choice of $t=1,2,\cdots, T$, we compute the empirical mean $\hat{p}$ of the coin and define it as a failure if $p \notin [\hat{p} - \tilde{\psi}(t,\delta), \hat{p}+ \tilde{\psi}(t,\delta)]$. We run this process for 100 times and compute the average failure probability for each choice of $t=1,2,\cdots, T$. We then pick the constant so that the observed average probability is below 5\%. Throughout the paper, we use $C_1 = 0.5$ and $C_2 = 0.75$.



\subsubsection{Additional Experiments on Synthetic Data}

\textbf{In-Distribution shift.} We study the scenario when the ID distribution changes and the OOD distribution remains fixed. In this setting, the FPR for any threshold does not change since the OOD does not change but the TPR changes due to the change in ID distribution. Thus, we expect that with this type of change our method will not violate the FPR constraint and it will gradually adapt to the threshold achieving the new TPR. 

We simulate the OOD and ID scores using a mixture of two Gaussians $\mathcal{N}_{id}(\mu=5.5,\sigma=4)$ and  $\mathcal{N}_{ood}(\mu=-5,\sigma=4)$  with $\gamma = 0.2$. To simulate distribution change we change the ID distribution to $\mathcal{N}_{id}(\mu=5,\sigma=4)$ at time $t=50k$. We ran the methods 10 times with different random seeds. The results are shown in Figure \ref{fig:sim-id-shift}. We can clearly see that changing ID distribution(ID scores getting closer to the OOD scores) leads to a decrease in the TPR at the threshold with 5\% FPR. Since the estimation of threshold only depends on the FPR estimates and hence only on OOD samples, changing ID distribution does not affect this estimation so the methods perform the same as in the setting of no-distribution shift but get a reduction in the TPR at FPR $5\%$.

\begin{figure*}[t]
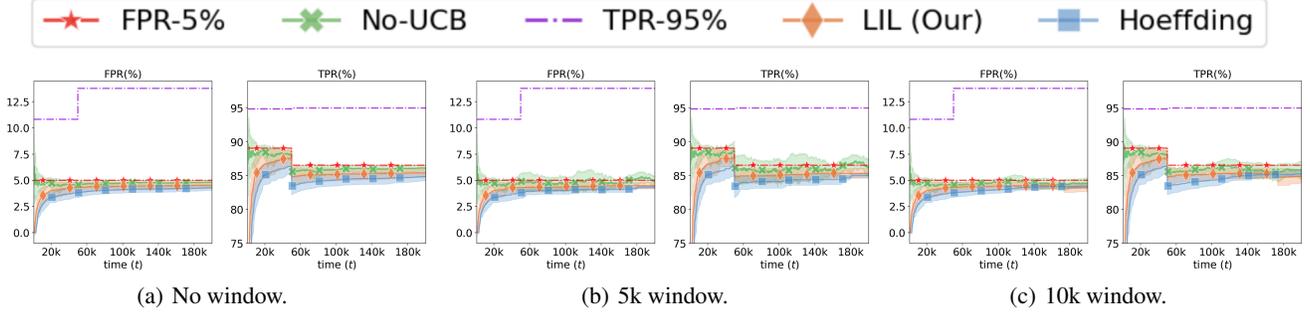

  \vspace{-10pt}
  \centering
  \includegraphics[width=0.999\textwidth]{figs-v2/legend_hfdng-3.png}
  \mbox{
  \hspace{-15pt}
    \subfigure[No window.  \label{fig:sim_id_shift_win_no}]{\includegraphics[scale=0.22]{figs-v2/sim-id-change/sim_id_change_plot_window_None_gamma_0.2_hfding.png}}
    \hspace{1pt}
    
    \subfigure[5k window. \label{fig:sim_id_shift_win_5k}]{\includegraphics[scale=0.22]{figs-v2/sim-id-change/sim_id_change_plot_window_5000_gamma_0.2_hfding.png}}

    \subfigure[10k window. \label{fig:sim_id_shift_win_10k}]{\includegraphics[scale=0.22]{figs-v2/sim-id-change/sim_id_change_plot_window_10000_gamma_0.2_hfding.png}}
    
  }
  \caption{{
    Changing ID distribution in synthetic data.} 
    }
  \label{fig:sim-id-shift}
  \vspace{-10pt}
\end{figure*}

\subsubsection{Additional Real OOD Datasets Experiments}
We run our proposed system on real ID and OOD datasets with various OOD scoring methods.  We use $\alpha = 0.05$, $\delta=0.2$, and importance sampling probability $p=0.2$ through all the experiments. We used an Nvidia RTX 6000 graphics processing unit (GPU) to facilitate the inference process and retrieve confidence scores for various datasets. The primary experiments conducted were executed without the GPU.

\textbf{ID and OOD datasets}. We use CIFAR-10 or CIFAR-100 as ID datasets. In the distribution shift setting, if not specified, we use MNIST, SVHN, and Texture as the first mixture of OOD datasets, and TinyImageNet, Places365, and CIFAR-10/100 as the second mixture of OOD datasets by default. We use a pre-trained Resnet-50 model for SSD method,  and Resnet-18 for the rest of the methods.

\textbf{Scoring functions.} We use the following scoring functions,
 
\begin{enumerate}[leftmargin=*, topsep=-2pt] 
    \item \textbf{ODIN.} ODIN \citep{liang2017enhancing} takes the soft-max score from DNNs and scales the score with temperature. A gradient-based input perturbation is also used for better performance. We choose temperature $1000$ and input perturbation noise $0.0014$, as discussed in \citep{liang2017enhancing}. The results with this scoring function on CIFAR-10 and CIFAR-100 ID data settings are shown in Figures \ref{fig:odin-cifar10} and \ref{fig:odin-cifar100} respectively.
    
    \item \textbf{Mahalanobis Distance (MDS).}  
    For a given point $x$, the Mahalanobis Distance (MDS) based score is its MD from the closest class conditional distribution. We use the MD-based score as given in \citep{lee2018simple} for detecting OOD and adversarial samples. They compute the scores using representations from various layers of DNNs and combine them to get a better scoring function. We choose input perturbation noise to be $0.0014$. The results with this scoring function on CIFAR-10 and CIFAR-100 ID data settings are shown in Figures \ref{fig:mds-cifar10} and \ref{fig:mds-cifar100} respectively.
    \item \textbf{Energy Score (EBO).} 
    This score was proposed in \citep{liu2020energy} and it is well aligned with the probability density of the samples, with low energy implying ID and high energy implying OOD. We choose the temperature parameter to be $1$. The results with the EBO scoring function on CIFAR-10 and CIFAR-100 ID data settings are shown in Figures \ref{fig:ebo-cifar10} and \ref{fig:ebo-cifar100} respectively.

    \item \textbf{SSD.} It is based on computing the Mahalanobis distance in the feature space of the model trained on the unlabeled in-distribution data using self-supervised learning. We use the official implementation of \citep{sehwag2021ssd}. For CIFAR-10, we use the pre-train model they released. For CIFAR-100, We train a Resnet-50 using a contrastive self-supervised learning loss, SimCLR \citep{chen2020simple}. When calculating the distance-based OOD scores, we use one unsupervised clustering center as the only center for ID distribution for both CIFAR-10 and CIFAR-100.  The results with this scoring function on CIFAR-10 and CIFAR-100 ID data setting are shown in Figures \ref{fig:ssd-cifar10} and \ref{fig:ssd-cifar100} respectively.
    \item  \textbf{Virtual-logit Match}.  Virtual-logit Match (VIM) \citep{wang2022vim} combines the class-agnostic score from feature space and ID class-dependent logits. Specifically, an additional logit representing the virtual OOD class is generated from the residual of the feature against the principal space and then matched with the original logits by a constant scaling. We set the dimension of the principal space $D=100$. The results with VIM scoring function on CIFAR-10 and CIFAR-100 ID data settings are shown in Figures \ref{fig:vim-cifar10} and \ref{fig:vim-cifar100} respectively.
    
    \item \textbf{K-Nearest-Neighborhood.} Unlike other methods that impose a strong distributional assumption of the underlying feature space, the KNN-based method \citep{sun2022out} explores the efficacy of non-parametric nearest-neighbor distance for OOD detection. The distance between the test sample and its k-nearest training IN sample will be used as the score for threshold-based OOD detection. We choose neighbor number $k = 50$. The results with this scoring function on CIFAR-10 and CIFAR-100 ID data settings are shown in Figures \ref{fig:knn-cifar10-supp} and \ref{fig:knn-cifar100} respectively.
\end{enumerate}




\begin{figure*}[h]
  
  \centering
  \includegraphics[width=0.999\textwidth]{figs-v2/legend_hfdng-3.png}
  \mbox{
  \hspace{-10pt}
    \subfigure[No distribution shift, no window.   \label{fig:knn_no_shift_no_win_cifar10}]{\includegraphics[scale=0.215]{figs-v2/real_cifar_10_knn_plot_window_None_gamma_0.2_hfding.png}}
    \hspace{1pt}
    
    \subfigure[Distribution shift, 5k window. \label{fig:knn_shift_win_5k_cifar10}]{\includegraphics[scale=0.215]{figs-v2/real_change_cifar_10_knn_plot_window_5000_gamma_0.2_hfding.png}}

    \subfigure[Distribution shift, 10k window. \label{fig:knn_shift_win_10k_cifar10}]{\includegraphics[scale=0.215]{figs-v2/real_change_cifar_10_knn_plot_window_10000_gamma_0.2_hfding.png}}
  }
  
  \vspace{-5pt}
  \caption{  {
   Results with the KNN scores on Cifar-10 as ID dataset. For (b) and (c) the distribution shifts at $t=50k$. The arrow indicates the time at which the mean FPR + std. deviation over 10 runs goes below 5\% for the LIL method.
    }}
  \label{fig:knn-cifar10-supp}
\end{figure*}

\begin{figure*}[t]
  \centering
\includegraphics[width=0.999\textwidth]{figs-v2/legend_hfdng-3.png}
  \mbox{
  \hspace{-10pt}
    \subfigure[No distribution shift, no window.   \label{fig:ebo_no_shift_no_win_cifar10}]{\includegraphics[scale=0.215]{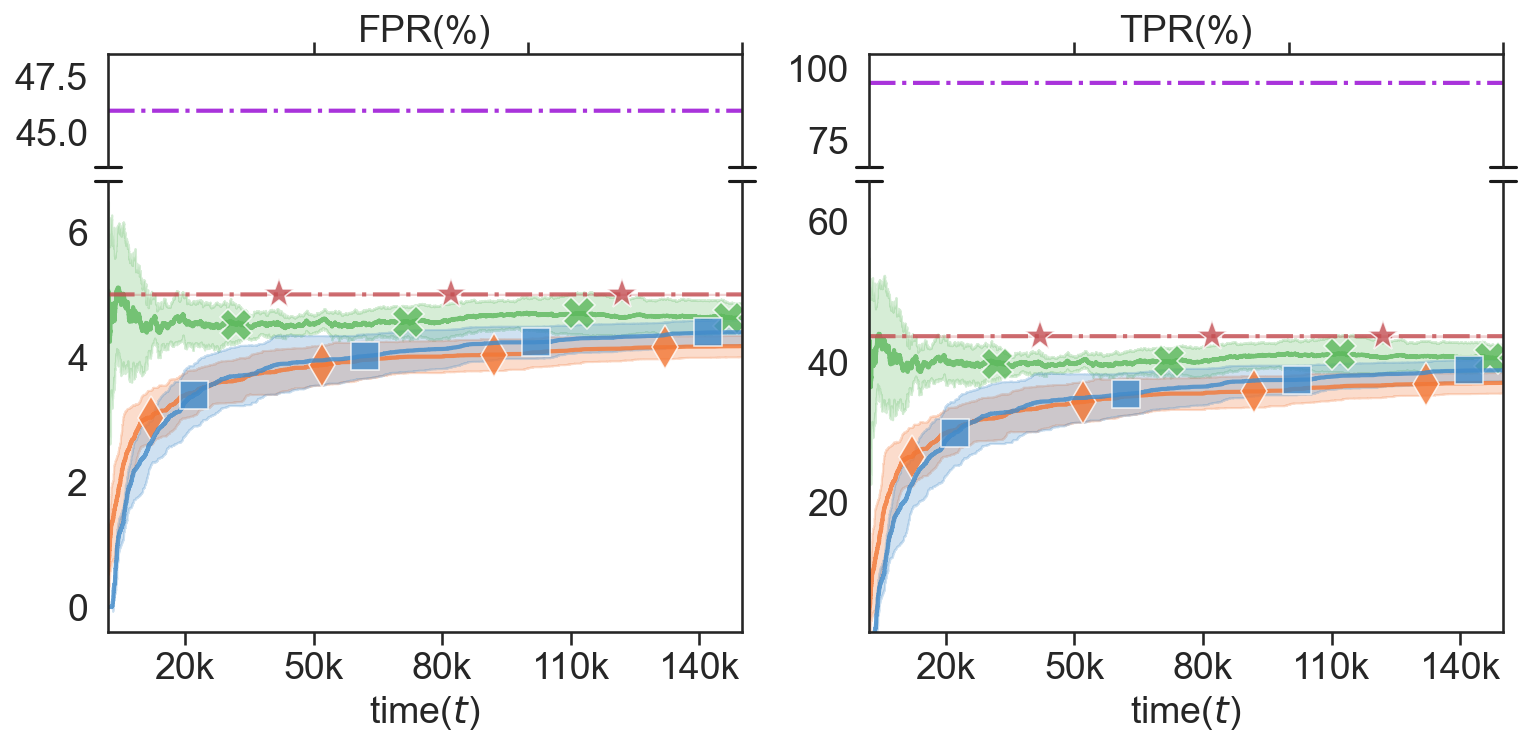}}
    \hspace{1pt}
    
    \subfigure[Distribution shift, 5k window. \label{fig:ebo_shift_win_5k_cifar10}]{\includegraphics[scale=0.215]{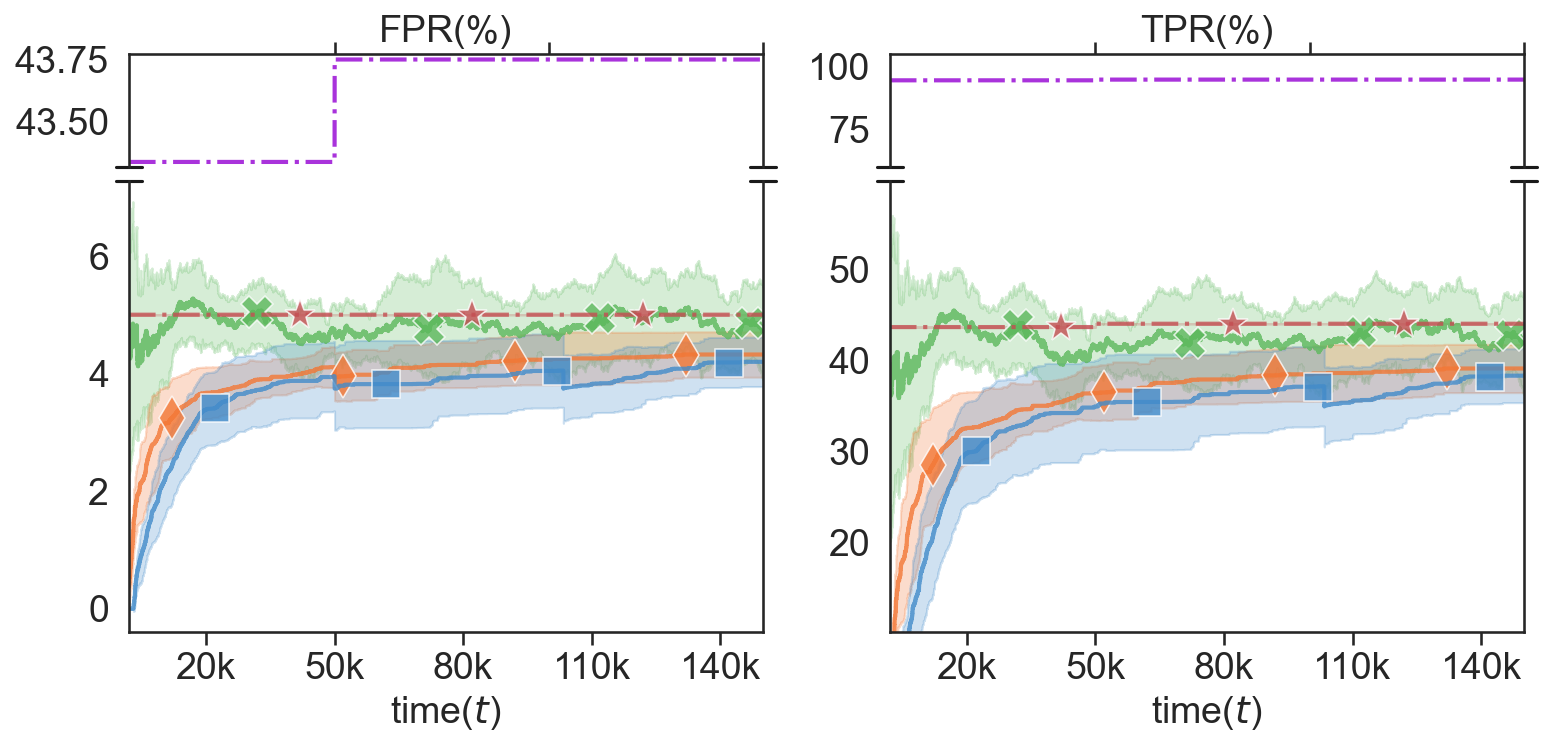}}

    \subfigure[Distribution shift, 10k window. \label{fig:ebo_shift_win_10k_cifar10}]{\includegraphics[scale=0.215]{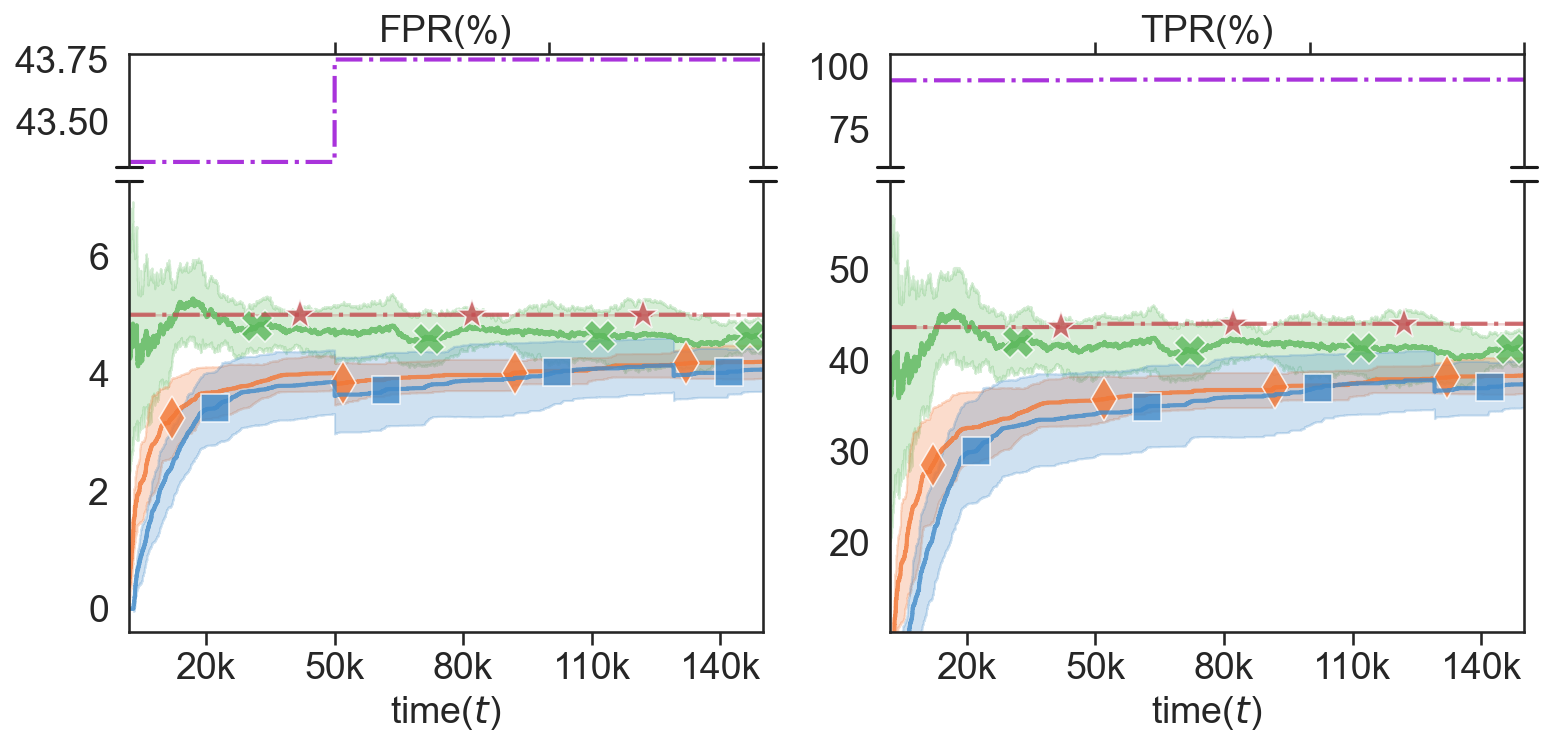}}
  }
  
  \vspace{-5pt}
  \caption{ \small {
   Results with the EBO scores on Cifar-10 as ID dataset. For (b) and (c) the distribution shifts at $t=50k$. The arrow indicates the time at which the mean FPR + std. deviation over 10 runs goes below 5\% for the LIL method.
    }}
  \label{fig:ebo-cifar10}
  \vspace{-5pt}
\end{figure*}

\begin{figure*}[h]
  \centering
  \mbox{
  \hspace{0pt}
    \subfigure[No distribution shift, no window.   \label{fig:mds_no_shift_no_win_cifar10}]{\includegraphics[scale=0.215]{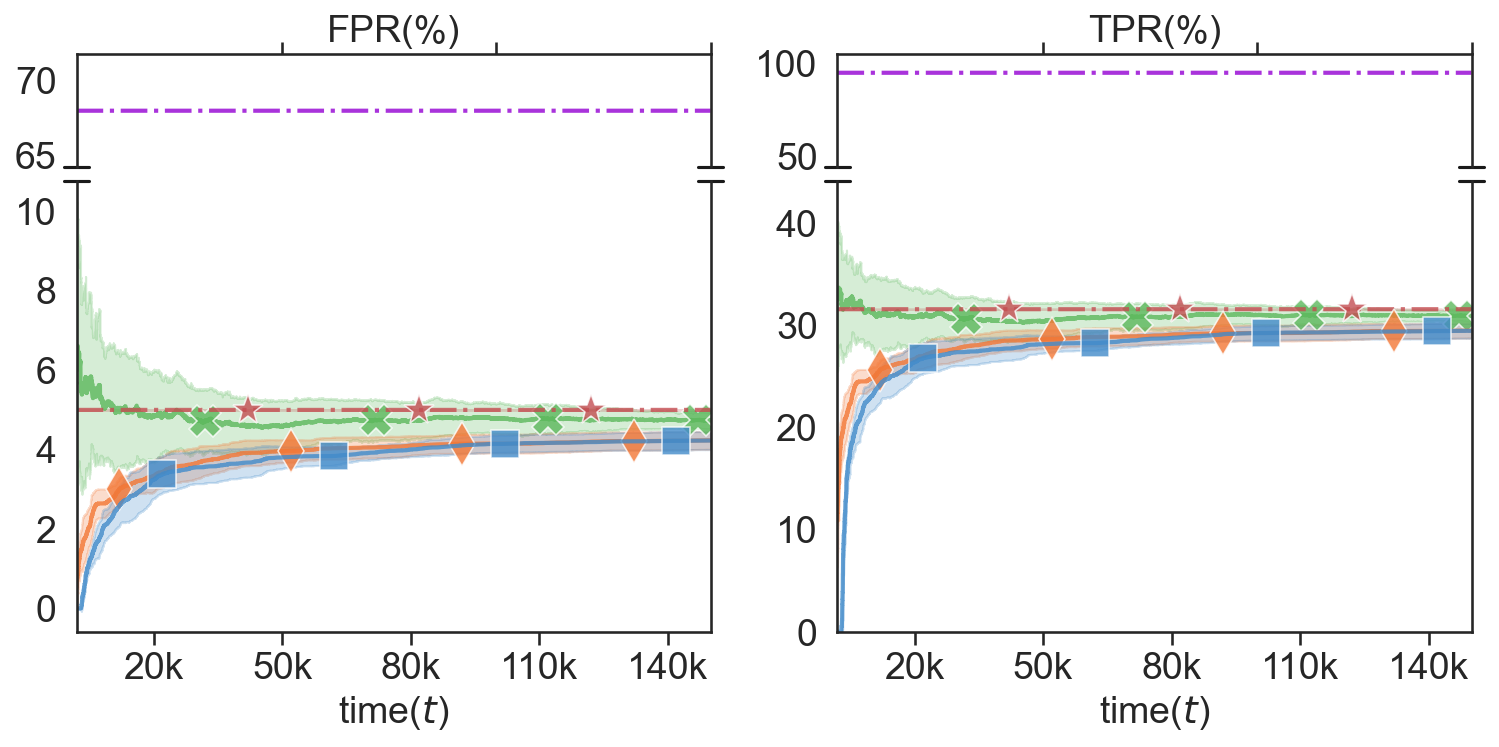}}
    \hspace{1pt}
    
    \subfigure[Distribution shift, 5k window. \label{fig:mds_shift_win_5k_cifar10}]{\includegraphics[scale=0.215]{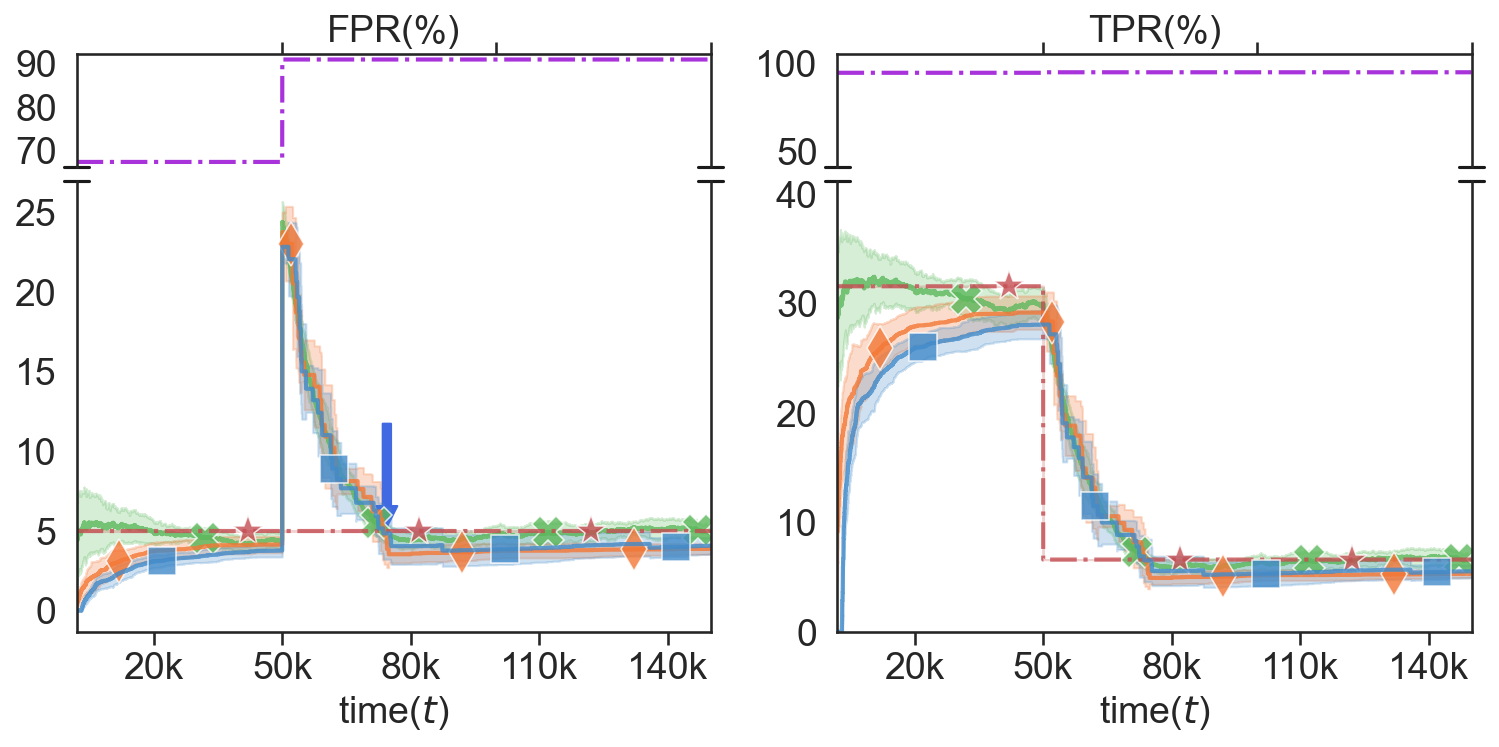}}

    \subfigure[Distribution shift, 10k window. \label{fig:mds_shift_win_10k_cifar10}]{\includegraphics[scale=0.215]{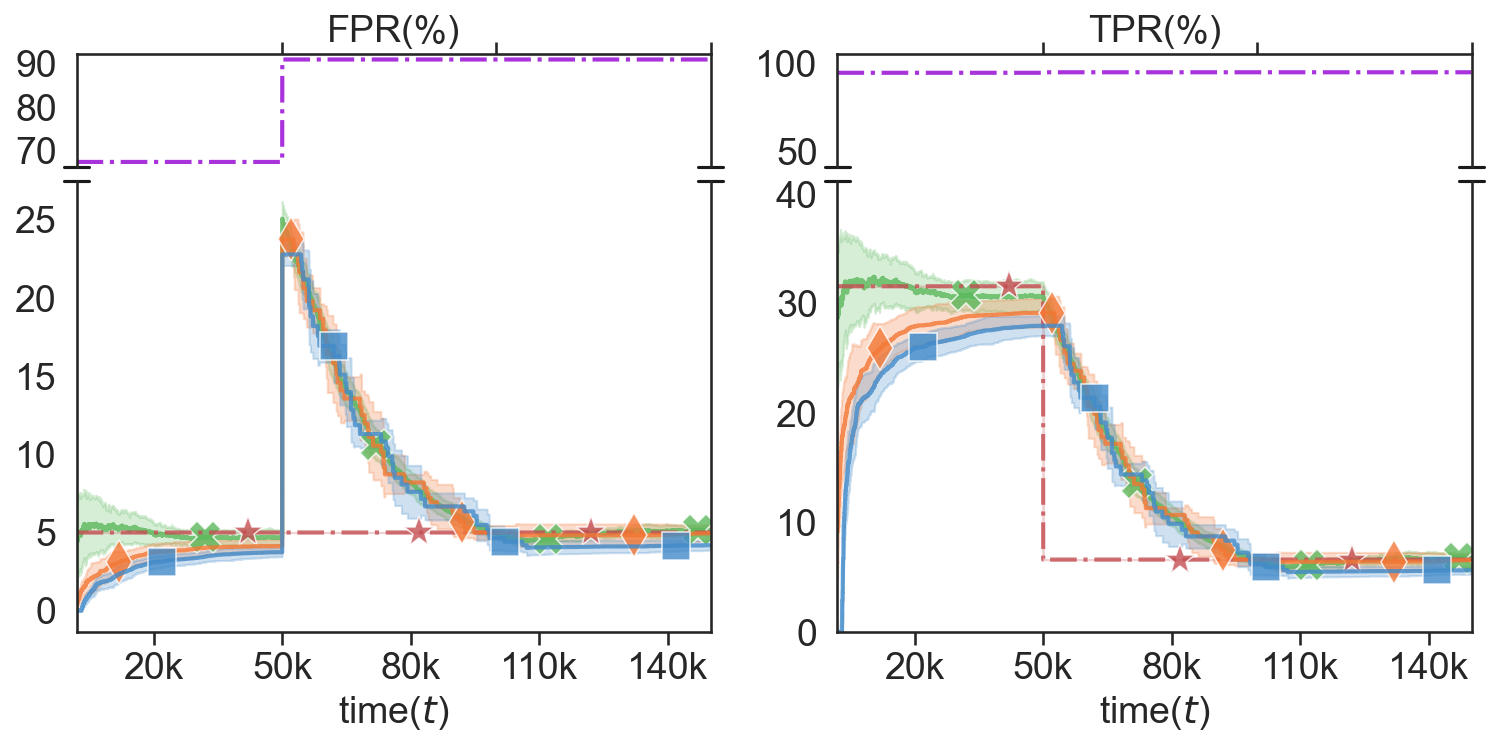}}
  }
  
  \vspace{-5pt}
  \caption{ \small {
   Results with the MDS scores on Cifar-10 as ID dataset. For (b) and (c) the distribution shifts at $t=50k$. The arrow indicates the time at which the mean FPR + std. deviation over 10 runs goes below 5\% for the LIL method.
    }}
  \label{fig:mds-cifar10}
\end{figure*}

\begin{figure*}[h]
  \centering
  \mbox{
  \hspace{-10pt}
    \subfigure[No distribution shift, no window.   \label{fig:vim_no_shift_no_win_cifar10}]{\includegraphics[scale=0.215]{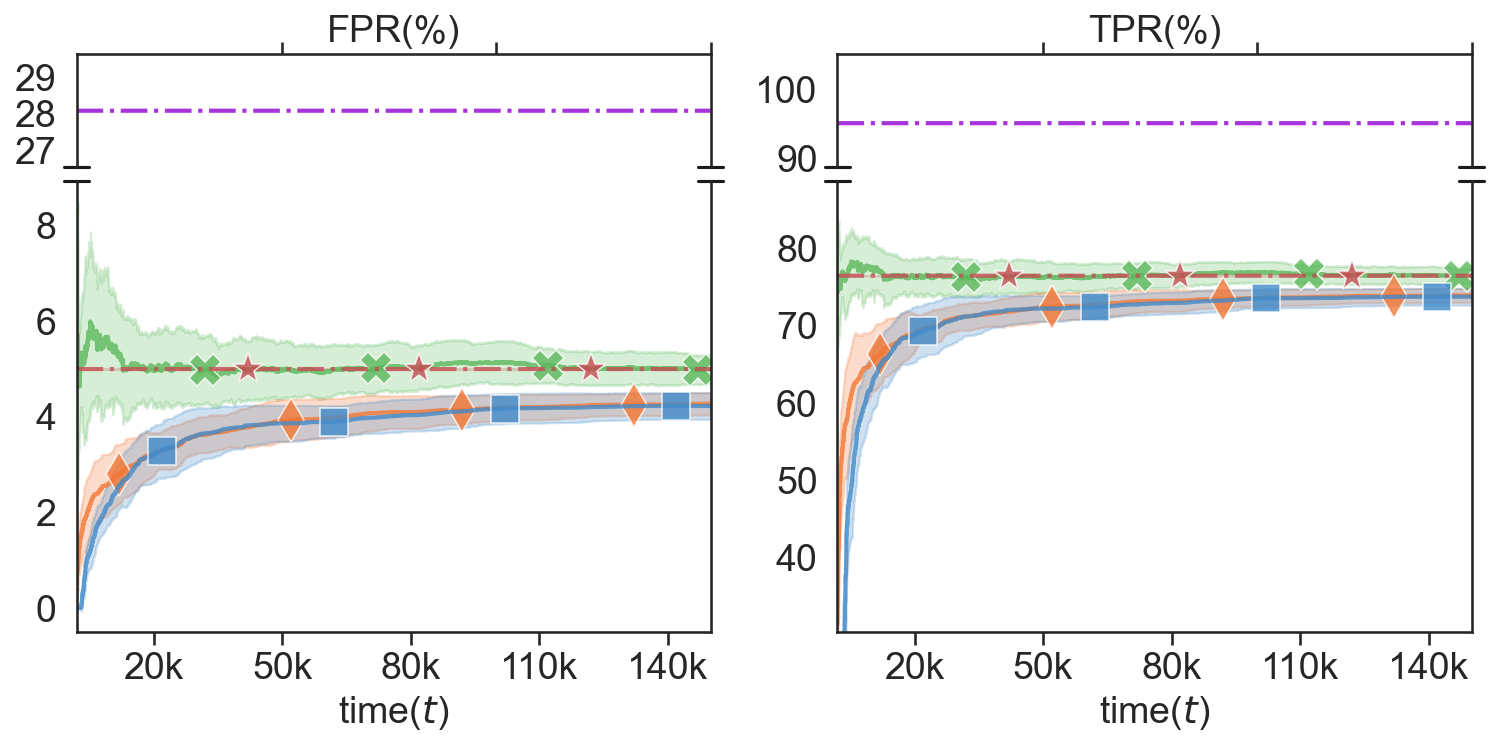}}
    \hspace{1pt}
    
    \subfigure[Distribution shift, 5k window. \label{fig:vim_shift_win_5k_cifar10}]{\includegraphics[scale=0.215]{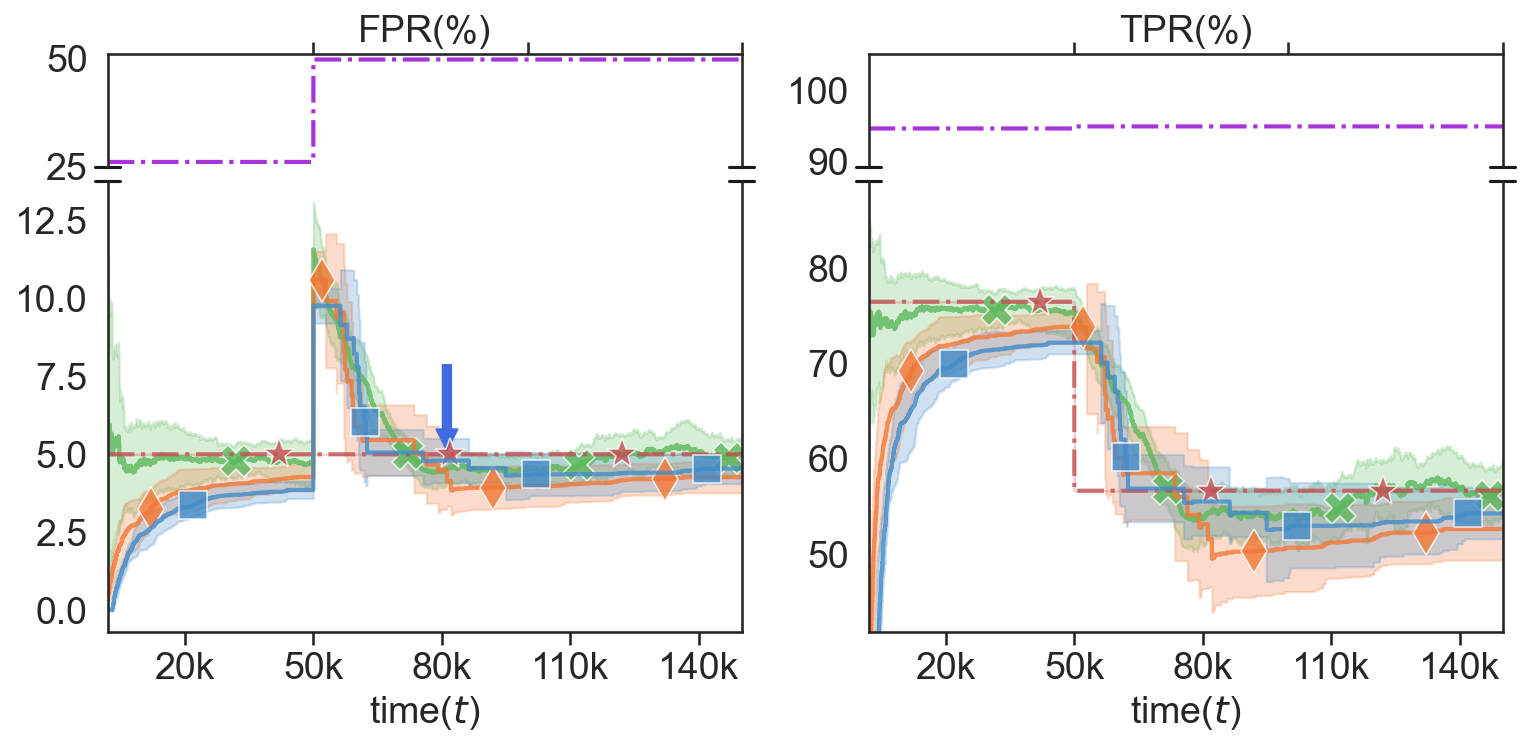}}

    \subfigure[Distribution shift, 10k window. \label{fig:vim_shift_win_10k_cifar10}]{\includegraphics[scale=0.215]{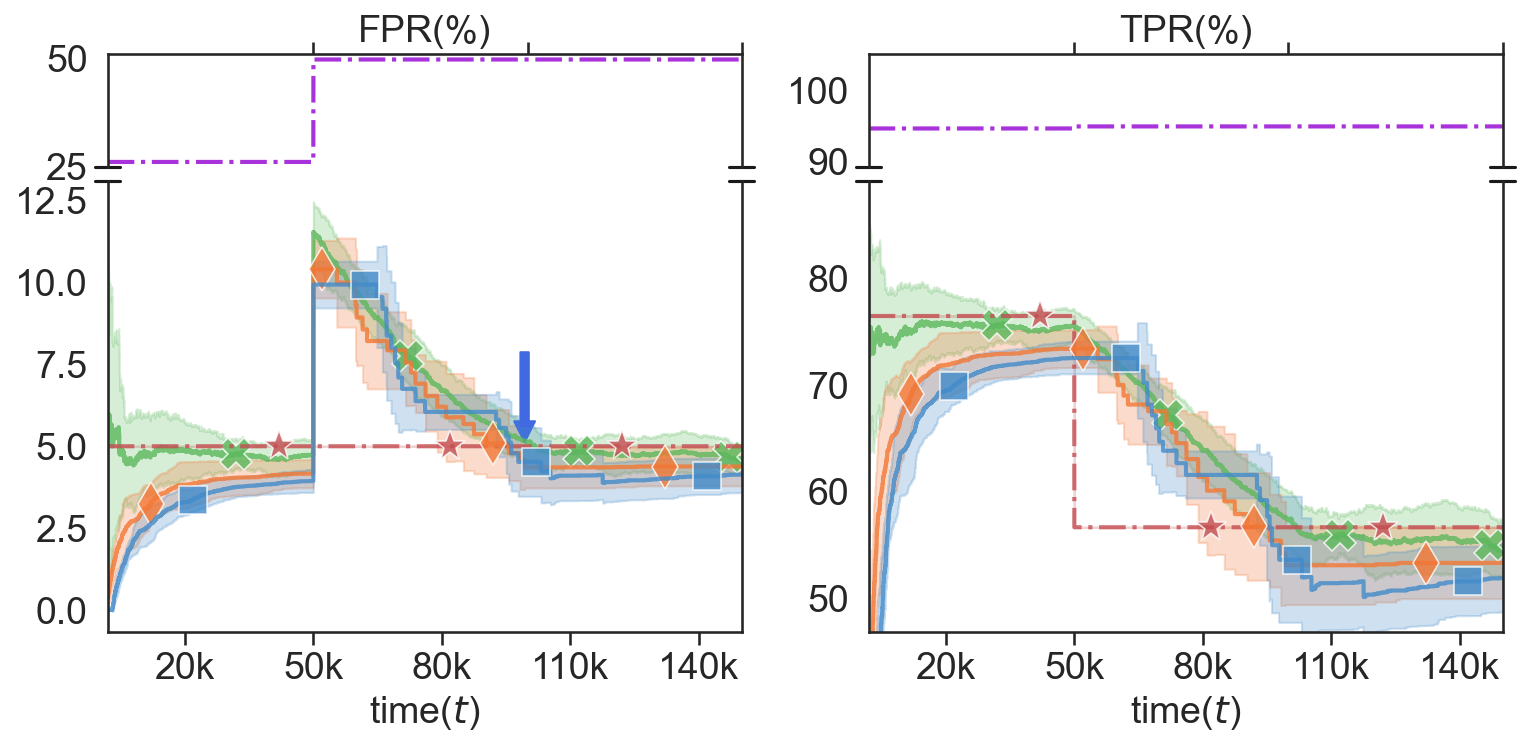}}
  }
  
  \vspace{-5pt}
  \caption{ \small {
   Results with the VIM scores on Cifar-10 as ID dataset. For (b) and (c) the distribution shifts at $t=50k$. The arrow indicates the time at which the mean FPR + std. deviation over 10 runs goes below 5\% for the LIL method.
    }}
  \label{fig:vim-cifar10}
  \vspace{-10pt}
\end{figure*}

\begin{figure*}[h]
  \centering
  \mbox{
  \hspace{-10pt}
    \subfigure[No distribution shift, no window.   \label{fig:odin_no_shift_no_win_cifar10}]{\includegraphics[scale=0.22]{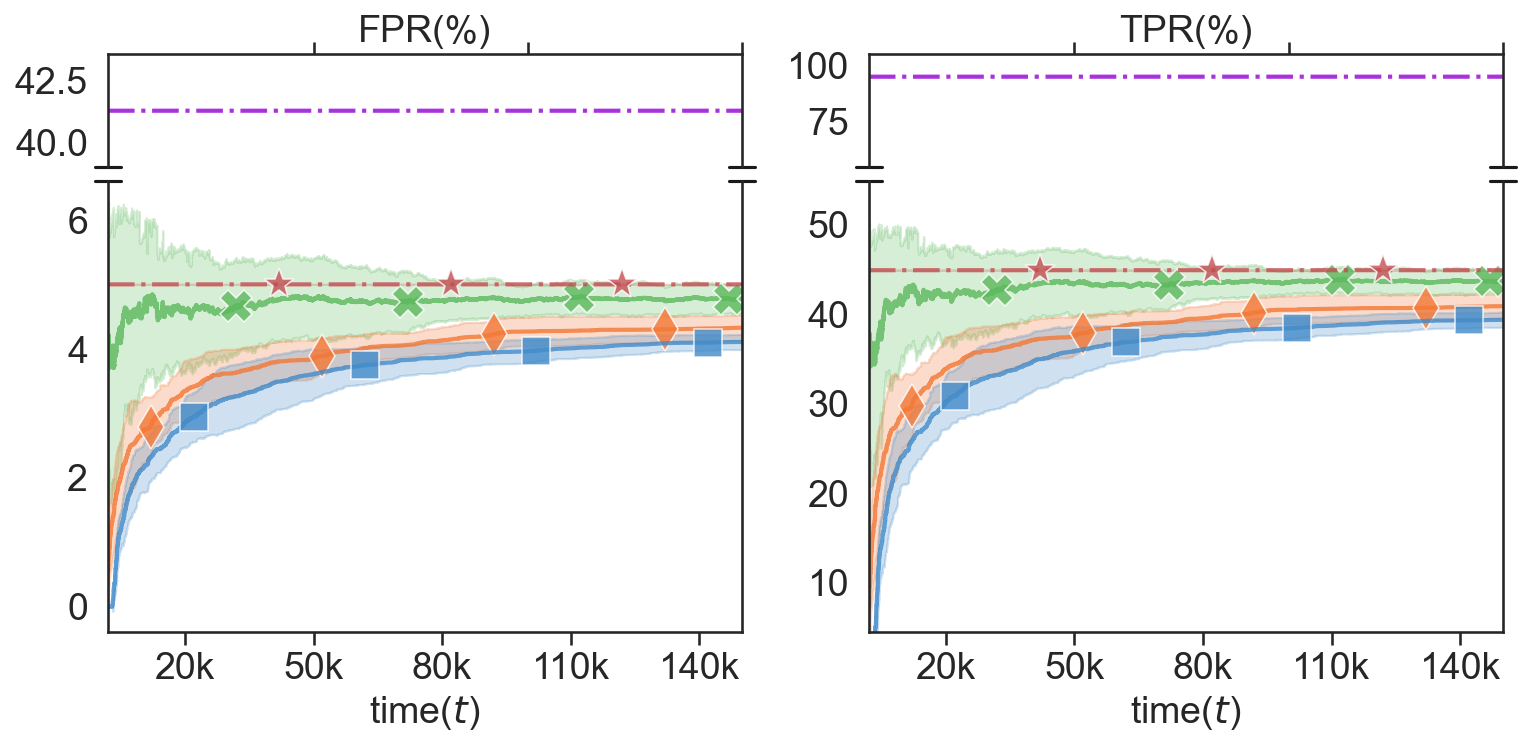}}
    \hspace{1pt}
    
    \subfigure[Distribution shift, 5k window. \label{fig:odin_shift_win_5k_cifar10}]{\includegraphics[scale=0.22]{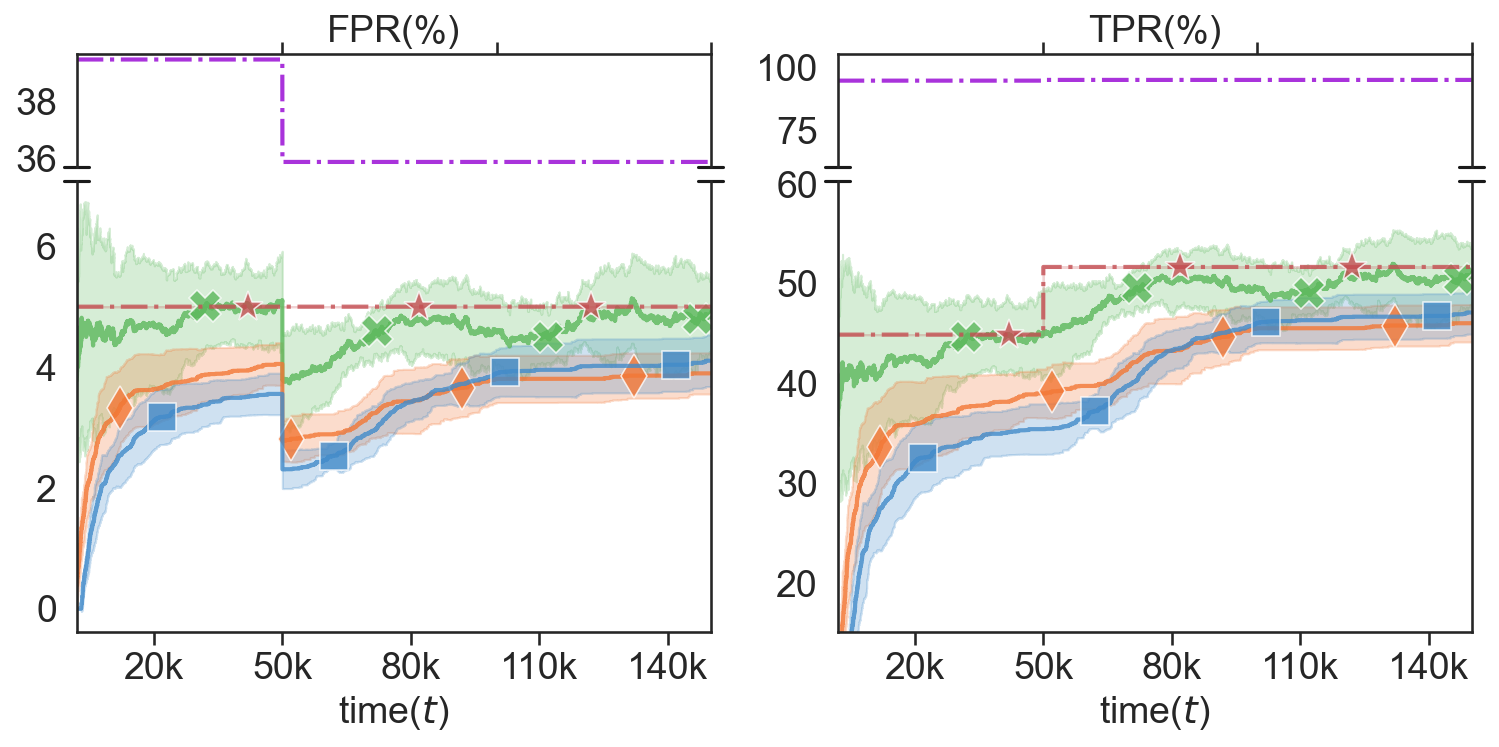}}

    \subfigure[Distribution shift, 10k window. \label{fig:odin_shift_win_10k_cifar10}]{\includegraphics[scale=0.22]{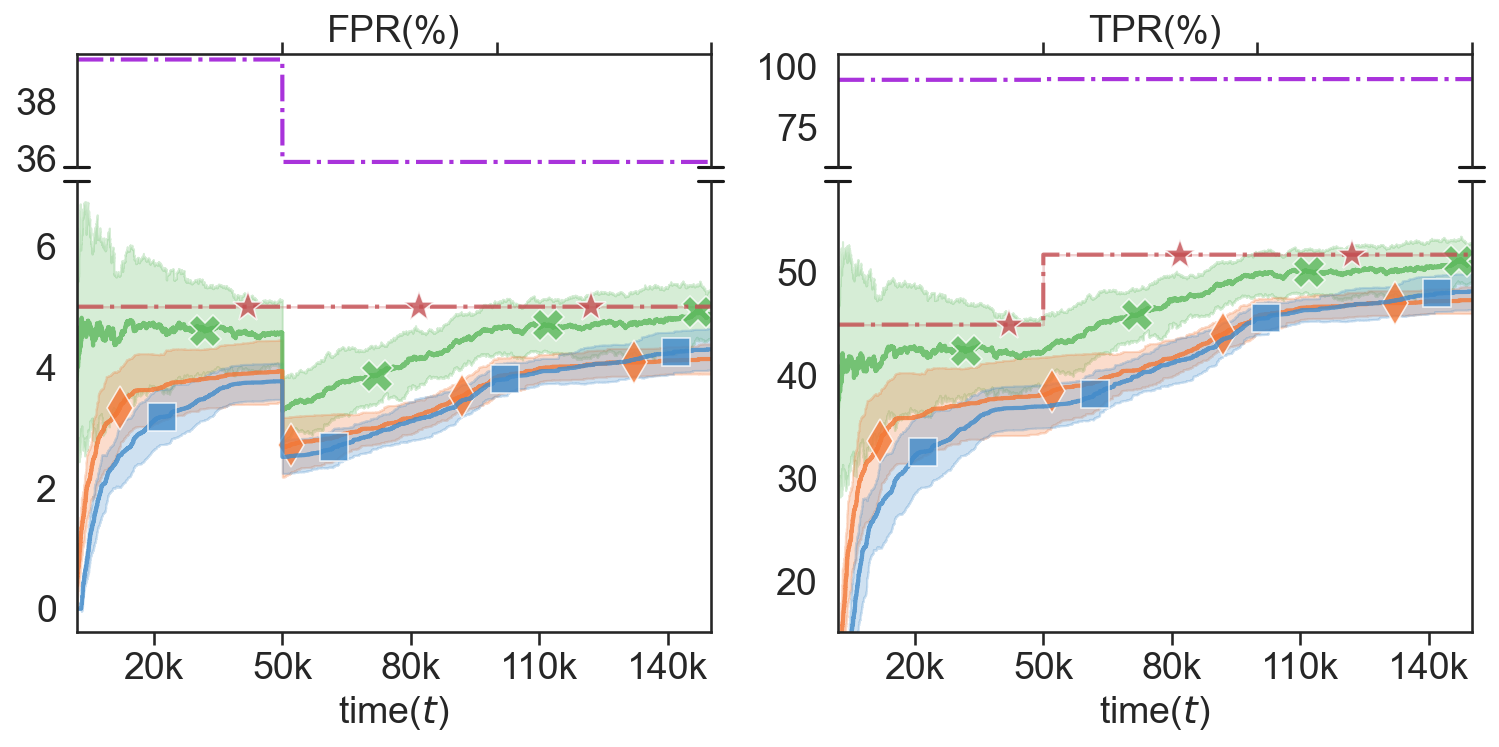}}
  }
  
  \vspace{-5pt}
  \caption{ \small {
   Results with the ODIN scores on Cifar-10 as ID dataset. For (b) and (c) the distribution shifts at $t=50k$. The arrow indicates the time at which the mean FPR + std. deviation over 10 runs goes below 5\% for the LIL method.
    }}
  \label{fig:odin-cifar10}
\end{figure*}

\begin{figure*}[t]
  \centering
  \mbox{
  \hspace{-10pt}
    \subfigure[No distribution shift, no window.   \label{fig:ssd_no_shift_no_win_cifar10}]{\includegraphics[scale=0.215]{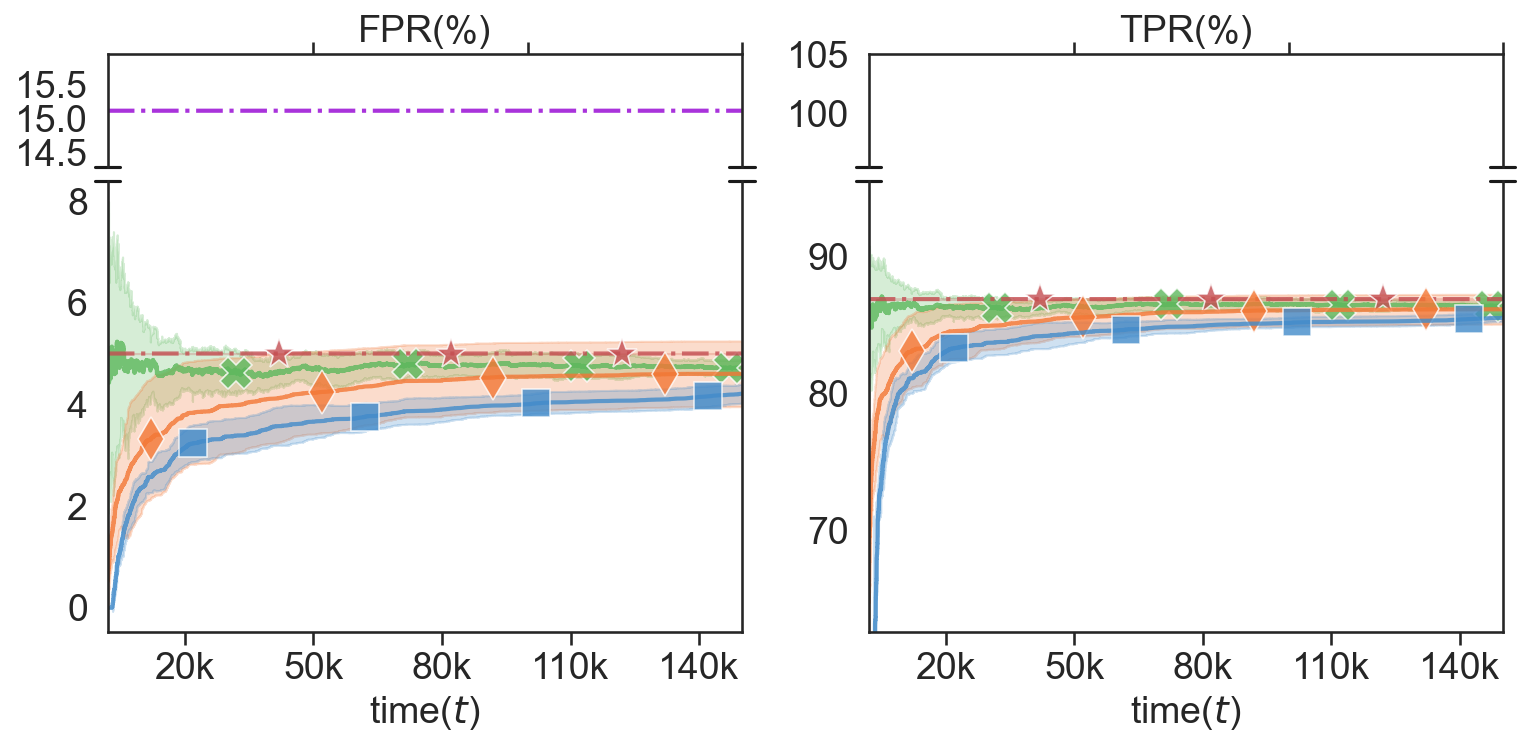}}
    \hspace{1pt}
    
    \subfigure[Distribution shift, 5k window. \label{fig:ssd_shift_win_5k_cifar10}]{\includegraphics[scale=0.215]{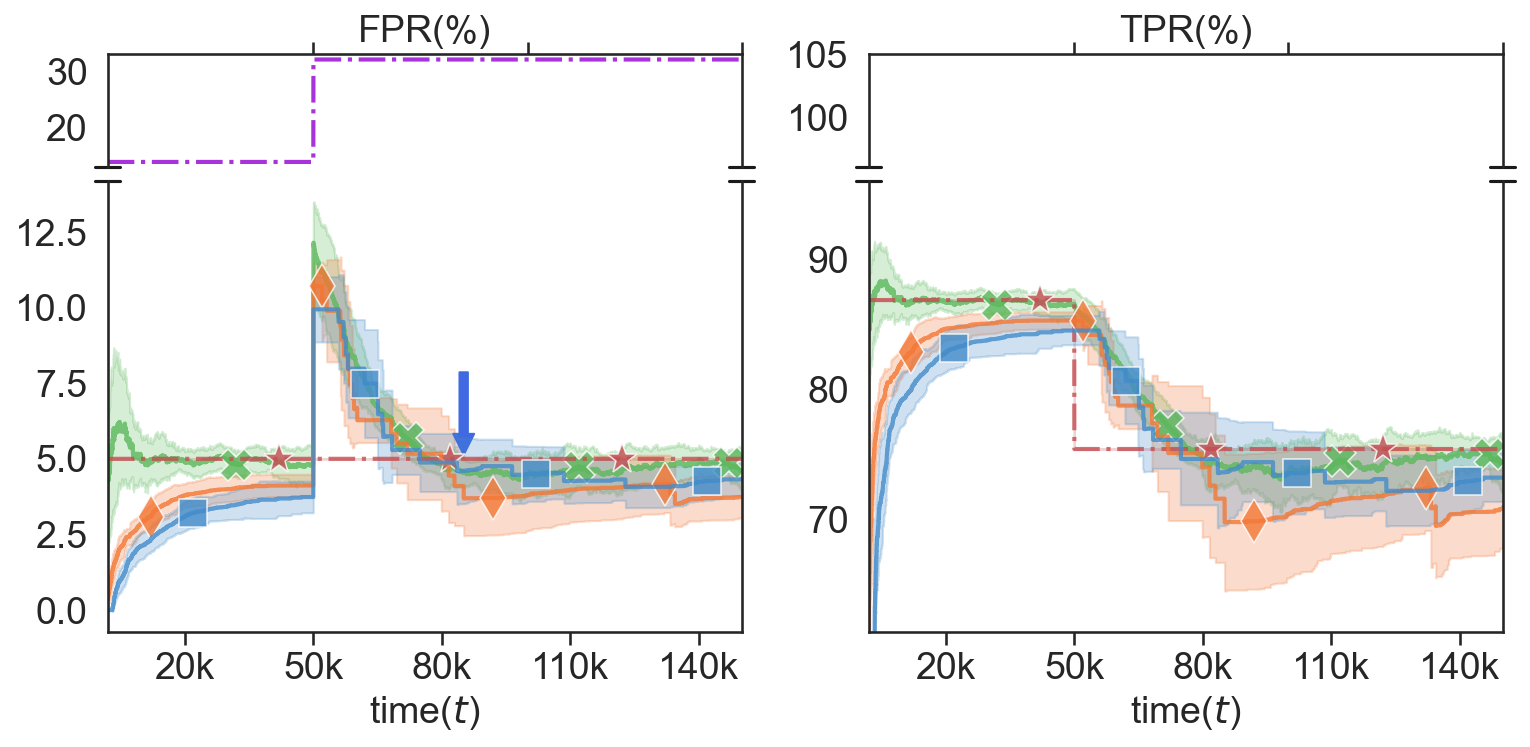}}

    \subfigure[Distribution shift, 10k window. \label{fig:ssd_shift_win_10k_cifar10}]{\includegraphics[scale=0.215]{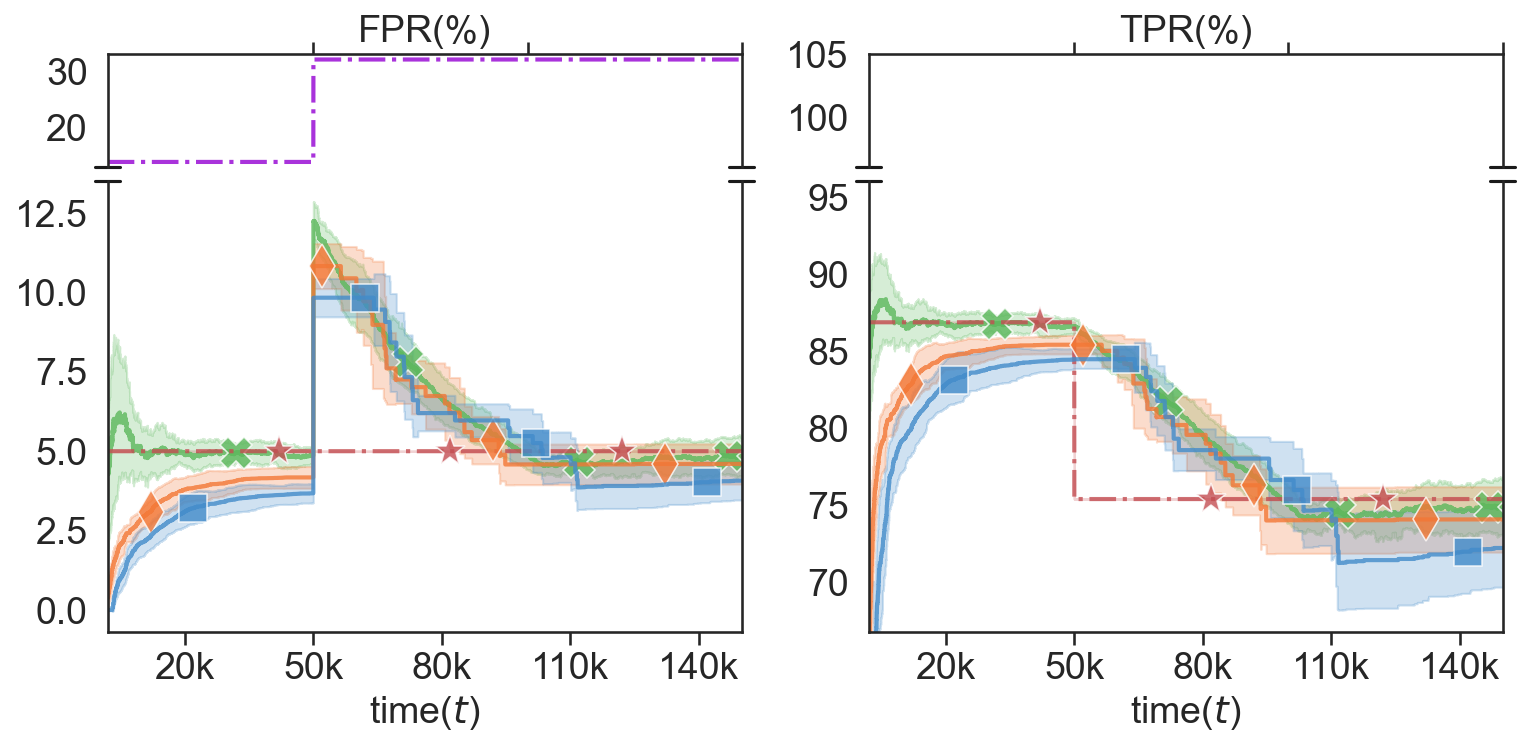}}
  }
  
  \vspace{-5pt}
  \caption{ \small {
   Results with the SSD scores on Cifar-10 as ID dataset. For (b) and (c) the distribution shifts at $t=50k$. The arrow indicates the time at which the mean FPR + std. deviation over 10 runs goes below 5\% for the LIL method.
    }}
  \label{fig:ssd-cifar10}
  \vspace{-10pt}
\end{figure*}

\begin{figure*}[h]
  \centering
 
\includegraphics[width=0.999\textwidth]{figs-v2/legend_hfdng-3.png}
  \mbox{
  \hspace{-10pt}
    \subfigure[No distribution shift, no window.   ]{\includegraphics[scale=0.22]{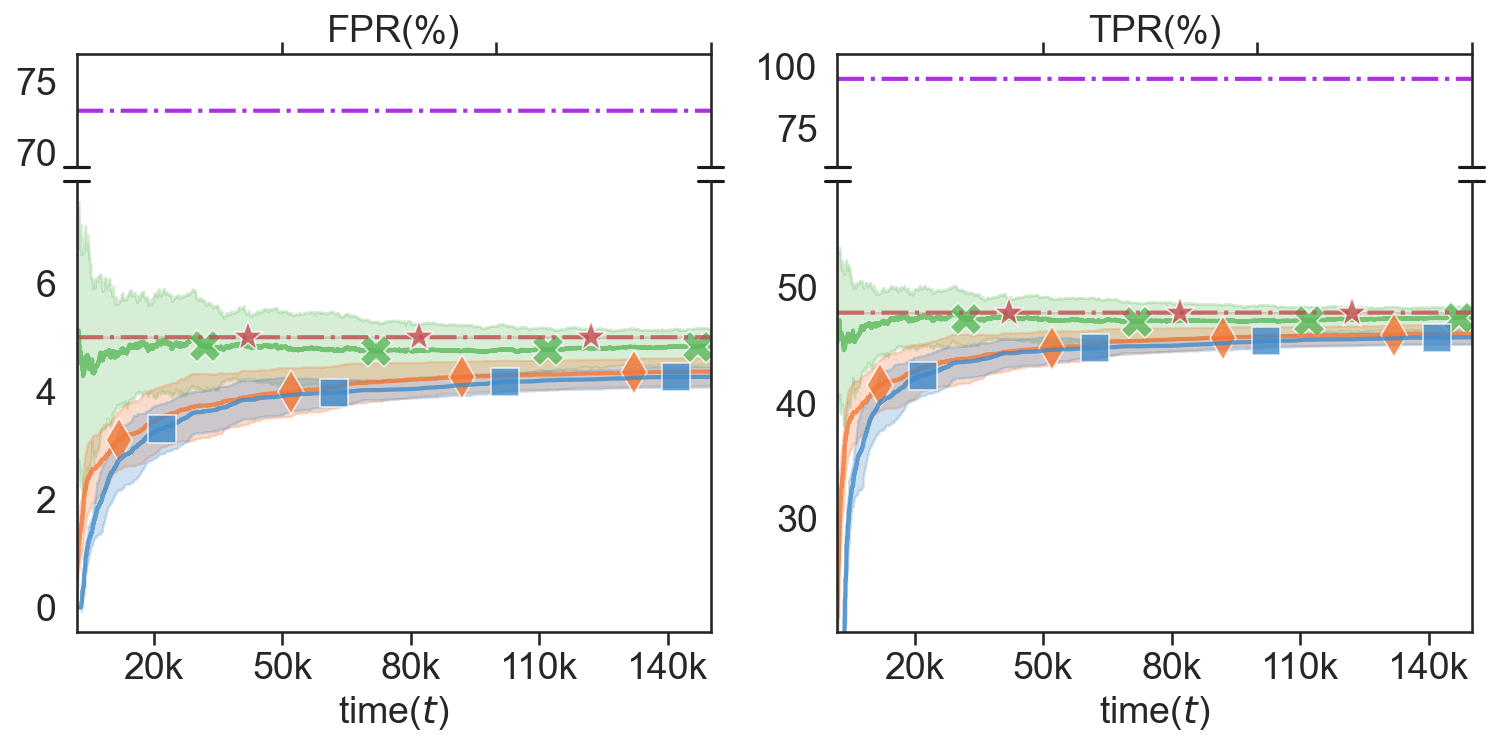}}
    \hspace{1pt}
    
    \subfigure[Distribution shift, 5k window. ]{\includegraphics[scale=0.22]{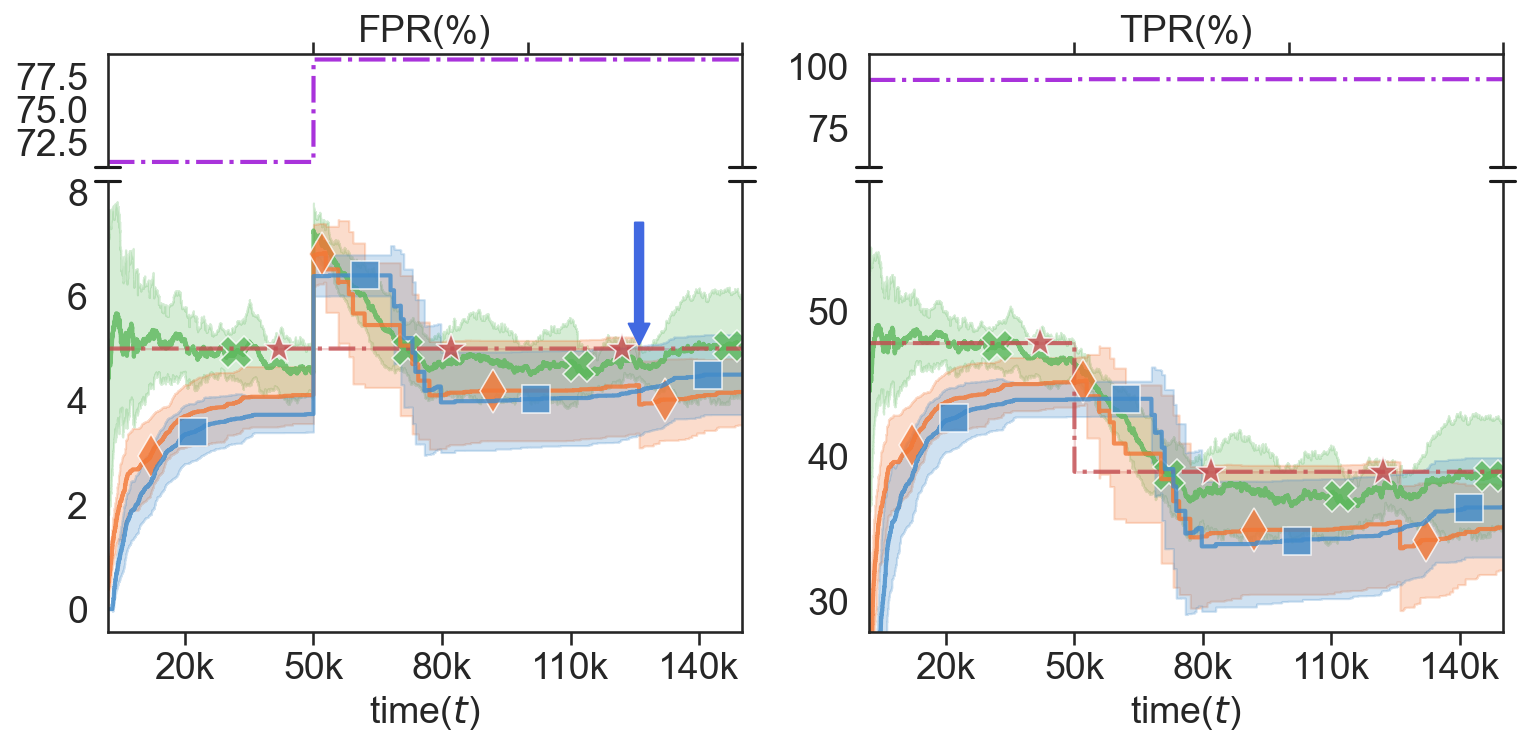}}

    \subfigure[Distribution shift, 10k window.]{\includegraphics[scale=0.22]{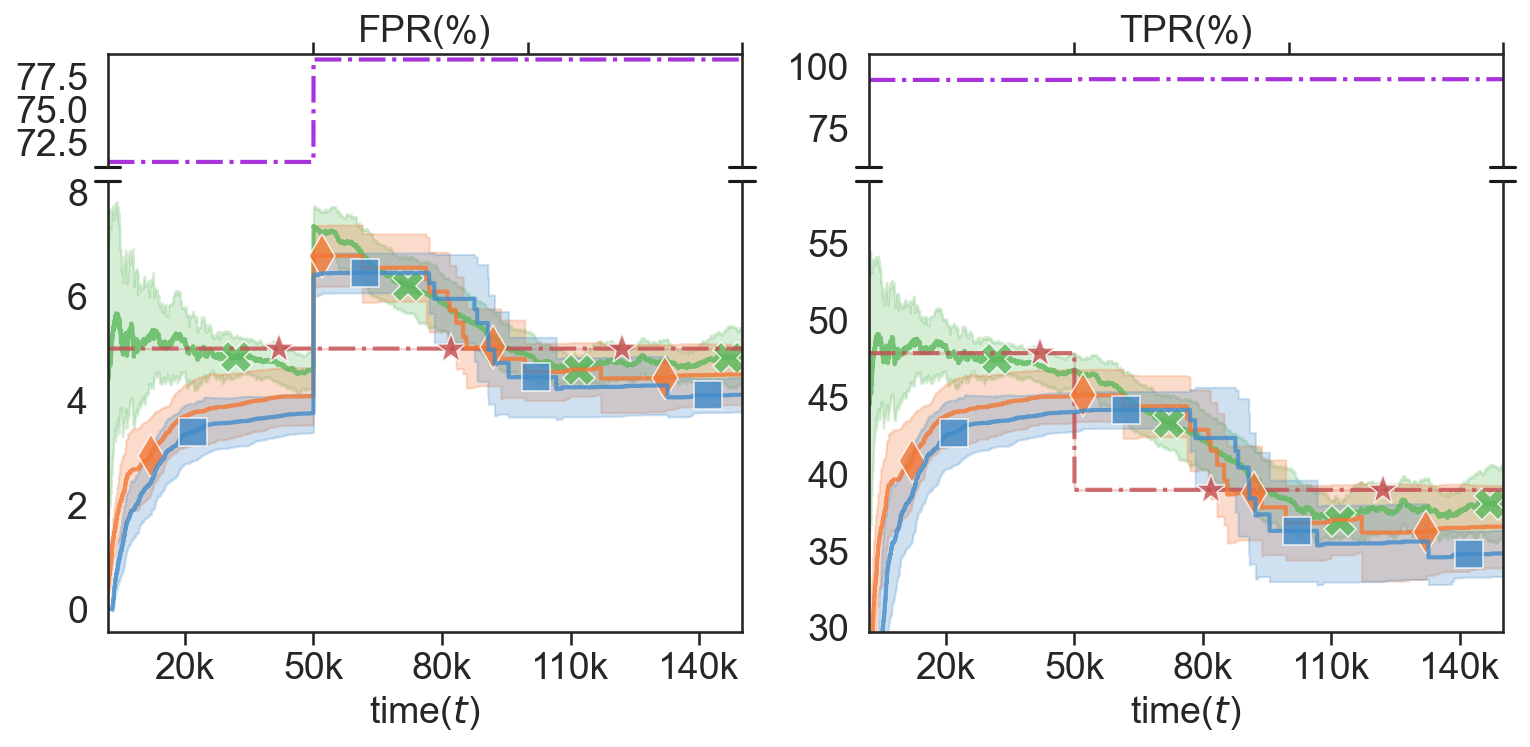}}
  }
  
  \vspace{-5pt}
  \caption{ \small {
   Results with the KNN scores on Cifar-100 as ID dataset. For (b) and (c) the distribution shifts at $t=50k$. The arrow indicates the time at which the mean FPR + std. deviation over 10 runs goes below 5\% for the LIL method.
    }}
\label{fig:knn-cifar100}
  \vspace{-10pt}
\end{figure*}

\begin{figure*}[h]
  \centering
  \mbox{
  \hspace{-10pt}
    \subfigure[No distribution shift, no window.   \label{fig:ebo_no_shift_no_win_cifar100}]{\includegraphics[scale=0.22]{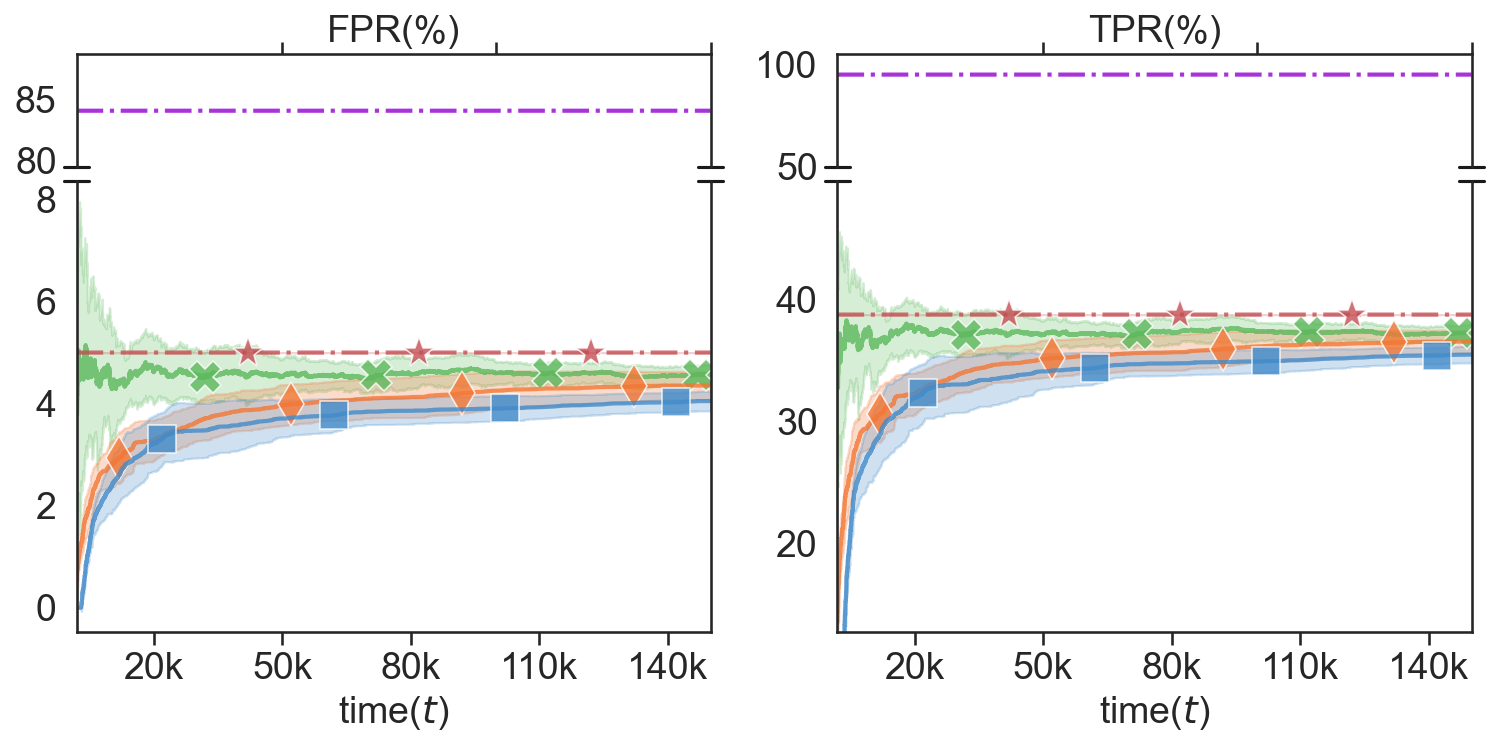}}
    \hspace{1pt}
    
    \subfigure[Distribution shift, 5k window. \label{fig:ebo_shift_win_5k_cifar100}]{\includegraphics[scale=0.22]{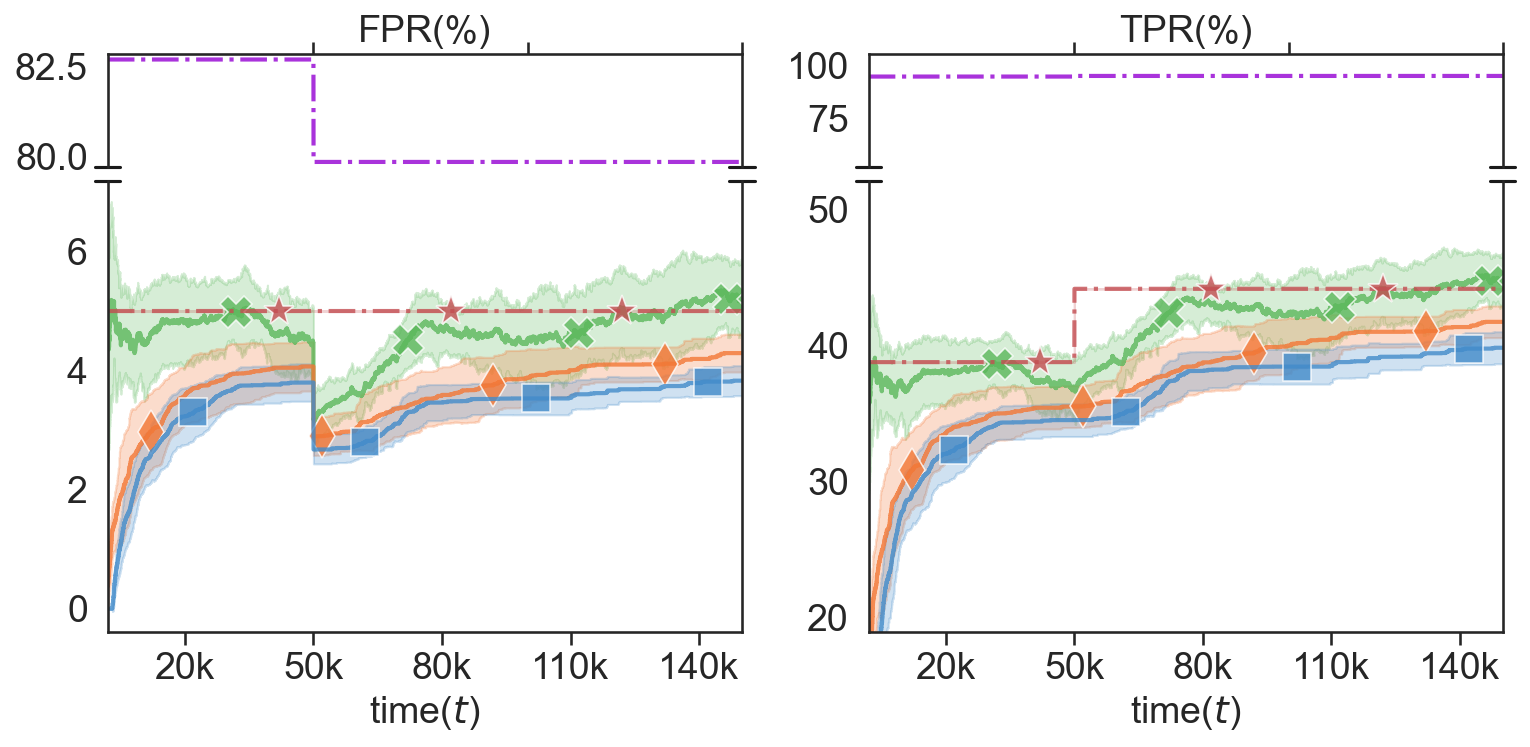}}

    \subfigure[Distribution shift, 10k window. \label{fig:ebo_shift_win_10k_cifar100}]{\includegraphics[scale=0.22]{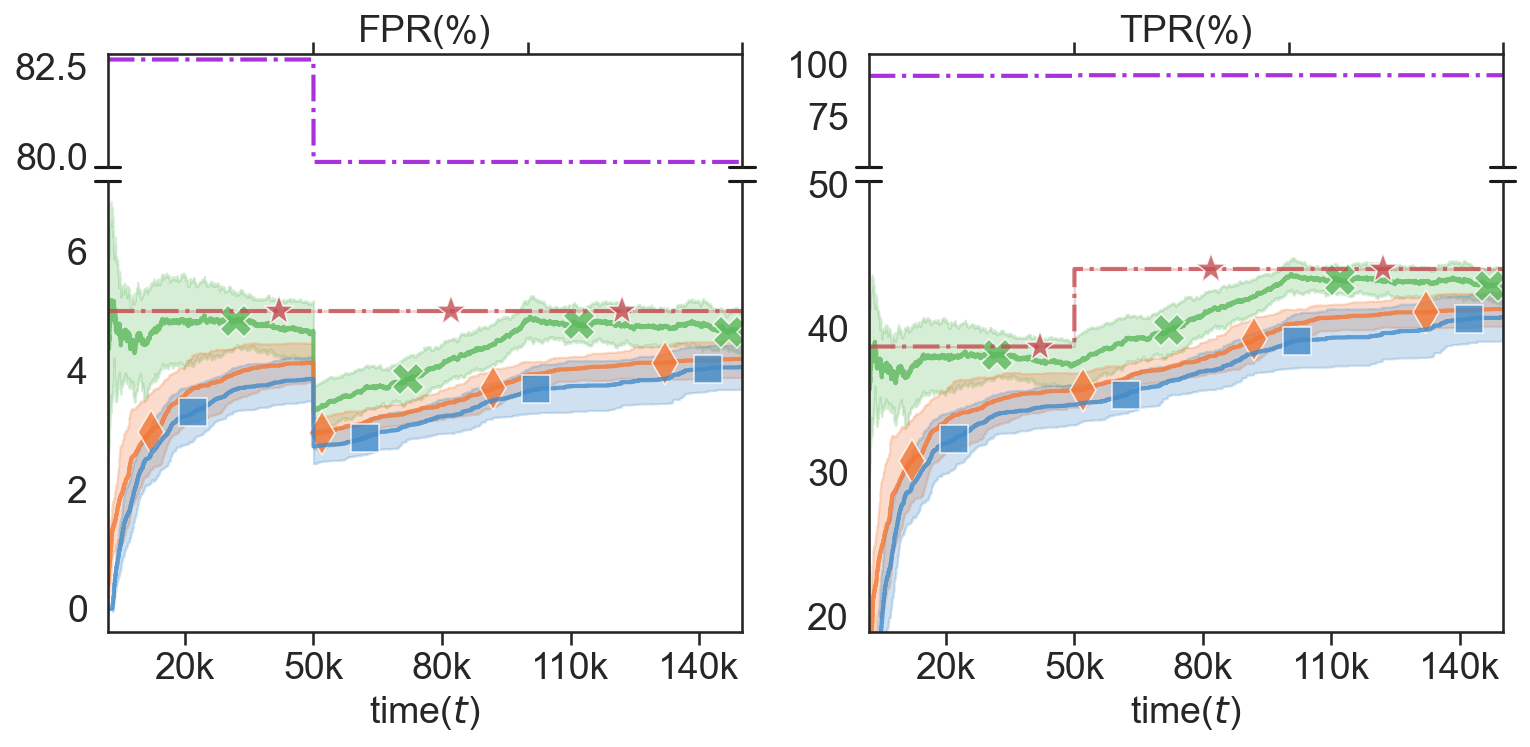}}
  }
  
  \vspace{-5pt}
  \caption{ \small {
   Results with the EBO scores on Cifar-100 as ID dataset. For (b) and (c) the distribution shifts at $t=50k$. The arrow indicates the time at which the mean FPR + std. deviation over 10 runs goes below 5\% for the LIL method.
    }}
  \label{fig:ebo-cifar100}
  \vspace{-10pt}
\end{figure*}

\begin{figure*}[h]
  \centering
  \mbox{
  \hspace{-10pt}
    \subfigure[No distribution shift, no window.   \label{fig:mds_no_shift_no_win_cifar100}]{\includegraphics[scale=0.22]{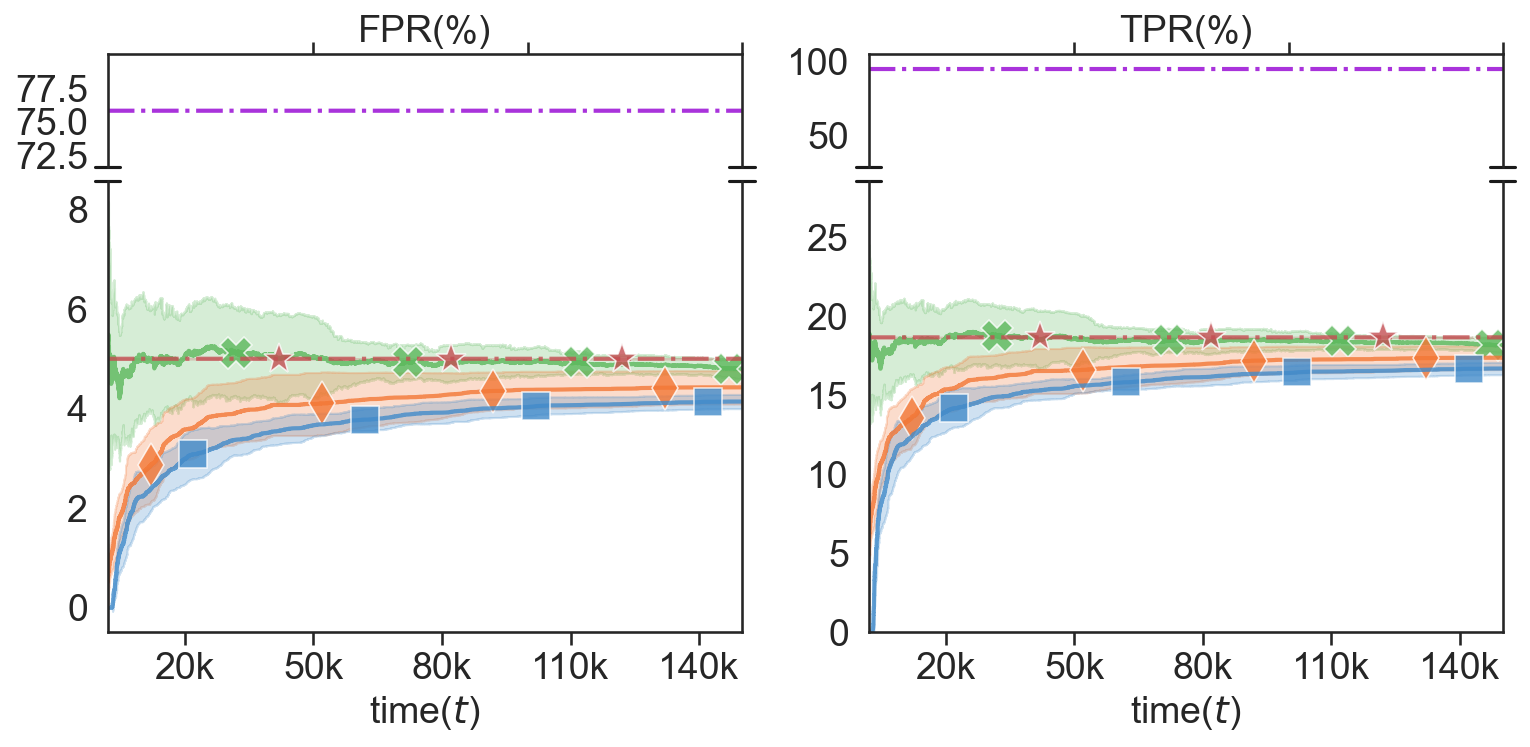}}
    \hspace{1pt}
    
    \subfigure[Distribution shift, 5k window. \label{fig:mds_shift_win_5k_cifar100}]{\includegraphics[scale=0.22]{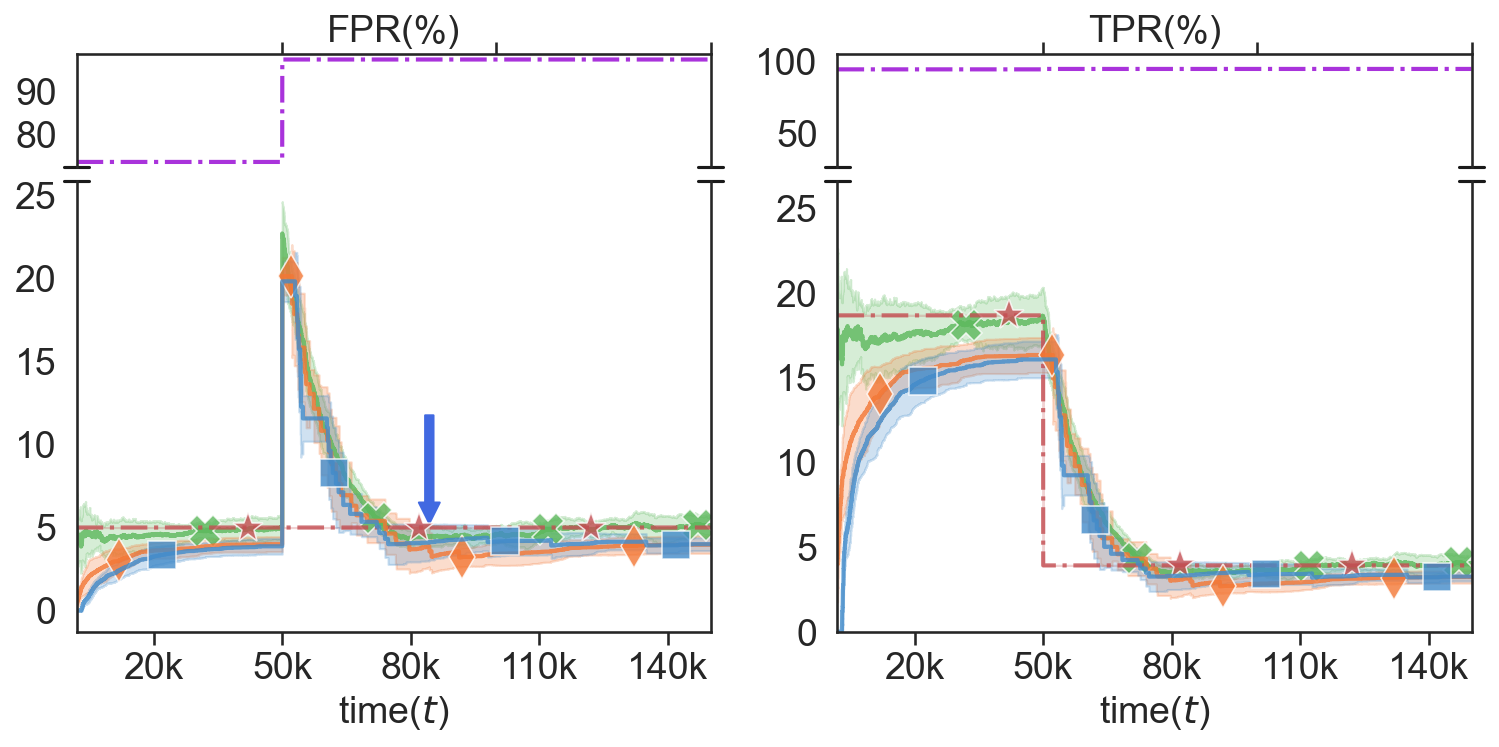}}

    \subfigure[Distribution shift, 10k window. \label{fig:mds_shift_win_10k_cifar100}]{\includegraphics[scale=0.22]{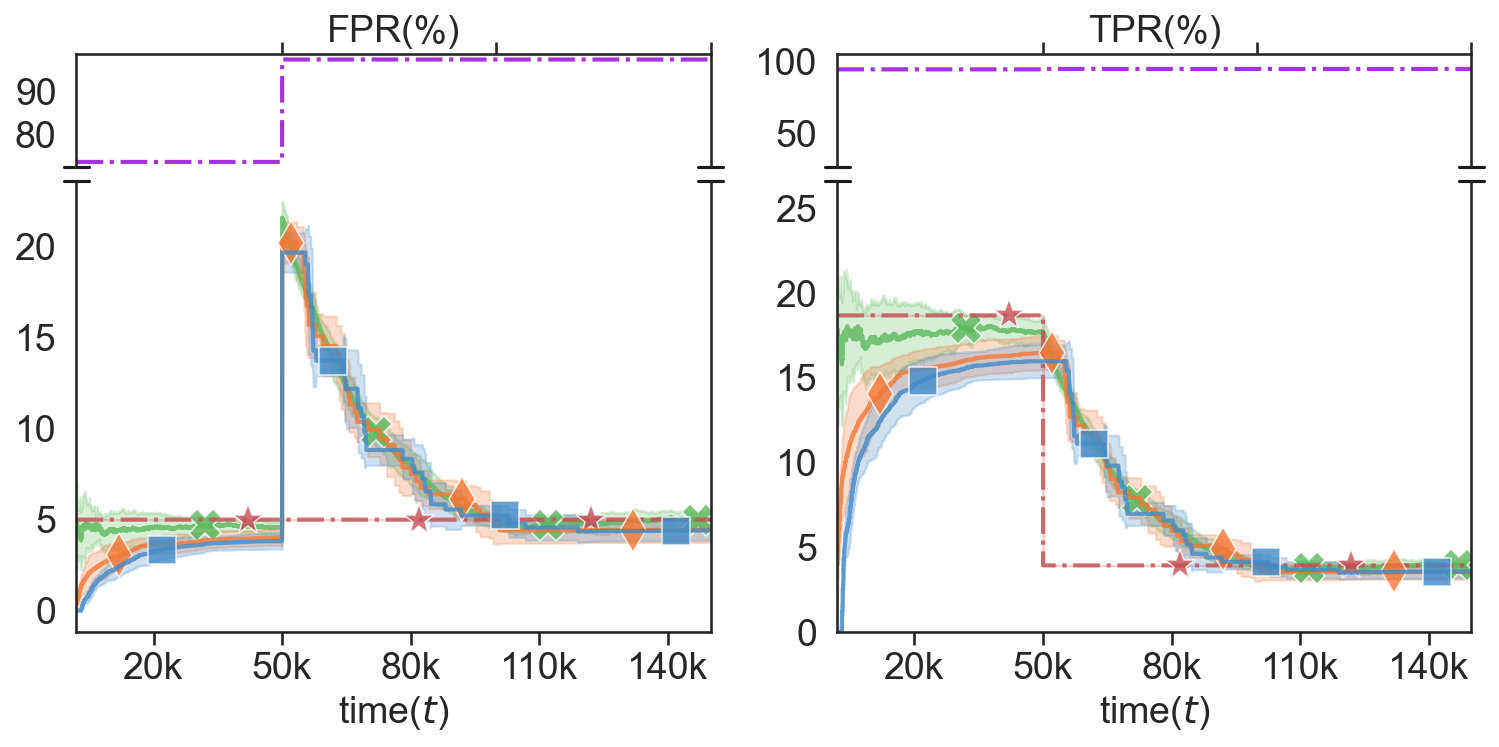}}
  }
  
  \vspace{-5pt}
  \caption{ \small {
   Results with the MDS scores on Cifar-100 as ID dataset. For (b) and (c) the distribution shifts at $t=50k$. The arrow indicates the time at which the mean FPR + std. deviation over 10 runs goes below 5\% for the LIL method.
    }}
  \label{fig:mds-cifar100}
  \vspace{-10pt}
\end{figure*}

\begin{figure*}[h]
  \centering
  \mbox{
  \hspace{-10pt}
    \subfigure[No distribution shift, no window.   \label{fig:vim_no_shift_no_win_cifar100}]{\includegraphics[scale=0.22]{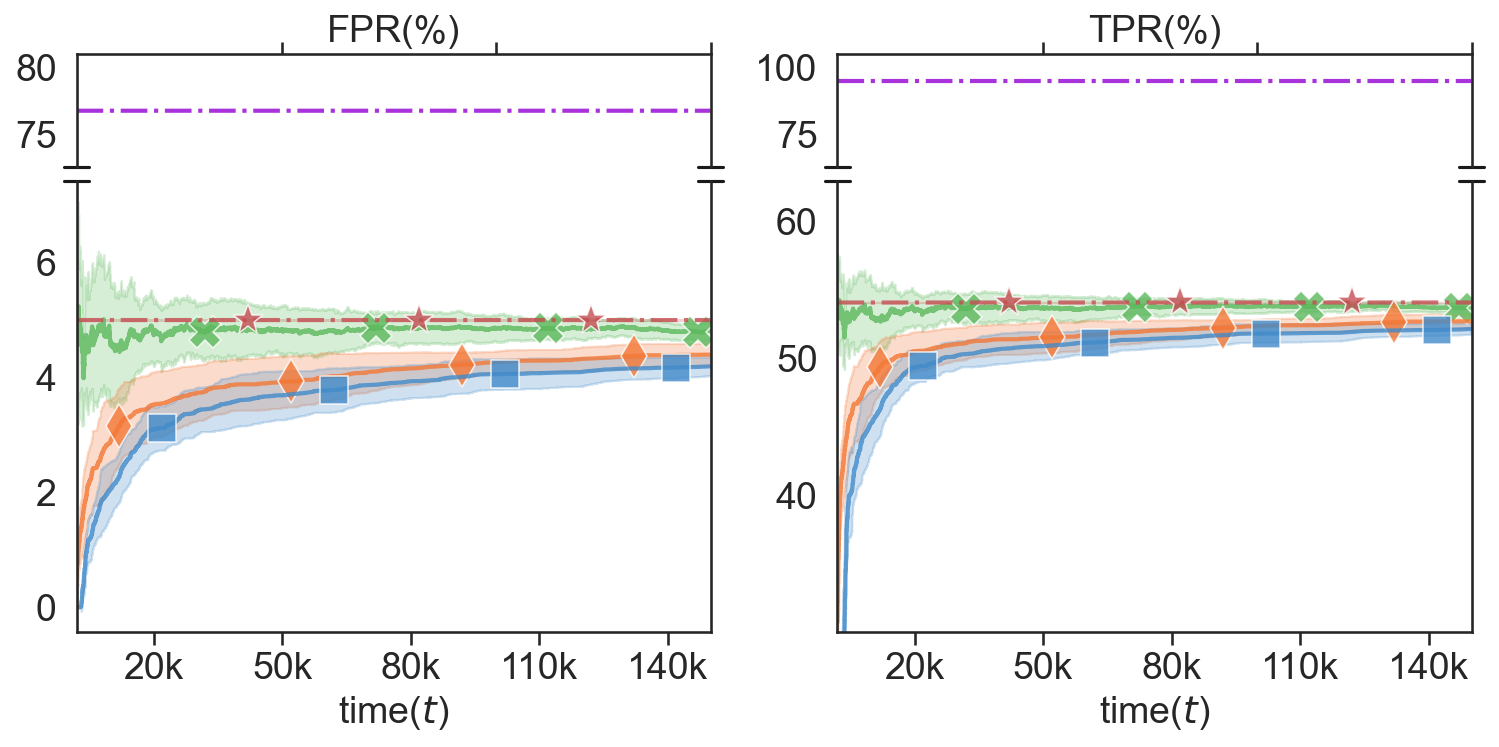}}
    \hspace{1pt}
    
    \subfigure[Distribution shift, 5k window. \label{fig:vim_shift_win_5k_cifar100}]{\includegraphics[scale=0.22]{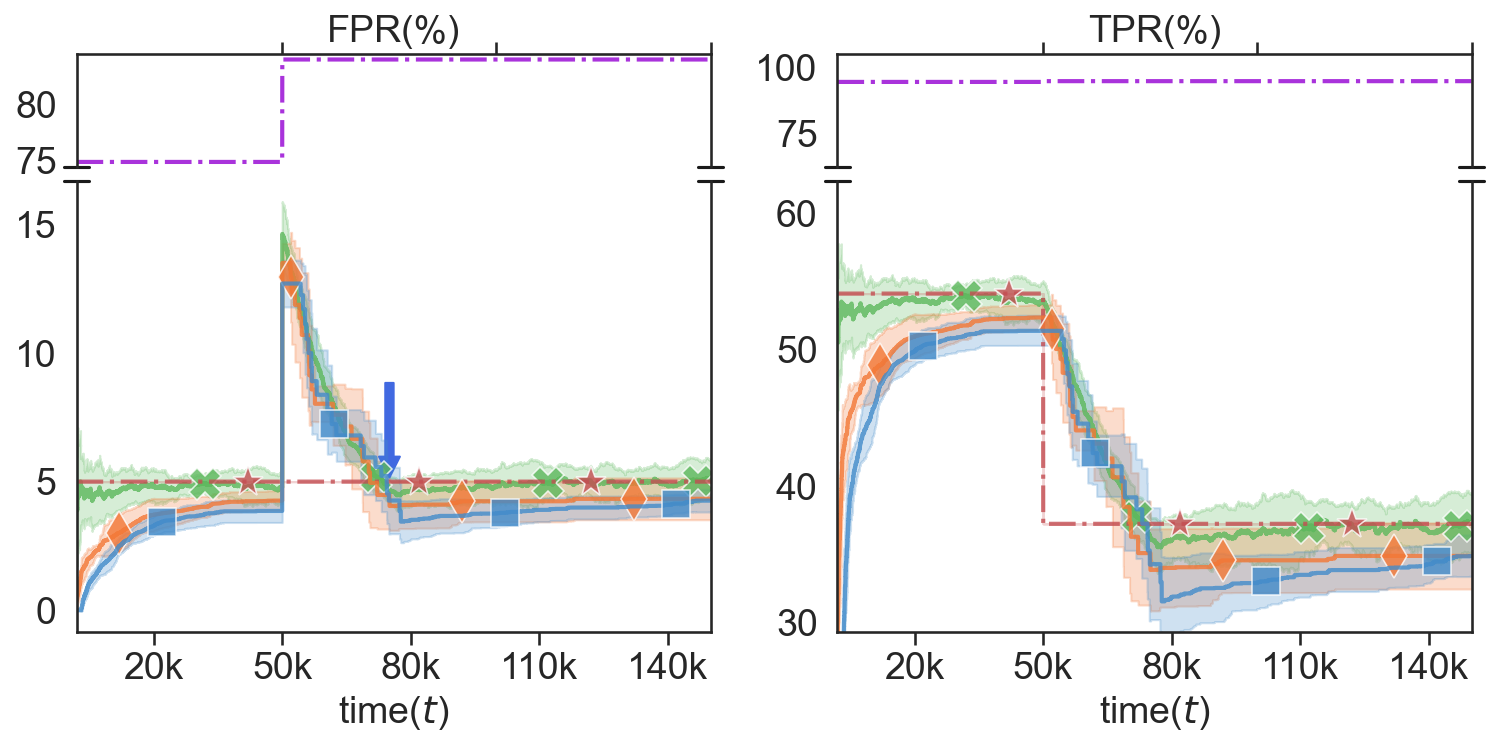}}

    \subfigure[Distribution shift, 10k window. \label{fig:vim_shift_win_10k_cifar100}]{\includegraphics[scale=0.22]{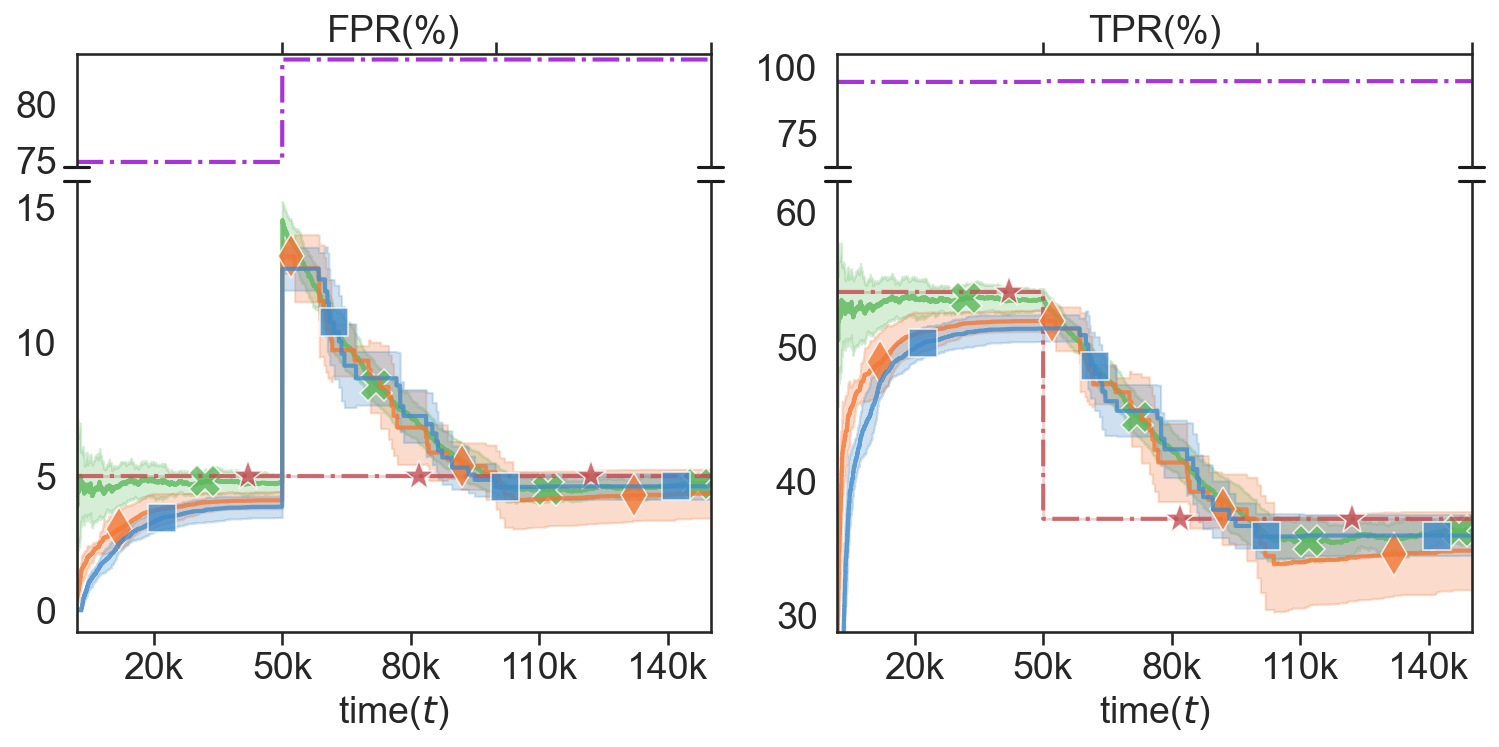}}
  }
  
  \vspace{-5pt}
  \caption{ \small {
   Results with the VIM scores on Cifar-100 as ID dataset. For (b) and (c) the distribution shifts at $t=50k$. The arrow indicates the time at which the mean FPR + std. deviation over 10 runs goes below 5\% for the LIL method.
    }}
  \label{fig:vim-cifar100}
  \vspace{-10pt}
\end{figure*}

\begin{figure*}[h]
  \centering
  \mbox{
  \hspace{-10pt}
    \subfigure[No distribution shift, no window.   \label{fig:ssd_no_shift_no_win_cifar100}]{\includegraphics[scale=0.22]{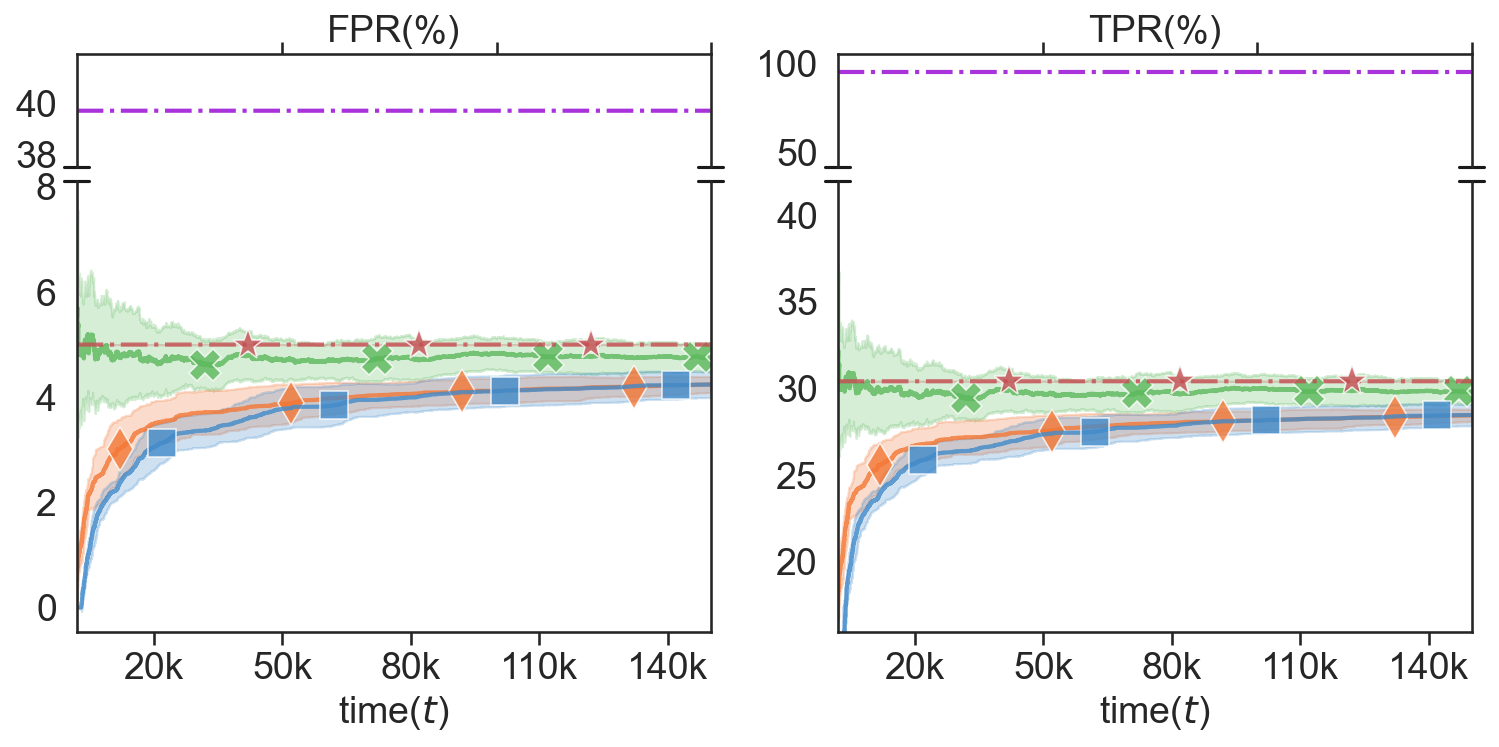}}
    \hspace{1pt}
    
    \subfigure[Distribution shift, 5k window. \label{fig:ssd_shift_win_5k_cifar100}]{\includegraphics[scale=0.22]{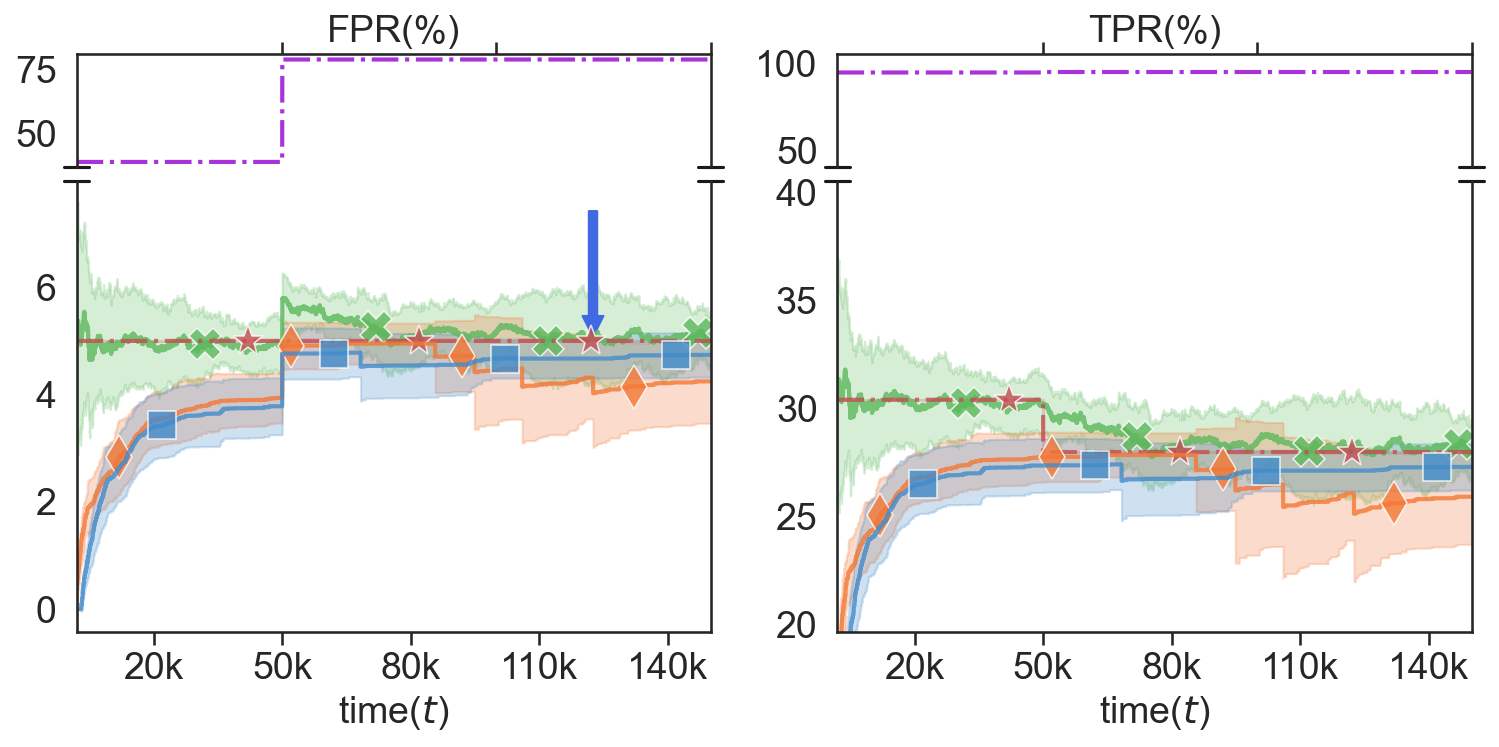}}

    \subfigure[Distribution shift, 10k window. \label{fig:ssd_shift_win_10k_cifar100}]{\includegraphics[scale=0.22]{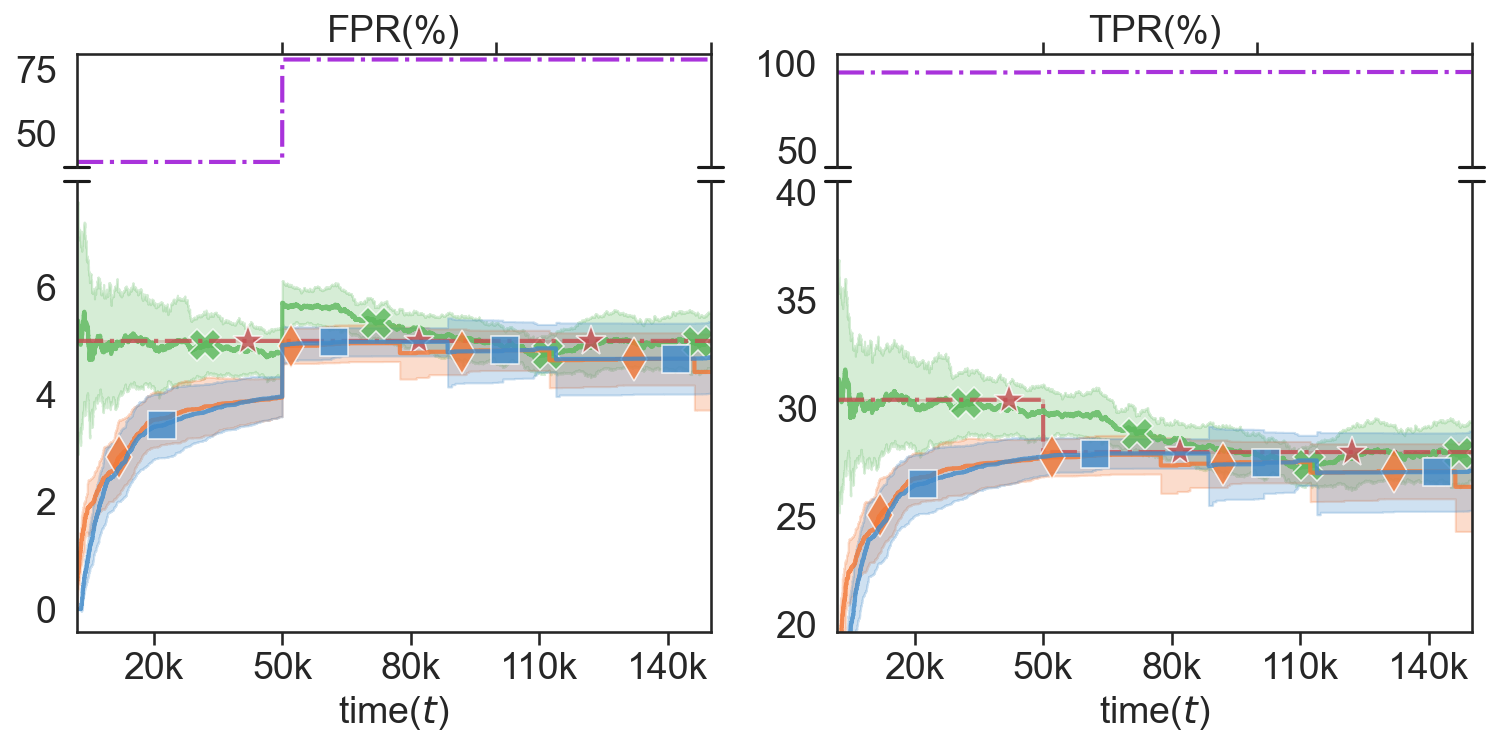}}
  }
  
  \vspace{-5pt}
  \caption{ \small {
   Results with the SSD scores on Cifar-100 as ID dataset. For (b) and (c) the distribution shifts at $t=50k$. The arrow indicates the time at which the mean FPR + std. deviation over 10 runs goes below 5\% for the LIL method.
    }}
  \label{fig:ssd-cifar100}
  \vspace{-10pt}
\end{figure*}

\begin{figure*}[h]
  \centering
\includegraphics[width=0.999\textwidth]{figs-v2/legend_hfdng-3.png}
  \mbox{
  \hspace{-10pt}
    \subfigure[No distribution shift, no window.   \label{fig:odin_no_shift_no_win_cifar100}]{\includegraphics[scale=0.215]{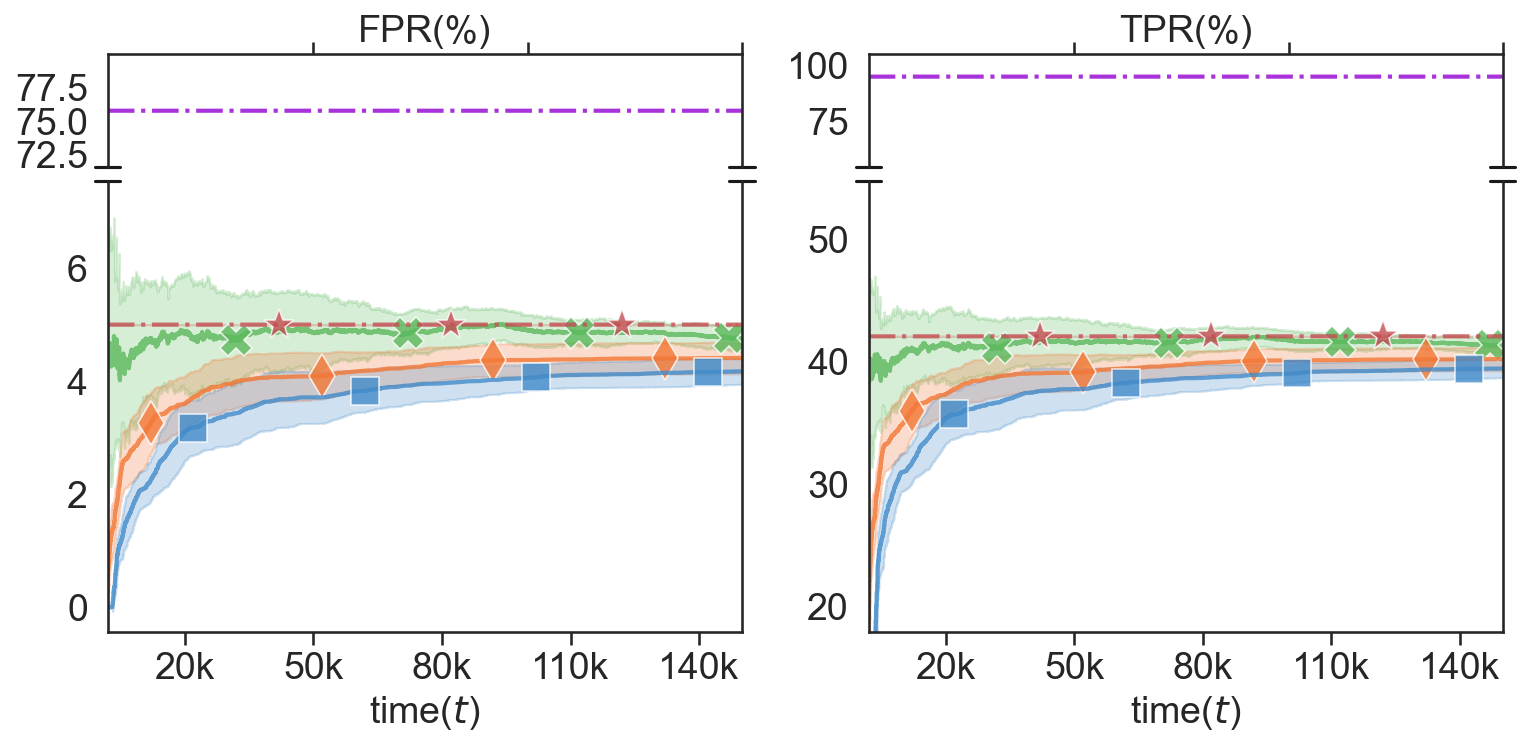}}
    \hspace{1pt}
    
    \subfigure[Distribution shift, 5k window. \label{fig:odin_shift_win_5k_cifar100}]{\includegraphics[scale=0.215]{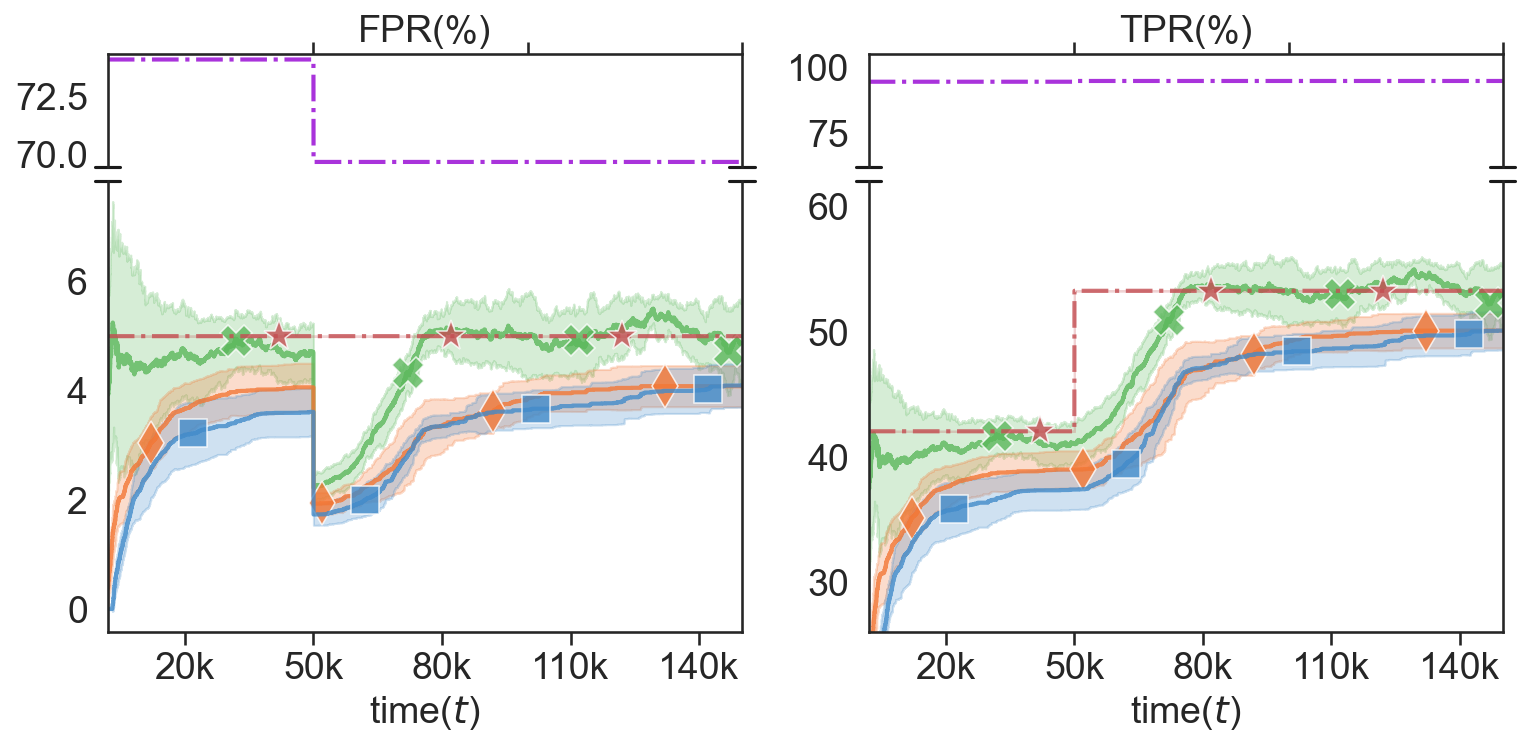}}

    \subfigure[Distribution shift, 10k window. \label{fig:odin_shift_win_10k_cifar100}]{\includegraphics[scale=0.215]{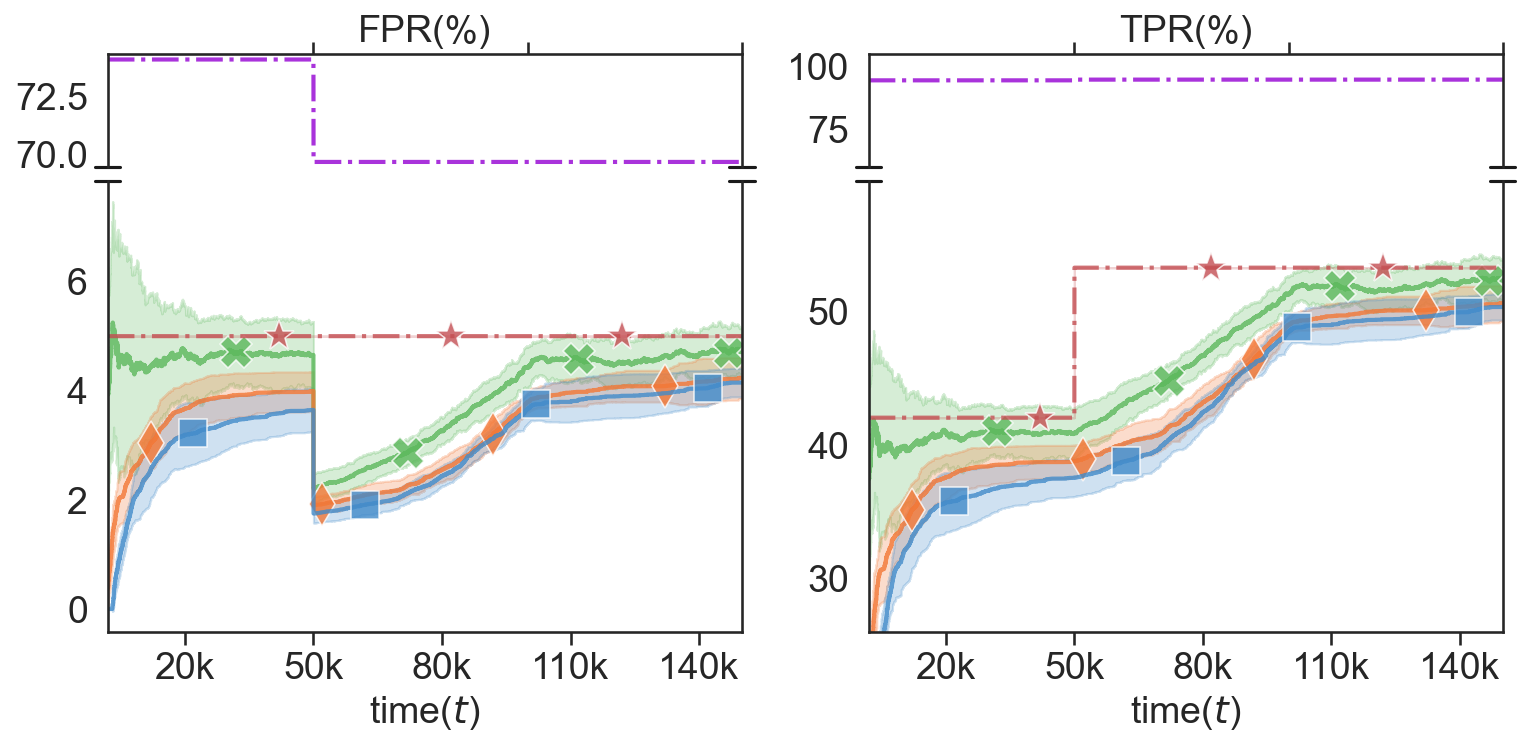}}
  }
  
  \vspace{-5pt}
  \caption{ \small {
   Results with the ODIN scores on Cifar-100 as ID dataset. For (b) and (c) the distribution shifts at $t=50k$. The arrow indicates the time at which the mean FPR + std. deviation over 10 runs goes below 5\% for the LIL method.
    }}
  \label{fig:odin-cifar100}
\end{figure*}

We also provide visualizations showing the distributions of the scores obtained using these scoring functions. Please see Figures \ref{fig:cifar-10-scores-knn} to \ref{fig:cifar-100-scores-odin}.

\newpage 

\begin{figure*}[h]
\centering
\includegraphics[trim= 200 0 200 0,clip ,height=3cm]{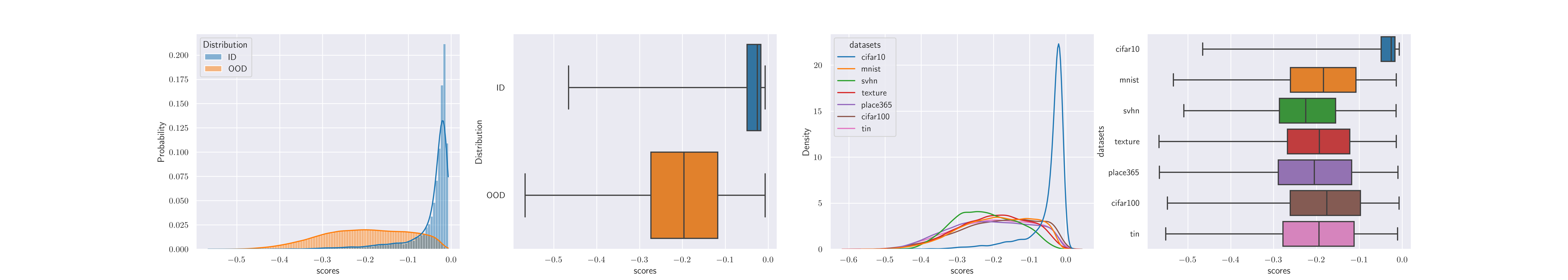}
\vspace{-5pt}
\caption{\small{Scores distribution for KNN with CIFAR-10 as In-Distribution.}}
\vspace{-1pt}
\label{fig:cifar-10-scores-knn}
\end{figure*}

\begin{figure*}[h]
\centering
\includegraphics[trim= 200 0 200 0,clip ,height=3cm]{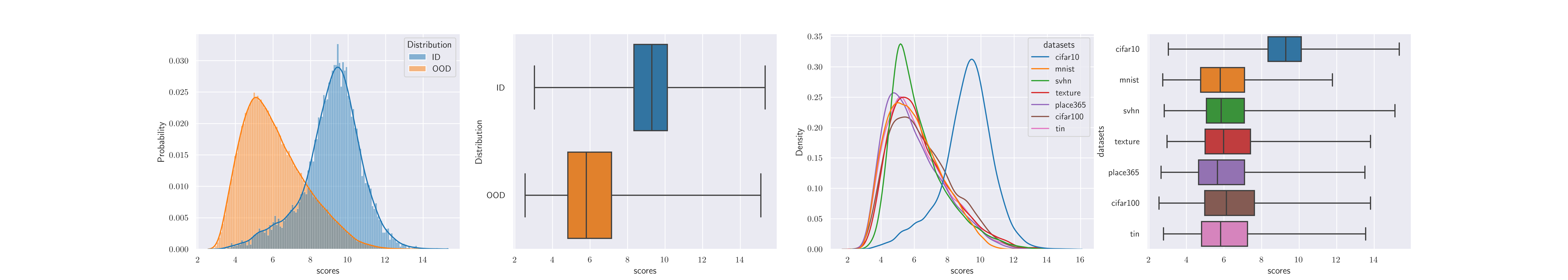}
\vspace{-5pt}
\caption{\small{Scores distribution for EBO with CIFAR-10 as In-Distribution.}}
\vspace{-1pt}
\label{fig:cifar-10-scores-ebo}
\end{figure*}

\begin{figure*}[h]
\centering
\includegraphics[trim= 200 0 200 0,clip ,height=3cm]{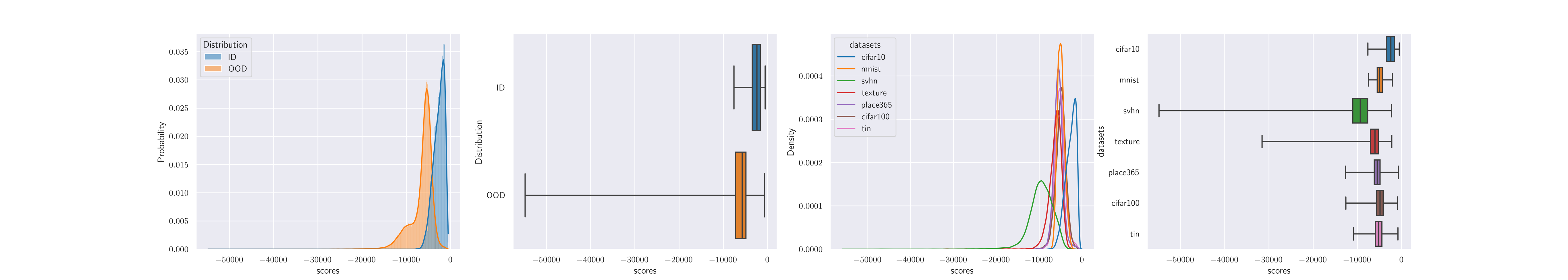}
\vspace{-5pt}
\caption{\small{Scores distribution for SSD with CIFAR-10 as In-Distribution.}}
\vspace{-1pt}
\label{fig:cifar-10-scores-ssd}
\end{figure*}

\begin{figure*}[h]
\centering
\includegraphics[trim= 200 0 200 0,clip ,height=3cm]{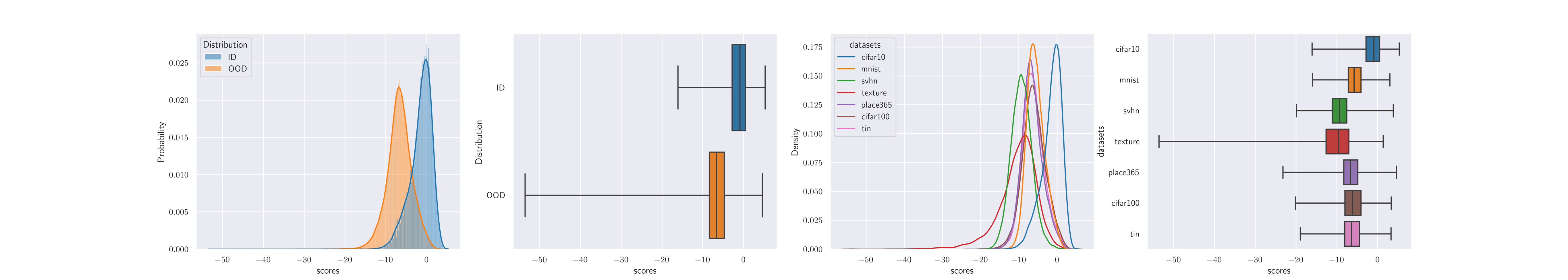}
\vspace{-5pt}
\caption{\small{Scores distribution for VIM with CIFAR-10 as In-Distribution.}}
\vspace{-1pt}
\label{fig:cifar-10-scores-vim}
\end{figure*}

\begin{figure*}[h]
\centering
\includegraphics[trim= 200 0 200 0,clip ,height=3cm]{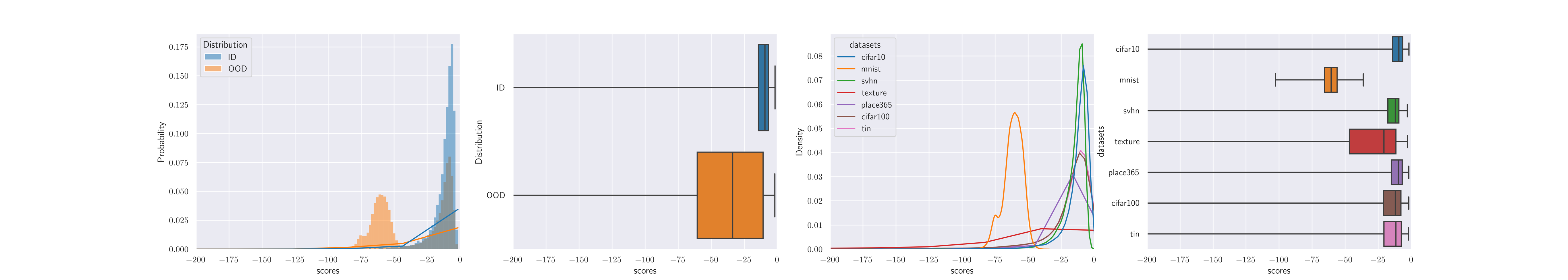}
\vspace{-1pt}
\caption{\small{Scores distribution for MDS with CIFAR-10 as In-Distribution.}}
\vspace{-1pt}
\label{fig:cifar-10-scores-mds}
\end{figure*}

\begin{figure*}[h]
\centering
\includegraphics[trim= 200 0 200 0,clip ,height=3cm]{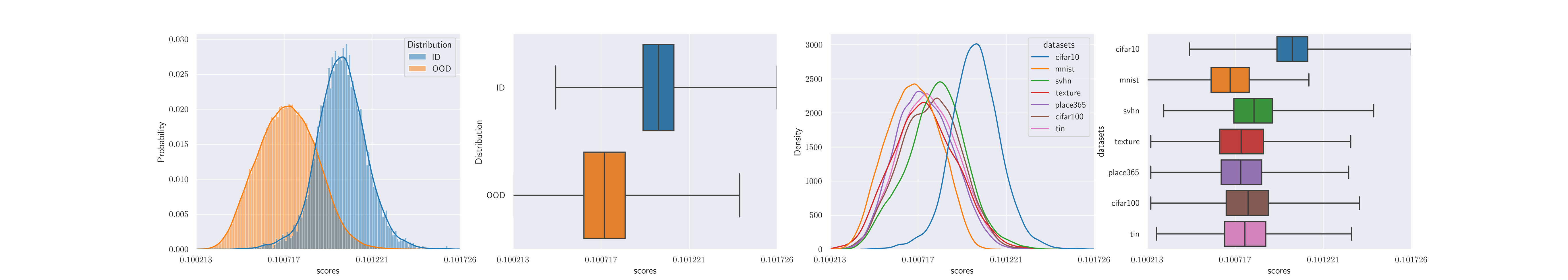}
\vspace{-5pt}
\caption{\small{Scores distribution for ODIN with CIFAR-10 as In-Distribution.}}
\vspace{-1pt}
\label{fig:cifar-10-scores-odin}
\end{figure*}

\begin{figure*}[h]
\centering
\includegraphics[trim= 200 0 200 0,clip ,height=3cm]{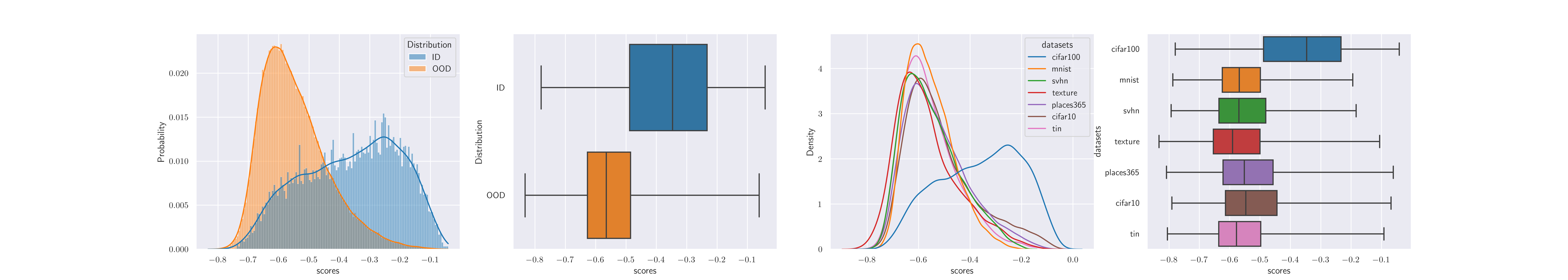}
\vspace{-5pt}
\caption{\small{Scores distribution for KNN with cifar-100 as In-Distribution.}}
\vspace{-1pt}
\label{fig:cifar-100-scores-knn}
\end{figure*}

\begin{figure*}[h]
\centering
\includegraphics[trim= 200 0 200 0,clip ,height=3cm]{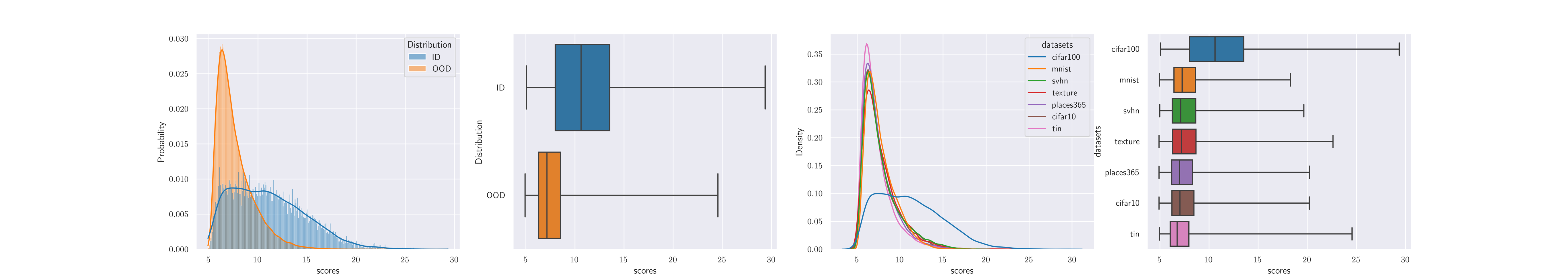}
\vspace{-5pt}
\caption{\small{Scores distribution for EBO with cifar-100 as In-Distribution.}}
\vspace{-1pt}
\label{fig:cifar-100-scores-ebo}
\end{figure*}

\begin{figure*}[h]
\centering
\includegraphics[trim= 200 0 200 0,clip ,height=3cm]{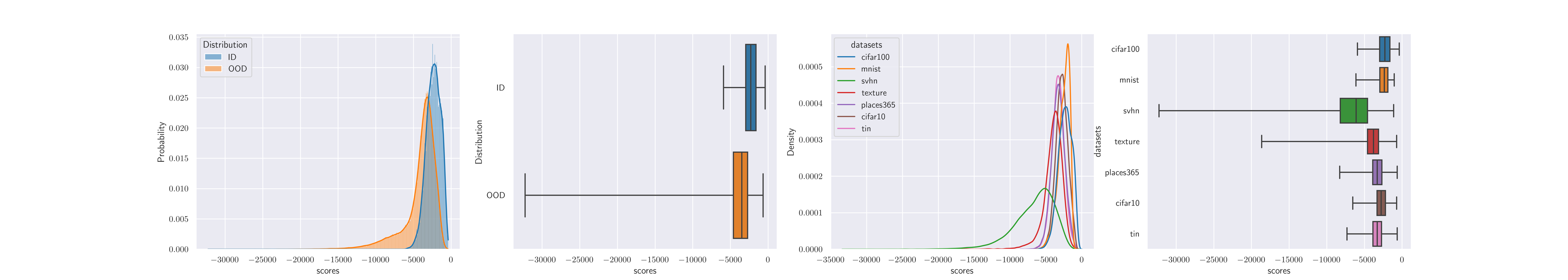}
\vspace{-5pt}
\caption{\small{Scores distribution for SSD with cifar-100 as In-Distribution.}}
\vspace{-1pt}
\label{fig:cifar-100-scores-ssd}
\end{figure*}

\begin{figure*}[h]
\centering
\includegraphics[trim= 200 0 200 0,clip ,height=3cm]{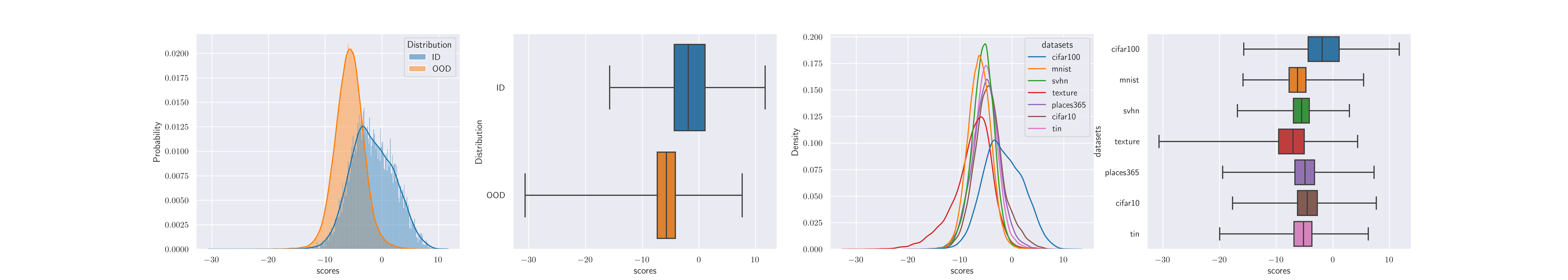}
\vspace{-5pt}
\caption{\small{Scores distribution for VIM with cifar-100 as In-Distribution.}}
\vspace{-1pt}
\label{fig:cifar-100-scores-vim}
\end{figure*}

\begin{figure*}[h]
\centering
\includegraphics[trim= 200 0 200 0,clip ,height=3cm]{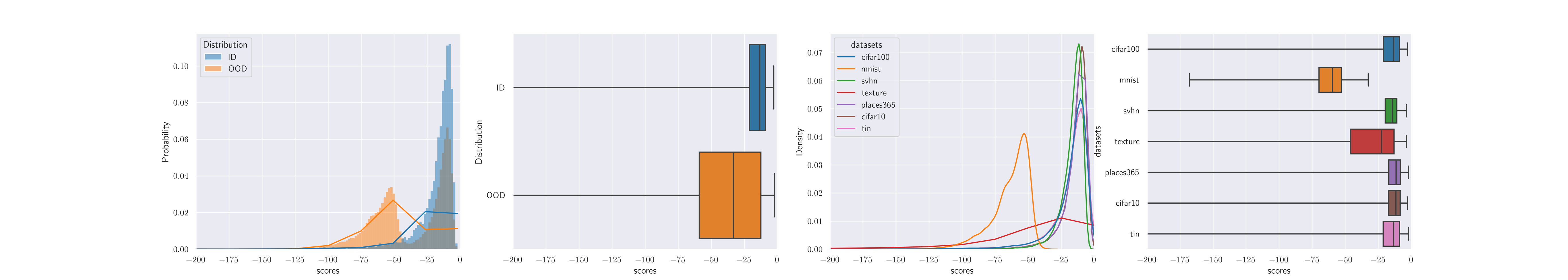}
\vspace{-1pt}
\caption{\small{Scores distribution for MDS with cifar-100 as In-Distribution.}}
\vspace{-1pt}
\label{fig:cifar-100-scores-mds}
\end{figure*}

\begin{figure*}[h]
\centering
\includegraphics[trim= 200 0 200 0,clip ,height=3cm]{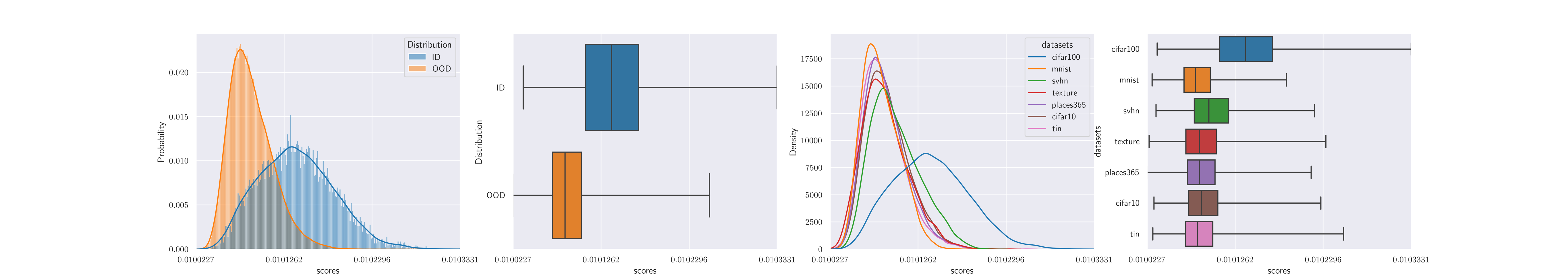}
\vspace{-5pt}
\caption{\small{Scores distribution for ODIN with cifar-100 as In-Distribution.}}
\vspace{-1pt}
\label{fig:cifar-100-scores-odin}
\end{figure*}

\end{document}